\makeatletter
\disable@package@load{breakurl}{}
\makeatother

\documentclass[sn-mathphys-num]{sn-jnl}

\usepackage{graphicx}%
\usepackage{multirow}%
\usepackage{amsmath,amssymb,amsfonts}%
\usepackage{amsthm}%
\usepackage{rsfso}
\usepackage[title]{appendix}%
\usepackage{xcolor}%
\usepackage{textcomp}%
\usepackage{manyfoot}%
\usepackage{booktabs}%
\usepackage{soul}%
\usepackage{natbib}

\usepackage{listings}%
\usepackage{subcaption}
\usepackage{makecell}
\usepackage[section]{placeins}
\usepackage{needspace}
\usepackage{graphicx,subcaption,ragged2e}
\usepackage{geometry}

\usepackage[ruled,longend]{algorithm2e}
\usepackage{textgreek}
\usepackage{amssymb}
\usepackage{lmodern}
\usepackage{cancel}
\usepackage[switch]{lineno}

\renewcommand\appendix{\par
    \setcounter{section}{0}
    \setcounter{subsection}{0}
    \gdef\thesection{\Alph{section}}}

\makeatletter
\def\BState{\State\hskip-\ALG@thistlm}
\makeatother

\newgeometry{vmargin={15mm}, hmargin={17mm,17mm}}   

\usepackage{changes}

\newtheorem{theorem}{Theorem}
\newtheorem{lemma}{Lemma}
\author[1]{\sur{Ji} \fnm{Chen}}

\author[6]{\sur{Song} \fnm{Chen}$^*$}

\author[1]{\sur{Chengzhang} \fnm{Gong}}
\author[3,1]{\sur{Li} \fnm{Fan}$^*$}

\author[1,2,3,4,5]{\sur{Chao} \fnm{Xu} $^*$}

\affil[1]{\orgdiv{College of Control Science and Engineering}, \orgname{Zhejiang University}, \orgaddress{\city{Hangzhou}, \postcode{310000}, \state{Zhejiang}, \country{China}}}

\affil[2]{\orgdiv{Institute of Cyber-Systems and Control}, \orgname{Zhejiang University}, \orgaddress{\city{Hangzhou}, \postcode{310000}, \state{Zhejiang}, \country{China}}}

\affil[3]{\orgdiv{Huzhou Institute of Zhejiang University}, \orgaddress{\city{Huzhou}, \postcode{313000}, \state{Zhejiang}, \country{China}}}

\affil[4]{\orgdiv{State Key Laboratory of Industrial Control Technology}, \orgname{Zhejiang University}, \orgaddress{\city{Hangzhou}, \postcode{310000}, \state{Zhejiang}}, \country{China}}

\affil[5]{\orgdiv{Zhejiang Provincial Engineering Research Center for Intelligent Mobile Unmanned Systems Technology and Huzhou Key Lab for Autonomous Systems}, \orgname{Huzhou Institute of Zhejiang University}, \orgaddress{\city{Huzhou}, \postcode{313000}, \state{Zhejiang}, \country{China}}}

\affil[6]{\orgdiv{Department of Mathematics}, \orgname{National University of Singapore}, \orgaddress{\city{10 Lower Kent Ridge Road}, \postcode{119076}, \country{Singapore}}}

\begingroup

\footnotetext[1]{Corresponding authors: Chao Xu (cxu@zju.edu.cn), Li Fan (fanli77@zju.edu.cn), Song Chen (song.chen@nus.edu.sg)}
\endgroup

\begin{document}

\title[Article Title]{Swarm-Inspired Generation of Collective Behaviors in Graph Dynamical Systems}


\abstract{
    Collective behavior arises when locally interacting units produce coordinated global organization, from synchronization in dynamical systems to task-relevant information flow on graphs. The central challenge is not only to explain how collective behavior emerges, but to design local interaction rules that can produce desired global organization and generalize across graphs, dynamics and tasks.  
    To address this challenge, we introduce the Swarm-Inspired Emergent Synchronizer (SIES), a graph-dynamical framework that learns generalizable local-interaction laws for controllable collective organization. Each node is an agent-like dynamical unit with a state and task cue, and signed source-target-conditioned attention acts as an adaptive coupling term inside an explicit evolution model. Therefore, SIES combines an explicit dynamical engine with local agent intelligence, similar to biological swarms. For synchronization control, SIES learns a generalizable coupling operator that produces prescribed synchronization patterns for CDSs across untrained network scales, target phase relations, and intrinsic node dynamics without retraining. The learned operator also reaches gait-related modes faster than three oscillator baselines and generalizes synchronization-driven locomotion to simulated multi-legged robots of different scales and a physical hexapod after leg disablement. For graph representation learning, SIES applies the same signed interaction principle to message passing and achieves the highest performance among the compared methods on heterophilous node-classification benchmarks. Together, these results position SIES as a generalizable and learnable graph-dynamical interaction framework with promise for synchronization control, adaptive robot coordination, and heterophilous graph representation learning.
}




\maketitle

\section{Introduction}\label{sec1}    
\begin{figure}[t!]  
	\centering  
	\includegraphics[width=\textwidth]{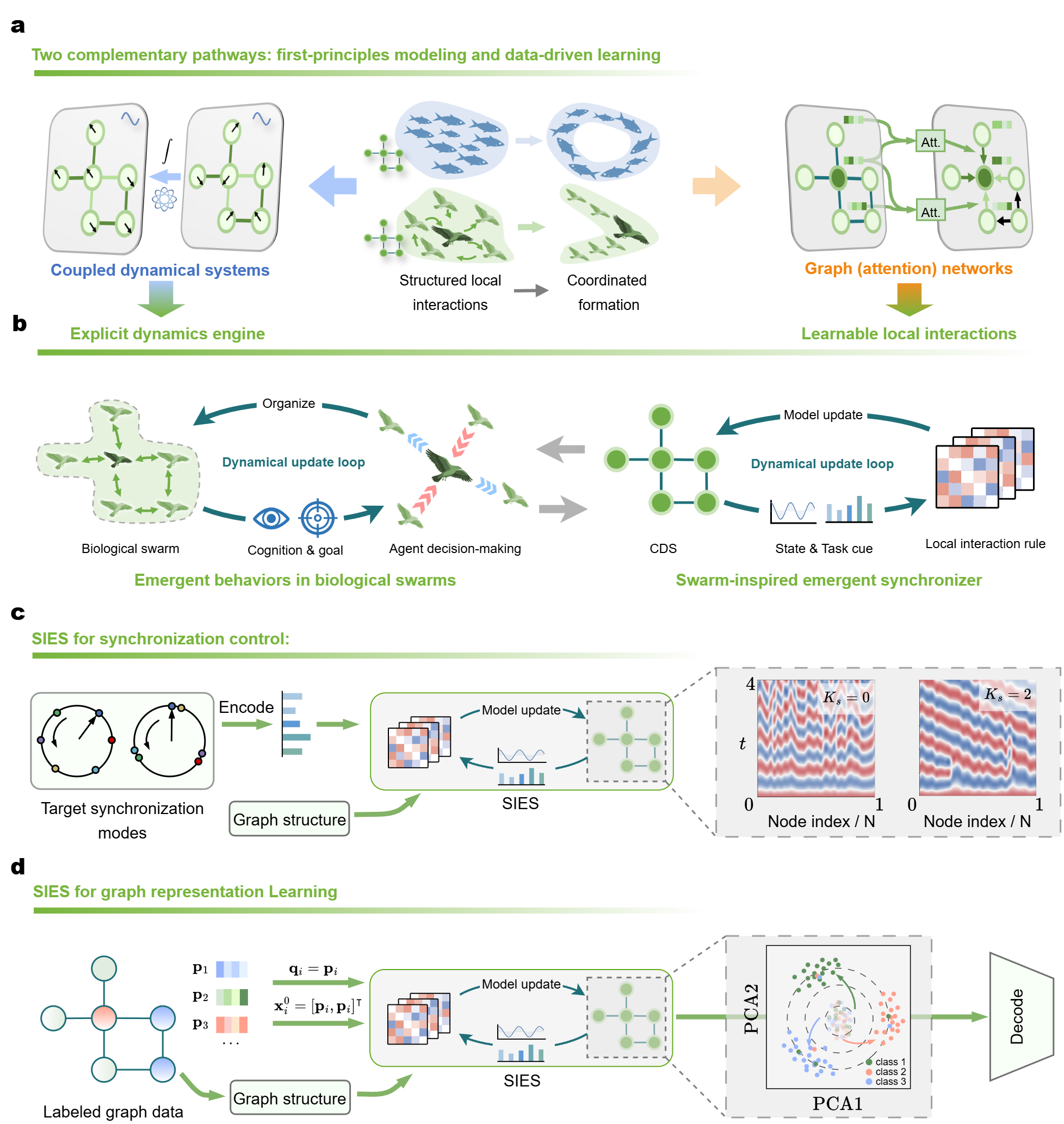}  
    \caption{\textbf{Overview of collective-behavior modeling and SIES.} \textbf{(a)} Coupled dynamical systems and graph neural networks provide two complementary routes for modeling collective behavior on graph-structured systems. \textbf{(b)} SIES combines an explicit dynamics engine with learnable local interaction rules, thereby translating the dynamic update loop of biological swarms into a computational framework. \textbf{(c)} SIES for synchronization control: target synchronization modes and graph structure are encoded as inputs to SIES, whose learned coupling rule drives the CDS toward prescribed collective rhythms. \textbf{(d)} SIES for graph representation learning: labeled graph data provide node features and graph structure, SIES evolves node representations through dynamical interactions, and the resulting representation trajectory is decoded for downstream prediction.}
	\label{fig:intro}  
\end{figure}  

Collective behavior emerges when many simple units interact locally and organize into coherent group-level patterns~\cite{bonabeau1999swarm,Vicsek1995NovelTO,Strogatz1993CoupledOA}. In nature, birds align with nearby flockmates to form directional flight formations, fish adjust to neighboring motion to create schooling or circling patterns, and fireflies synchronize their flashes through local signals (Fig.~\ref{fig:intro}a)~\cite{rahmani2020flocking,heras2019deep,xiao2024perception,Buck1966BiologyOS}. Similar principles appear in excitable tissues, power grids and relational data, where local interactions can produce traveling waves, synchronized operation or task-relevant information flow~\cite{Muller2018CorticalTW,Davidenko1992StationaryAD,Filatrella2007AnalysisOA,Drfler2012SynchronizationIC,velivckovic2017graph,Platonov2023ACL}. These examples raise a central question: how do graph-local interaction rules give rise to global collective behaviors, and how can such rules be designed for controllable tasks?

To answer this question, researchers model and exploit these collective behaviors under graph-structured local interaction rules via the frameworks of coupled dynamical systems (CDSs) and graph neural networks (GNNs), which provide two complementary routes to this question (Fig.~\ref{fig:intro}a). CDSs provide a mathematically grounded, first-principles route to this problem by describing how local coupling among dynamical nodes generates, stabilizes and controls collective states. This route captures collective phenomena such as coordinated animal motion~\cite{Vicsek1995NovelTO}, synchronous flashing in fireflies~\cite{Buck1966BiologyOS}, traveling neural activity~\cite{Muller2018CorticalTW} and waves in excitable tissue~\cite{Davidenko1992StationaryAD}, and underpins engineered systems ranging from power-grid synchronization~\cite{Filatrella2007AnalysisOA,Drfler2009SynchronizationAT,Drfler2012SynchronizationIC,Sajadi2021SynchronizationIE} to central pattern generators (CPGs) for robotic locomotion~\cite{Grillner1975LocomotionIV,ijspeert2014biorobotics,ijspeert2008central,radosavovic2024real,chen2021rhythm}. GNNs, by contrast, provide a data-driven route for computational collective information processing: node representations are repeatedly updated through local message passing so that relational structure can produce task-relevant graph representations, including for node classification~\cite{velivckovic2017graph,Platonov2023ACL,Luo2024ClassicGA}. 
Both frameworks rely on the same fundamental substrate: structured local interactions that determine how neighboring units influence one another (Fig.~\ref{fig:intro}a). Moreover, a sharper understanding of their respective strengths and limitations comes from comparing them to the broader framework of swarm intelligence~\cite{bonabeau1999swarm,Vicsek1995NovelTO}. As shown in Fig.~\ref{fig:intro}b, in a biological swarm, collective organization arises from systems that simultaneously maintain an explicit dynamics engine (state evolution governed by interaction laws) and agent intelligence (local, adaptive decision-making that optimizes toward task or collective objectives). Swarm units continuously evaluate neighbors and adjust their influence in a goal-directed, decentralized manner.

From this vantage point, classical CDS constructions primarily supply the explicit dynamics engine and deliver mechanistic clarity through stability analysis and interpretable coupling functions. Nevertheless, because their rules are typically prescribed or symmetry-derived for specific patterns and scales~\cite{collins1993coupled,Buono2001ModelsOC,stewart2003symmetry,golubitsky2005patterns}, they are highly sensitive to initial conditions \cite{lucas2019synchronisation,wilkins2009sensitivity,chen2025free}, lack mechanisms for self-improvement or data-driven optimization of dynamical characteristics, and are rarely optimized end-to-end for reachability and convergence. In contrast, these capabilities are routinely achieved by swarm agents through local adaptation. GNNs, conversely, excel at the agent intelligence aspect by learning flexible local aggregation rules from data. They typically lack an explicit dynamics engine, so message passing simply mixes features in a statistical manner and carries no intrinsic dynamical semantics. Consequently, GNNs struggle to capture heterophilous repulsion, which drives neighboring nodes to actively oppose one another. Repulsion plays a fundamental role in swarm systems by maintaining diversity and enabling complex collective patterns. Standard GNNs thus develop a strong homophily bias and over‑smoothing tendency~\cite{Chamberlain2021GRANDGN,Rusch2022GraphCoupledON,Nguyen2023FromCO}, and they fail to support rich, evolutionarily meaningful collective modes.

We therefore hypothesize that an effective framework must simultaneously possess both a complete explicit dynamics engine and agent intelligence. In such a framework, local interaction rules are learned and optimized as agent intelligence and executed as driving terms inside an explicit state-evolution law provided by the dynamics engine. Such a framework would support a richer variety of collective modes through flexible, signed and task-conditioned couplings while retaining strong dynamical properties such as stability, convergence behavior and cross-scale generalization. Such an integration could bridge first-principles dynamical modeling and data-driven learning
~\cite{Carleo2019MLPhysical,Karniadakis2021PhysicsInformed,Wang2023AIDiscovery,chen2024accelerated}. This framework advances both frontiers simultaneously: it endows first-principles CDS with data-informed adaptability and scalability, while cross-fertilizing GNN-driven graph intelligence with explicit physical semantics to naturally accommodate multi-modal repulsion and heterophily.

Motivated by this hypothesis, we introduce the Swarm-Inspired Emergent Synchronizer (SIES), shown in Fig.~\ref{fig:intro}b. SIES treats each dynamical node in a CDS as an agent-like unit that possesses both its intrinsic evolving state and a local task-conditioning cue (its ``swarm objective''). A learned graph interaction rule applies signed attention to determine how neighboring nodes positively or negatively influence each other, depending on the state and task cue. These influences are injected directly into the CDS as the coupling term that drives the node's explicit dynamical evolution. In this unified architecture, the local interaction rule is simultaneously data-informed and optimizable (agent intelligence) and executed through genuine state integration (explicit dynamics engine), allowing collective organization to emerge under task guidance while respecting underlying dynamical laws. More mathematical details are provided in Methods.  

We instantiate SIES in two complementary settings that differ in the origin of the swarm objective and the nature of the training signal, yet both rely on signed, graph-structured, task-conditioned coupling to generate collective dynamics. In the synchronization-control setting (Fig.~\ref{fig:intro}c), the objective encodes target collective patterns and the interaction rule is optimized via reinforcement learning on dynamical rollouts. In the graph representation-learning setting (Fig.~\ref{fig:intro}d), node features supply the objective and the coupled node representations are trained end-to-end for downstream tasks such as node classification. To determine whether SIES successfully reproduces key emergent properties of natural swarms while simultaneously enhancing the capabilities of both coupled dynamical systems and graph neural networks, we design a sequence of investigations anchored in these instantiations.

To evaluate whether SIES captures key emergent properties of natural swarms, we employ its synchronization control instantiation as a testbed and systematically examine generalization, convergence, and sparse interaction within CDSs. In the generalization experiments, a trained SIES model generalizes to unseen system scales, unseen phase targets, and unseen intrinsic node dynamics without retraining. In the convergence experiments, SIES is compared against three established oscillator baselines: a fully connected model~\cite{righetti2006design,Righetti2008PatternGW}, a nearest-neighbor Salamander model~\cite{ijspeert2007swimming}, and a diffusively coupled model~\cite{Yu2016GaitGW}. Across sampled initial conditions, SIES reaches the prescribed synchronization modes more rapidly and exhibits a linear-like dependence on the initial phase distance. Furthermore, in sparse-interaction experiments, SIES demonstrates robust pattern formation capability in sparse and irregular CDS networks. Collectively, these synchronization-control results demonstrate that SIES aligns with natural swarms, in which collective behaviors self-organize and emerge from sparse interactions irrespective of group size or individual dynamics. In SIES, these properties arise from signed, task-conditioned local couplings executed within a CDS. This swarm-aligned perspective motivates the use of SIES for embodied rhythmic control. We therefore apply SIES to locomotion in simulated centipede-like robots and on a physical hexapod platform. In these systems, the same learned SIES maps the current leg-foot topology to coordinated rhythmic signals and remains effective when the embodiment graph changes. The centipede simulations assess transfer across body lengths, while the hexapod experiment evaluates zero-shot adaptation after sequential leg disablement. This damage-tolerant, topology-aware behavior reflects the swarm principle that collective function can persist despite loss or reconfiguration of individual agents.

Finally, we evaluate SIES as a graph representation-learning mechanism by benchmarking it against five strong baselines on heterophilous node-classification datasets: enhanced GCN, GAT and GraphSAGE variants~\cite{Kipf2016SemiSupervisedCW,velivckovic2017graph,Hamilton2017InductiveRL,Luo2024ClassicGA}, together with the CDS-inspired GraphCON-GAT~\cite{Rusch2022GraphCoupledON} and KuramotoGNN~\cite{Nguyen2023FromCO}. SIES attains the best results among all compared methods on four of the six benchmarks. The GraphCON-GAT comparison is especially diagnostic because it retains the same overdamped harmonic dynamics engine as SIES but uses conventional softmax attention. The non-negative attention coefficients can express attractive influence but cannot naturally encode repulsive interactions with the same dynamical semantics. By contrast, SIES uses learned signed couplings, allowing both homophilous attraction and heterophilous repulsion within the state-evolution law. Together, these results illustrate how scale-compatible coordination through local, signed interactions can be operationalized within a unified dynamical framework for synchronization control, adaptive robotic locomotion and heterophilous graph representation learning.

By unifying the two fundamental aspects of swarm intelligence, SIES enables graph-structured systems to generate richer collective behaviors while retaining strong dynamical properties. This integration directly addresses the complementary limitations of classical CDS and GNN frameworks identified earlier: CDSs gain learnable and optimizable couplings that generalize across system scales, while GNNs acquire dynamical semantics that naturally support both attraction and repulsion. More broadly, SIES offers a principled route toward understanding and engineering collective phenomena on graphs, in which local interaction rules can be designed to produce coherent, adaptable, and task-relevant global organization.

\section{Results}\label{sec2}

\subsection{SIES Reveals Generalizable Collective Dynamics} \label{sec:generalization}

\begin{figure}[t!]
\centering
\includegraphics[width=\textwidth]{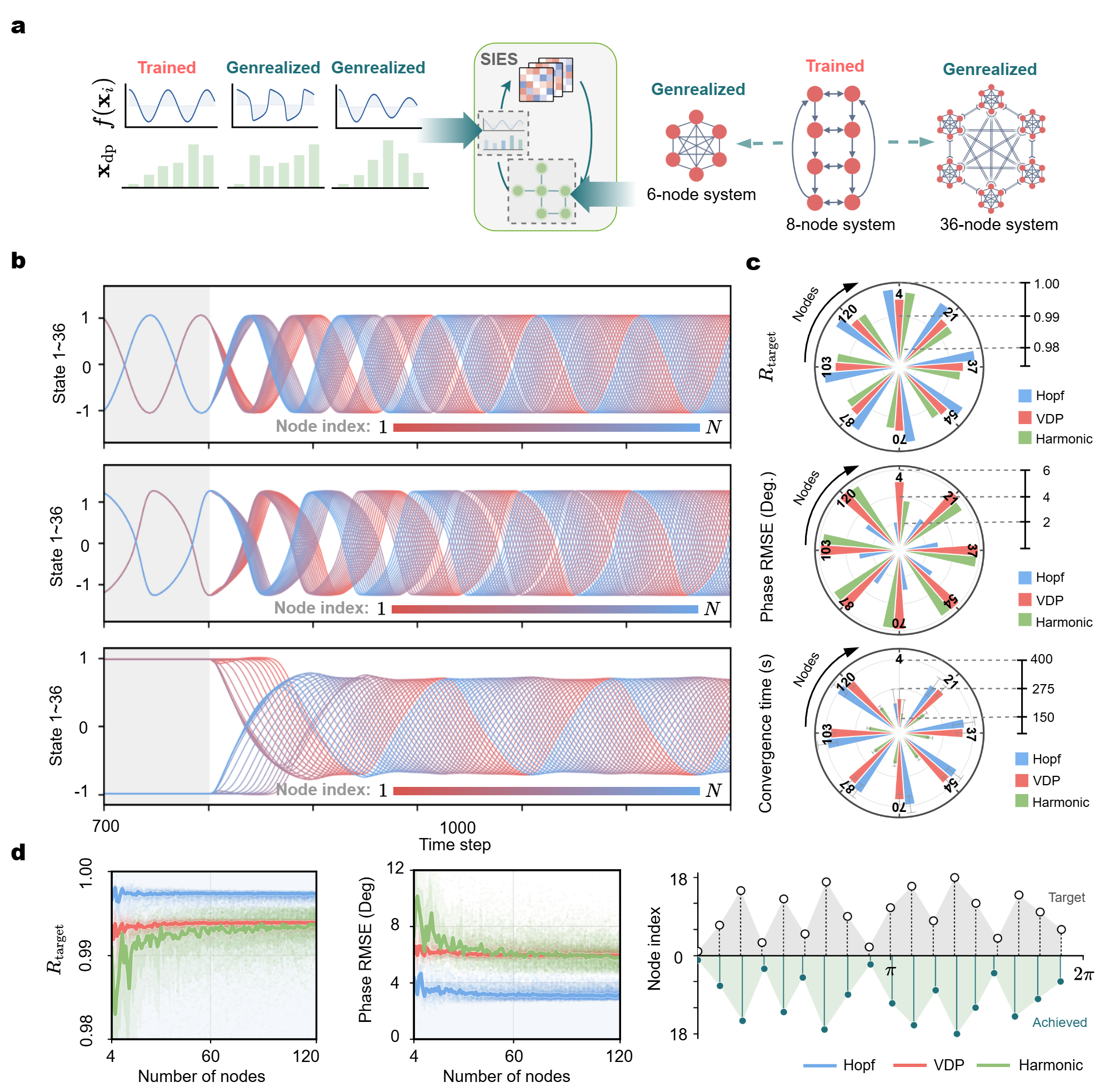}
    \caption{\textbf{Evaluation of SIES for synchronization control across unseen intrinsic node dynamics, system scales and synchronization targets.} \textbf{(a)} Evaluation design: a model trained on an 8-node Hopf system is tested on different intrinsic node dynamics, unseen objectives and CDSs with different numbers of nodes. \textbf{(b)} Representative 36-node traveling-wave outputs under Hopf, Van der Pol (VDP) and overdamped harmonic intrinsic node dynamics. Colors indicate oscillator-node index. \textbf{(c)} Target-aligned order parameter \(R_{\mathrm{target}}\), phase RMSE and convergence steps for traveling-wave targets as node number increases. \textbf{(d)} \(R_{\mathrm{target}}\) and phase RMSE for unseen random synchronization targets across node numbers, together with a representative comparison between target and achieved phases in an 18-node system.}
\label{fig:cross_scale_gen}
\end{figure}

Natural swarms can preserve coherent collective organization despite changes in group size, collective objective and the local dynamics of individual members. A swarm-inspired computational architecture should therefore be tested not only on whether it can form one trained target pattern, but also on whether the same local interaction rule remains effective under changes in network scale, intrinsic node dynamics and target patterns. We use SIES for synchronization control as a controlled dynamical probe of this question, because the desired collective state can be specified as a target phase configuration and quantified directly. We train SIES only on an 8-node coupled oscillator network with Hopf intrinsic node dynamics and four target phase configurations \(\mathbf{x}_{\text{dp}}\). Details are provided in Methods and Supplementary Information (SI) Section~\ref{app:sies_cem_rl}. We then evaluate this trained SIES under unseen network scales, synchronization targets, and intrinsic node dynamics (Fig.~\ref{fig:cross_scale_gen}a). Note that all evaluations in this experiment are conducted on fully connected CDSs. This section shows that the trained model generalizes across all three tested dimensions, and that a reduced phase model gives a local explanation for traveling-wave scale generalization.

\textbf{Generalization to unseen dynamics and scale.} SIES treats the graph of a CDS as an implicit input through its attention mechanism, allowing it to operate on graphs of different scales without modifying the SIES architecture. For each system scale \(N\), the traveling-wave target is defined by desired lags \(\theta_i=2\pi(i-1)/N\) relative to the reference oscillator (Methods, Section~\ref{sec:method_generalization}). The SIES model trained only on Hopf oscillators successfully generalizes to 36-node traveling-wave targets under unseen Van der Pol or overdamped harmonic dynamics (Fig.~\ref{fig:cross_scale_gen}b). \href{https://drive.google.com/file/d/1mxvNUCzE1o-S4vQsazRPx55vuTF_10dJ/view?usp=sharing}{SI Video 1} records the state evolution trajectories of networks of different scales under these different intrinsic node dynamics. We quantify target alignment using the target-aligned order parameter \(R_{\mathrm{target}}\), for which a value of one indicates perfect agreement with the desired phase configuration, and phase RMSE, which measures the remaining circular phase error after removing a global phase offset (SI Section~\ref{app:metrics}). Across the displayed network scales, traveling-wave generation remains ordered for all three intrinsic node dynamics, with high \(R_{\mathrm{target}}\), bounded phase RMSE and the convergence steps reported in Fig.~\ref{fig:cross_scale_gen}c.

\textbf{Generalization to unseen synchronization targets.} We next assign random target phase configurations that are unseen during training. At each system scale \(N\), each target pattern is sampled from \(\{2\pi p/N:p=0,\ldots,N-1\}\), with the reference node fixed at zero.  \(2N\) targets are evaluated per scale (Methods, Section~\ref{sec:method_generalization}). Target alignment remains high as scale increases, as shown by \(R_{\mathrm{target}}\) and phase RMSE for networks from 4 to 120 nodes (Fig.~\ref{fig:cross_scale_gen}d). The target-versus-achieved phase diagram further illustrates close alignment for a representative 18-node random target. Thus, SIES generalizes across both unseen intrinsic node dynamics and  synchronization targets.

\textbf{Insight into scale generalization.} The generalization experiments above show that SIES can generate prescribed organization across unseen system scales, suggesting that this empirical scale generalization may admit a theoretical explanation. The full learned SIES operator is too high-dimensional for direct analysis, so we examine a simplified phase model defined under weak-coupling and fully connected assumptions (Methods, Section~\ref{sec:simplified_phase_model}). For traveling-wave targets \(\theta_i=2\pi(i-1)/N\), Theorem~\ref{theo:1} shows that, for \(N>4\), a coupling function \(e_{ij}=A(\theta_i,\theta_j)\) can produce the target wave without being redesigned for each system scale. One construction is \(A(\theta_i,\theta_j)=\cos(\theta_j-\theta_i)+\sin(\theta_j-\theta_i)\), whose dynamics are illustrated in \href{https://drive.google.com/file/d/1w8vjVntMLjBdL83KlCsYNNXEnxuYEyOv/view?usp=sharing}{SI Video 2}. This result provides a local analytical explanation for scale-compatible traveling-wave generation, but it does not constitute a theory of the full learned operator or of random target patterns.

\subsection{SIES Combines Generalization with Convergence and Sparse Interactions}  \label{sec:convergence}

\begin{figure}[!t]
\centering
\includegraphics[width=0.9\textwidth]{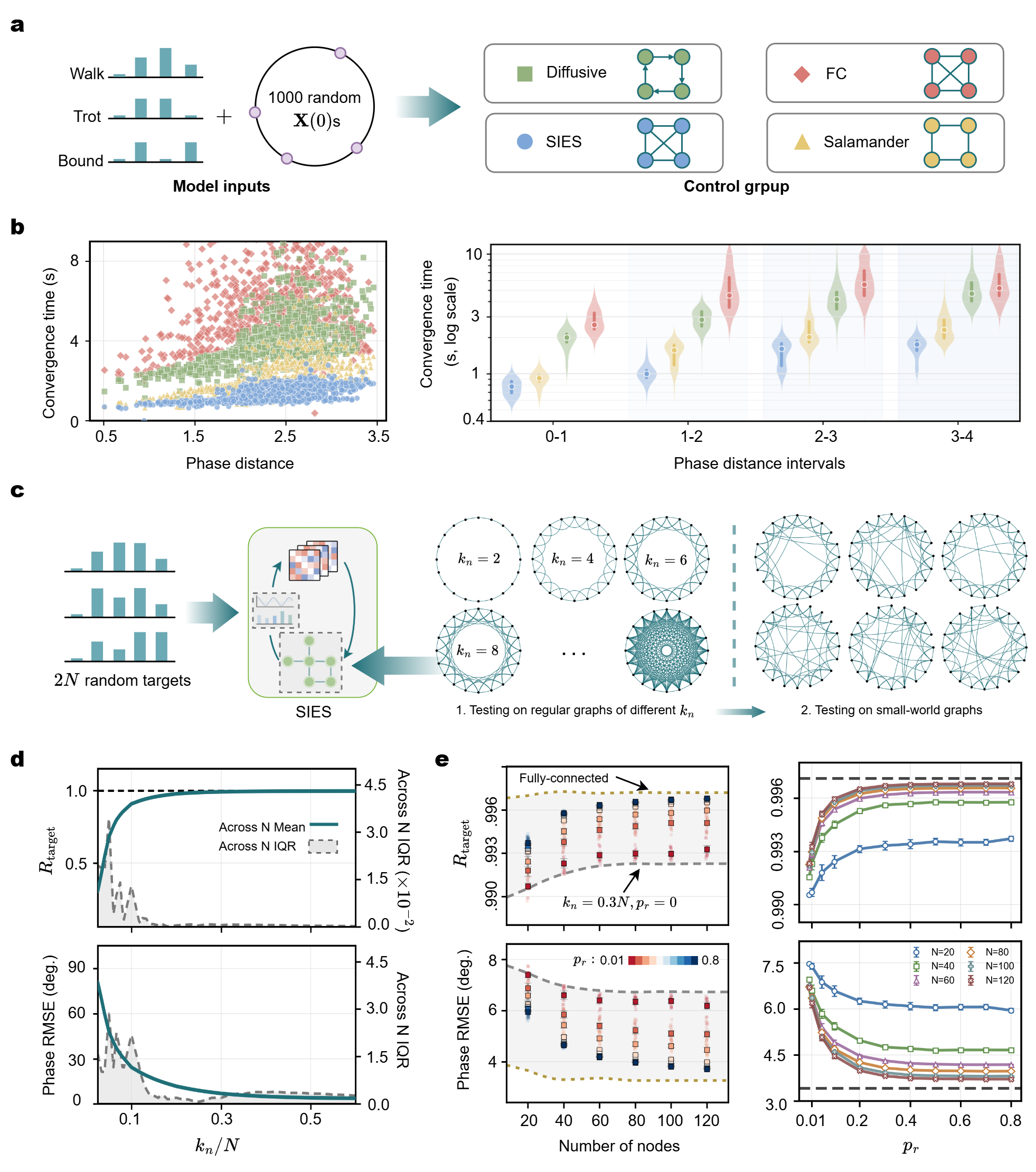}
    \caption{\textbf{Convergence and sparse-interaction evaluations of SIES for synchronization control.} \textbf{(a)} Experimental setup for comparing the convergence of SIES with three baseline models \textbf{(b)} Convergence evaluation for the walk target, showing convergence time for each initial condition (left) and distributions across phase-distance intervals (right, log scale). \textbf{(c)} Experimental setup of sparse-interaction evaluations of SIES using the small-world network model. For each network scale \(N=20,40,60,80,100,120\), \(2N\) random target synchronization patterns are tested on regular ring lattices with varying \(k_n\) and on small-world networks with fixed \(k_n=0.3N\) and varying rewiring probability \(p_r\). \textbf{(d)} Target-aligned order parameter \(R_{\mathrm{target}}\) (top) and phase RMSE (bottom) versus normalized connectivity \(k_n/N\) for regular ring lattices across network scales. \textbf{(e)} Impact of rewiring probability \(p_r\) on target synchronization performance in sparse small-world networks with fixed \(k_n=0.3N\). (Left) Metrics versus number of nodes. (Right) Metrics versus \(p_r\) for different network scales \(N\).}
\label{fig:basin_att}
\end{figure}

Natural swarms do more than produce global collective behavior; the process by which that behavior emerges is often stable, efficient and shaped by local adaptation. If SIES was to serve as a swarm-inspired synchronization model, target formation should therefore be examined together with reachability and convergence efficiency. We test whether the ability of SIES to form a prescribed target state also implies a broad empirical basin of attraction and optimized convergence dynamics. A second swarm principle concerns interaction sparsity: an individual agent does not need to attend to every other agent for stable collective behaviors to emerge. We therefore also test whether SIES can preserve ordered target patterns when the interaction graph is sparsified, rather than relying on dense all-to-all coupling. This subsection shows that SIES converges faster than three oscillator baselines in the tested gait settings and that dense all-to-all coupling is not required for the examined random phase targets.

Reliable convergence from diverse initial states is particularly important when a CDS is used to generate or transition between robotic gaits. We therefore compare SIES with three coupled-oscillator baselines on a 4-node network: a fully connected model (FC)~\cite{righetti2006design,Righetti2008PatternGW}, a nearest-neighbor model (Salamander)~\cite{ijspeert2007swimming} and a diffusively coupled model (Diffusive)~\cite{Yu2016GaitGW}. The three evaluated gait-related targets are trot \([0,\pi,\pi,0]\), walk \([0,\pi,3\pi/2,\pi/2]\) and bound \([0,\pi,0,\pi]\) (SI Section~\ref{sec:comp_sies_cem}).

Across 1000 sampled initial conditions shared across all models, we measured convergence to gait-related phase configurations. Convergence time is the earliest time at which the instantaneous target-aligned order parameter reaches \(R_{\mathrm{target}}(t)=0.999\) (Eq.~\eqref{eq:convergence_time}). Initial-state separation is measured by the circular phase-distance metric (SI Eq.~\eqref{eq:PD}). Fig.~\ref{fig:basin_att}a summarizes the convergence-test setup. \href{https://drive.google.com/file/d/190nEunSvKkB5vvHRMQkHIUsh01Hp-TZp/view?usp=sharing}{SI Video 3} intuitively visualizes the state evolution trajectories of the four models under different initial conditions. For the walk target, the trial-level scatter plot shows a compact, approximately monotonic increase in SIES convergence time with respect to initial phase distance. The SIES points remain below the broader and more variable baseline clouds across the displayed range (Fig.~\ref{fig:basin_att}b, left, Extended Data Fig.~\ref{fig:basin_att_ext}a). When the trials are grouped by phase-distance interval and plotted on a logarithmic scale, the median convergence time of SIES is lower than that of all three baselines in every interval (Fig.~\ref{fig:basin_att}b, right, Extended Data Fig.~\ref{fig:basin_att_ext}b). The corresponding cumulative convergence curve reaches the displayed \(50\%\), \(80\%\) and \(95\%\) trial fractions earlier for SIES than for Salamander, Diffusive and FC (Extended Data Fig.~\ref{fig:basin_att_ext}c). The baseline-minus-SIES convergence-time differences remain positive across phase-distance intervals (Extended Data Fig.~\ref{fig:basin_att_ext}d). These complementary views show that the advantage is not confined to initial states that are already close to the target configuration. Unlike the baseline models, whose relative performance varies across different gaits, SIES maintains stable and superior convergence across targets. Taken together, these results demonstrate that reinforcement learning improves not only the diversity of achievable synchronization modes in SIES, but also the dynamical process by which those modes are reached. Compared with the baselines, SIES exhibits both a broader empirical basin of attraction and faster, more consistent convergence across varying phase distances and target patterns. This dual improvement in mode richness and convergence efficiency closely parallels the behavior of natural swarms, where collective patterns emerge not merely as final states, but through stable, efficient, and locally adaptive dynamical processes. By jointly optimizing target formation and the dynamics of reaching those targets, SIES provides a more faithful swarm-inspired model of synchronization control, in which both the destination and the trajectory toward it are shaped by learned local interactions.

 We next investigate how interaction density and topological randomness govern the ability of SIES to realize sampled unseen target phase configurations by employing the small-world network model~\cite{watts1998collective} (Fig.~\ref{fig:basin_att}c). We first fix the rewiring probability at $  p_r = 0  $ (reducing the topology to regular ring lattices) and, for each network scale $  N = 20, 40, 60, 80, 100, 120  $, vary the nearest-neighbor count $  k_n  $ from 2 to $  N-2  $ in steps of 2. For every $  (N, k_n)  $ pair we evaluate $  2N  $ randomly sampled target phase patterns. As shown in Fig.~\ref{fig:basin_att}d, both the target-aligned order parameter \( R_{\mathrm{target}} \) and phase RMSE (each computed as the mean across network scales \( N \in \{20, 40, 60, 80, 100, 120\} \)) improve rapidly with increasing normalized connectivity \( k_n/N \). \href{https://drive.google.com/file/d/124i1GDCNVXZr1qIrF5WhRodufIxw2_3i/view?usp=sharing}{SI Video 4} intuitively visualizes the state evolution trajectory of a 72-node CDS network as \( k_n \) increases, showing that a relatively stable traveling wave already emerges at \( k_n = 22 \). The low interquartile range across these different scales indicates that the trend is highly consistent regardless of network scale. Notably, both metrics approach the performance of fully connected networks once \( k_n/N \approx 0.3 \), with \( R_{\mathrm{target}} \) rising sharply toward 1 and phase RMSE decreasing accordingly. These results demonstrate that SIES can generate diverse target phase patterns with high fidelity even on relatively sparse interaction graphs.

To examine whether controlled topological randomness can further enhance performance under sparse connectivity, we then fix $  k_n = 0.3N  $ and systematically increase the rewiring probability $  p_r  $. For each $  p_r  $ we generate 20 independent small-world networks and test every network on the same $  2N  $ random targets. The outcomes (Fig.~\ref{fig:basin_att}e) reveal a counter-intuitive yet consistent trend: raising $  p_r  $ steadily improves both $  R_{\mathrm{target}}  $ and phase RMSE across all tested network scales. Notably, for the largest scales ($  N=100  $ and $  N=120  $), the performance of these sparse networks with moderate rewiring nearly recovers that of the fully connected topology. This result suggests that the long-range shortcuts introduced by rewiring facilitate global coordination, allowing SIES to achieve near-optimal performance even under sparsity.

These sparse-interaction findings underscore a central strength of the SIES framework: its ability to support rich target synchronization patterns through sparse interactions that range from locally structured ring lattices to largely unstructured, random-like topologies, rather than requiring dense coupling. The observation that regular ring lattices already achieve near-optimal performance at normalized connectivity around 0.3 directly echoes a fundamental characteristic of natural swarms, where coordinated collective dynamics emerge without requiring all-to-all interactions. Increasing the rewiring probability steadily improves both target alignment and phase accuracy, enabling sparse networks to approach the synchronization performance of fully connected systems even at large scales. These results indicate that SIES captures an efficient organizational principle of swarm coordination in which controlled topological randomness enhances rather than disrupts the emergence of target patterns under sparsity.

\subsection{Signed Direction-Aware Coupling Underlies the SIES Mode Repertoire}\label{sec:result_ablation}

To determine which components of the SIES architecture enable access to phase-lagged collective modes, we evaluate the full synchronization-control model against three matched ablation variants on an 8-node fully connected oscillator system. Performance is assessed using held-out diagnostic phase targets. Detailed experimental configurations, including model variants, training protocols, and evaluation metrics, are provided in Methods Section~\ref{sec:method-ablation}.

Replacing signed attention with standard non-negative softmax attention prevents the formation of phase-lagged modes on held-out targets, showing that repulsive couplings are required to access the observed synchronization repertoire. Tying the source and target conditioning projections while preserving signed attention allows the model to fit training targets but degrades performance on held-out patterns that require asymmetric coupling. Aggregating messages in the original state space rather than the learned feature space maintains mode accessibility but slows learning and lowers final accuracy. Detailed experimental results are presented in SI Section~\ref{sec:mechanistic_cem}.
Together, these controls demonstrate that signed coupling combined with separate source-target conditioning forms the expressive core enabling diverse synchronization patterns, whereas feature-space aggregation mainly improves optimization speed and mode fidelity.

\subsection{SIES Enables Topology-Aware Rhythmic Control in Multi-Legged Robots}\label{sec:robot-exp}

\begin{figure}[t!]
\centering
\includegraphics[width=.95\textwidth]{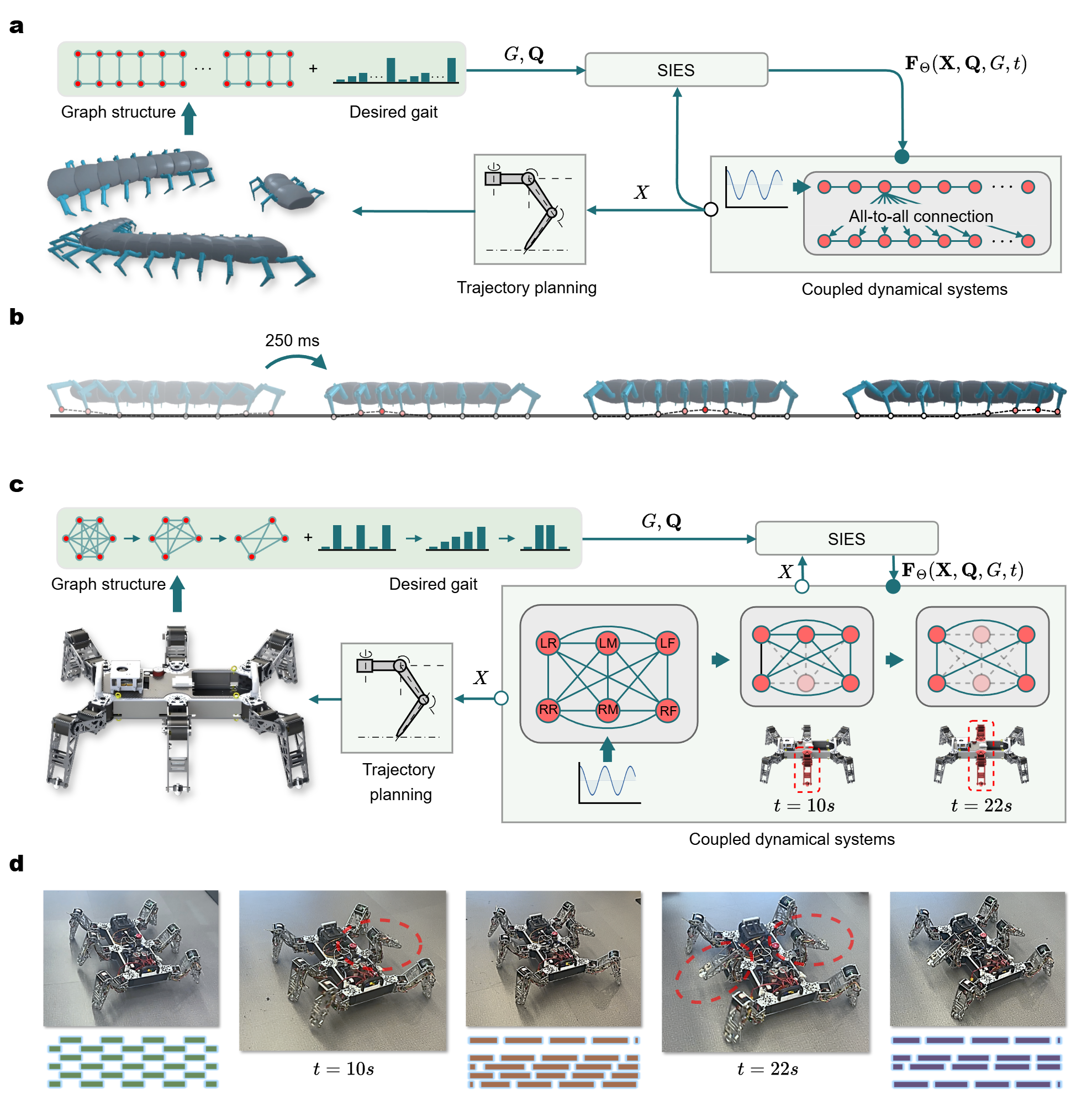}
    \caption{\textbf{Topology-aware rhythmic control with SIES in multi-legged robots.} \textbf{(a)} Control framework for simulated centipede-like robots. SIES receives the leg-foot topology of robots with different segment numbers, constructs the corresponding CDS network, computes coupling terms online and generates rhythmic signals for trajectory planning and joint-command generation. \textbf{(b)} Consecutive snapshots of forward locomotion in an 8-segment simulated centipede robot. \textbf{(c)} Control framework for a physical hexapod under time-varying active-leg topology. Sequential leg disablement changes the active CDS network from six to five and then four nodes, while SIES updates the rhythmic signals supplied to the trajectory-planning module. \textbf{(d)} Experimental snapshots and corresponding foot-contact patterns before disablement, after disabling one leg at \(t\approx10\) s, and after disabling a second leg at \(t\approx22\) s.}
\label{fig:fault_tolerant}
\end{figure}

The preceding results show that SIES generalizes learned synchronization dynamics across system scale, target phase relations, sparse interaction graphs, and intrinsic oscillator dynamics. The next question is whether these graph-conditioned collective dynamics remain useful when the nodes are embedded in a robot body. This test is stricter than pure oscillator simulation because the generated phases must be converted into joint commands, and the relevant graph is the leg-foot topology of an embodied system rather than an abstract benchmark. Multi-legged locomotion provides a natural testbed: coordinated gait requires stable non-in-phase rhythms, while morphology changes or leg disablement demand rapid reorganization of the active coordination graph. We therefore test whether the trained SIES model used in the above section can serve as a topology-aware rhythmic generator for simulated centipede-like robots with different numbers of body segments and for a physical hexapod after sequential leg disablement.

\textbf{Gait simulation in centipede-like robots.} We deploy the trained SIES model in PyBullet simulations of centipede-like robots with 6, 16 and 32 legs (3, 8 and 16 segments). Fig.~\ref{fig:fault_tolerant}a summarizes the simulated rhythmic-control framework. For each body scale, we provide the leg-foot topology to SIES as the interaction graph. SIES constructs the corresponding oscillator CDS, computes the learned coupling terms online, and generates one rhythmic signal per leg. A trajectory-planning module then converts these rhythmic signals, together with duty-cycle and steering inputs, into foot trajectories and joint commands (Methods, Section~\ref{sec:method_robot_locomotion}; SI Section~\ref{app:motion_gen}). The desired phases follow
\begin{equation}
    \begin{aligned}
    \mathbf{x}_{\text{dp}} &= 2\pi\left[\boldsymbol{\vartheta}_N,\boldsymbol{\vartheta}_N\right], \\
    \boldsymbol{\vartheta}_N &= \left[0,\frac{2}{N},\frac{4}{N},\ldots,\frac{N-2}{N}\right],
    \end{aligned}
    \label{eq:centipede_target}
\end{equation}
where the repeated vectors specify the corresponding phase progressions on the two sides of the body and produce rear-to-front caterpillar gaits. Fig.~\ref{fig:fault_tolerant}b shows consecutive snapshots from an 8-segment simulated centipede, illustrating the rhythmic signals after conversion by the trajectory-planning module. \href{https://drive.google.com/file/d/1SjK_hP2fbSZ_da0YFd8VK8w7Wy_phddv/view?usp=sharing}{SI Video 5} presents the full caterpillar-like gaits produced by the trained SIES model for the 3-segment, 8-segment and 16-segment centipede-like robots, as well as turning maneuvers of the 16-segment robot under the same gait. These demonstrations show that SIES operates as a reusable, topology-aware rhythmic generator that can be directly deployed across multi-legged robots with diverse body plans. In centipede robotics, a central objective has been to understand how morphological parameters—particularly segment number and body-axis flexibility—shape locomotor performance and maneuverability~\cite{hoffman2012turning,aoi2016advantage,diaz2023active}. Prior studies on modular centipede-inspired robots have shown that passive body undulations enhance both forward locomotion and turning across varying segment counts, and that body flexibility can induce beneficial dynamic instability that facilitates rapid turning maneuvers. However, most existing generators are hand-designed for specific morphologies, limiting systematic investigation across different body plans. SIES addresses this limitation by providing a single, topology-aware pattern generator that produces coordinated rhythmic outputs for robots with diverse segment numbers and interaction graphs. Because the same trained model operates without retraining or morphology-specific coupling rules, it enables efficient exploration of how segment number and flexibility jointly influence gait generation and turning agility in both simulated and physical multi-legged systems.

\textbf{Adaptation to leg disablement in a physical hexapod.} We further evaluate the trained SIES model on a physical hexapod robot using a similar rhythmic generator architecture (Fig.~\ref{fig:fault_tolerant}c; Methods, Section~\ref{sec:method_robot_locomotion}; SI Sections~\ref{app:robots} and~\ref{app:motion_gen}). In this experiment, the leg-foot topology changes over time because we sequentially disable the second and fifth legs at \(t \approx 10\) s and \(t \approx 22\) s. After each disablement, SIES receives the updated active-leg graph, constructs a six-, five- or four-node CDS, and computes coupling terms for the remaining leg oscillators. The resulting rhythmic signals pass to the trajectory-planning module to produce joint commands for the active legs. The corresponding phase-lag targets are tripod \([0,\pi,0,\pi,0,\pi]\), five-leg metachronal \([0,2\pi/5,4\pi/5,6\pi/5,8\pi/5]\) and trot \([0,\pi,\pi,0]\), with duty cycles specified in Methods.

The experiment demonstrates smooth transitions from a six-legged tripod gait to a five-legged metachronal gait and then to a four-legged trot gait after each graph update (Fig.~\ref{fig:fault_tolerant}d and \href{https://drive.google.com/file/d/1Ep98PGMzkXJPSx9rxquujfvTlDjxYf8Y/view?usp=sharing}{SI Video 6}). We use a metachronal target for the asymmetric five-legged condition. Fig.~\ref{fig:fault_tolerant}d also reports the corresponding foot-contact patterns across the three stages. This result shows that the trained SIES model remains executable on altered embodiment graphs in real time without retraining a separate coupling model for each active-leg configuration. In resilient robotics, a central objective is to enable multi-legged robots to maintain effective locomotion after partial morphological damage, such as leg loss, while achieving graceful degradation and minimizing dependence on pre-designed controllers for specific failure modes~\cite{cully2015robots}. In this context, SIES offers a distinct approach to resilient locomotion by treating morphological changes as updates to the interaction graph of a coupled dynamical system. Instead of searching for new behaviors after damage or maintaining separate controllers for different leg configurations, the same trained model recomputes the coupling relationships among the remaining functional oscillators. This enables real-time reconfiguration of rhythmic coordination directly from the current embodiment topology. 


\subsection{SIES Extends Signed Dynamical Attention to Heterophilous Graph Learning}\label{sec:graph-learning}

The final question is whether the SIES mechanism is also useful for graph neural networks. Conventional message passing tends to favor homophilous attraction, which can be limiting on heterophilous graphs where useful representations may require unlike neighbors to separate rather than collapse into a common embedding~\cite{Chamberlain2021GRANDGN,Rusch2022GraphCoupledON,Nguyen2023FromCO,Platonov2023ACL}. We view the GNN classification task as that of rotating features of nodes from different classes into distinct regions of a latent phase space, such that a downstream decoder can discriminate node categories according to the phase encoding of each node. Our earlier results on SIES for synchronization control establish that this mechanism generates diverse phase differences among nodes even in networks of arbitrary scale. Most critically, because SIES computes signed attention within a coupled dynamical system, it naturally attracts nodes of the same class while repelling those of different classes. We therefore evaluate whether SIES can alleviate this homophilous attraction bias by allowing both attractive and repulsive neighbor influences inside a CDS-based representation-learning model. In this setting, node features initialize the dynamical states and provide the node-wise conditioning input, while classification loss optimizes the signed coupling without prescribed phase targets (Methods, Section~\ref{sec:sies_gnn_method}).

\begin{figure}[t!]
\centering
\includegraphics[width=\textwidth]{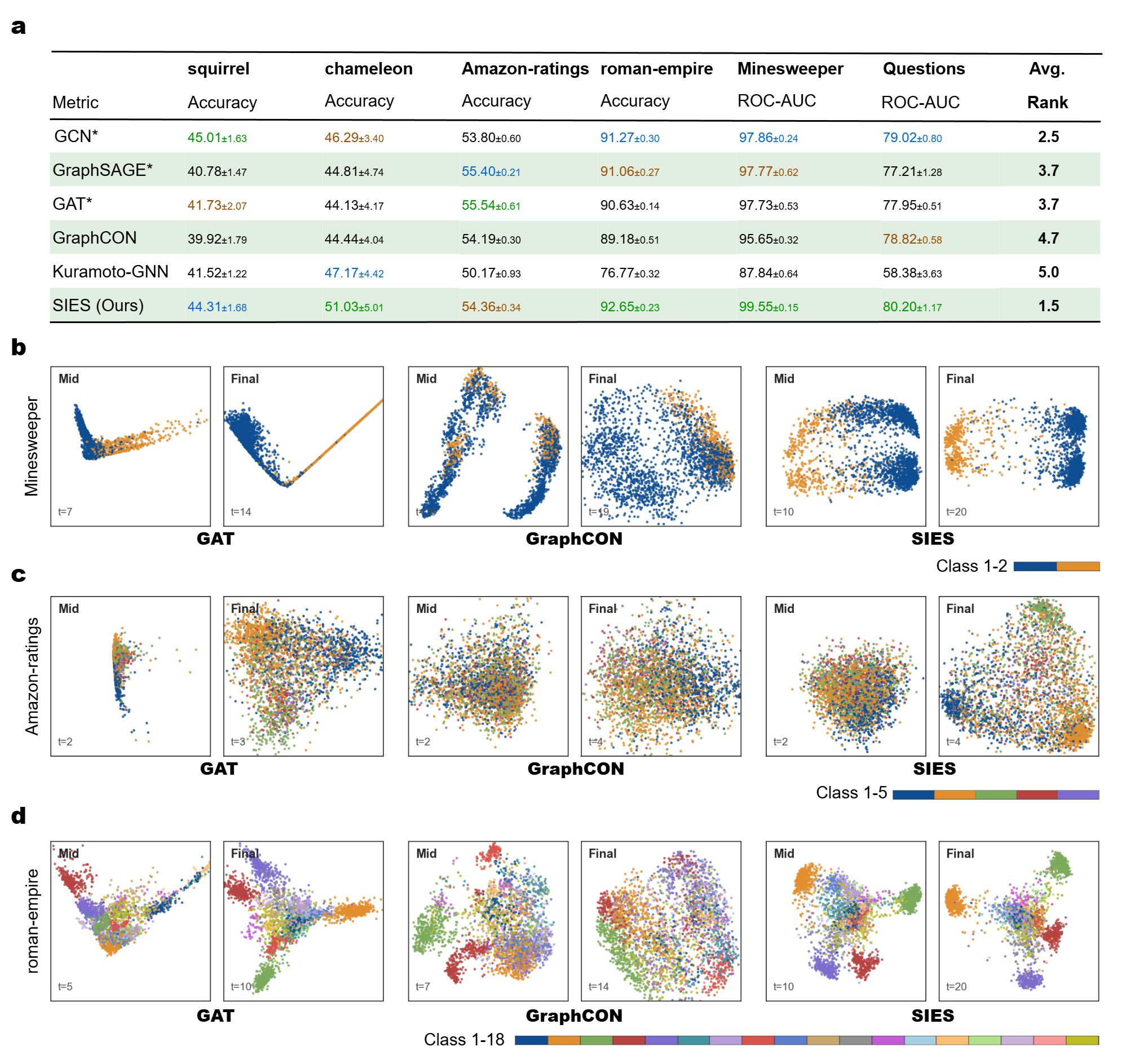}
\caption{\textbf{SIES performance and dynamical representations on heterophilous graph node classification.} 
\textbf{(a)} Node-classification performance of enhanced conventional GNNs, CDS-based baselines and SIES on six heterophilous datasets. Values are reported as means ± standard deviations. The top-1, top-2 and top-3 results are highlighted in green, blue and brown, respectively. 
\textbf{(b--d)} PCA projections of node states at intermediate and final integration steps for GAT, GraphCON and SIES on Minesweeper, Amazon-Ratings and Roman-Empire, respectively. Points are colored by class label.}
\label{fig:gnn_evolution}
\end{figure}

We evaluated SIES on six heterophilous graph benchmarks: Amazon-ratings, roman-empire, Minesweeper, Questions, squirrel and chameleon (Methods, Section~\ref{sec:method_gnn_datasets}). Comparators comprised enhanced conventional GNN baselines (GAT*, GCN* and GraphSAGE*) reported with architecture enhancements following~\cite{Luo2024ClassicGA}, and two CDS-inspired models, GraphCON~\cite{Rusch2022GraphCoupledON} and KuramotoGNN~\cite{Nguyen2023FromCO}. We use accuracy for multi-class datasets and ROC-AUC for binary datasets, reporting the test score of the validation-selected model as mean and standard deviation over the prescribed splits (SI Eqs.~\eqref{eq:accuracy} and~\eqref{eq:roc_auc}). Detailed experimental settings and hyperparameter search ranges are specified in Methods.

Fig.~\ref{fig:gnn_evolution}a summarizes classification performance across the six heterophilous benchmarks. Enhanced conventional GNNs achieve the highest displayed results on Squirrel and Amazon-Ratings, whereas SIES attains the highest performance on Chameleon (51.03\%), Roman-Empire (92.65\%), Minesweeper (99.55\% ROC-AUC), and Questions (80.20\% ROC-AUC). Notably, Minesweeper possesses a highly regular graph structure and is widely regarded as a dataset particularly well-suited to standard GNN architectures. Consequently, enhanced baselines have already reached a strong performance level, typically plateauing around 97\% ROC-AUC. SIES nevertheless exceeds the performance of the compared baselines on this dataset and reach 99.55\% ROC-AUC indicates that the model can more thoroughly exploit the intrinsic structure of datasets that align well with its underlying dynamical principles.

GraphCON-GAT provides a controlled non-negative-attention counterpart to SIES in our graph-learning evaluation. Both models use the same damped harmonic CDS backbone and matched experimental protocol; their defining operator difference is that GraphCON-GAT applies the standard GAT softmax map, producing non-negative node attention, whereas SIES retains signed coupling coefficients (Eq.~\eqref{eq:gnn_sign_control}). A CDS backbone can provide continuous representation dynamics, as in GraphCON, but a non-negative softmax map cannot directly express repulsive neighbor influence. Conversely, signed coefficients acquire their interpretation as attraction or repulsion most naturally when they enter a state-evolution equation rather than a static aggregation rule. The comparison therefore tests whether the signed-attention coupling formulation improves on a literature-grounded non-negative attention operator within a matched oscillator-GNN setting, and whether CDS dynamics can express both neighbor attraction and neighbor repulsion during graph learning.

To better understand how different architectures shape node representations, we examine principal-component projections of node states during inference (Figs.~\ref{fig:gnn_evolution}b--d). Conventional attention-based models such as GAT primarily operate through a diffusive mechanism: they gradually push nodes of different classes toward distinct regions or hyperplanes in feature space. In contrast, the CDS-based models generate evolving representation trajectories over the integration horizon, causing them to spread and oscillate across different phase regions rather than collapsing into static clusters. However, GraphCON-GAT lacks an explicit mechanism for modeling repulsive interactions between nodes. As a result, many node representations remain distributed across intermediate regions of the feature space. SIES, by incorporating signed attention that explicitly encodes both attractive and repulsive couplings, enables nodes to be more effectively repelled from dissimilar classes while being attracted to similar ones. Consequently, the final node states exhibit clearer class-associated structures on several benchmarks, most notably on Minesweeper.

Together, these experiments show that the signed dynamical-coupling mechanism of SIES is not restricted to prescribed synchronization control. In graph representation learning, the same signed-coefficient formulation can be embedded in CDS evolution to produce task-relevant node representations on heterophilous benchmarks. The matched comparison with GraphCON-GAT indicates that the benefit is not attributable to the overdamped harmonic backbone alone, because replacing signed degree-normalized coefficients with non-negative softmax attention removes the direct representation of repulsive neighbor influence. These results support SIES as a CDS-based GNN mechanism that can reduce homophilous attraction bias when heterophilous separation is useful for node classification.

\section{Discussion}

The central finding of this study is that SIES provides a minimal computational realization of the core mechanisms that enable natural swarm systems to generate rich, scalable, and generalizable collective behaviors. Through a single learned local interaction rule that adaptively determines signed (attractive or repulsive) couplings between neighboring nodes, SIES spontaneously produces a wide spectrum of emergent functional signatures characteristic of biological swarms. These include coordinated collective rhythms that generalize across untrained system scales, unseen phase targets, and altered intrinsic node dynamics; faster empirical convergence relative to conventional oscillator baselines; robust preservation of target organization under sparse and irregular interaction graphs; and direct transfer to physical robot body graphs whose scale and connectivity change dynamically. The same local signed-interaction mechanism further improves node representation learning on heterophilous graphs, where effective organization requires both attraction and repulsion. Crucially, all of these capabilities arise without a centralized controller or hand-specified global pattern generator. Each node uses only local state and task information to decide whether neighboring states should attract or repel it, and iterated coupled dynamical updates convert these local signed decisions into coherent system-level organization. Source-target conditioning supplies directional task context, signed coefficients provide the attraction–repulsion vocabulary, and feature-space aggregation improves the expressivity of learnable modes. In this sense, SIES demonstrates that a minimal local rule, when equipped with adaptive signed interactions, is sufficient to reproduce the scalable and dynamically generalizable emergent behaviors that define natural swarm intelligence.

This mechanism distinguishes SIES from prior CDS and GNN approaches. Classical CDS models provide explicit dynamics and interpretable coupling, but many constructions are designed around selected phase patterns, network symmetries or robot morphologies~\cite{collins1993coupled,Buono2001ModelsOC,stewart2003symmetry,golubitsky2005patterns}. Gait-oriented oscillator models, including fully connected, nearest-neighbor and diffusive couplings~\cite{righetti2006design,Righetti2008PatternGW,ijspeert2007swimming,Yu2016GaitGW}, can generate prescribed coordination patterns, but the present comparisons show that such hand-designed couplings do not automatically optimize reachability from sampled initial states. SIES addresses this CDS bottleneck by learning the coupling term over dynamical rollouts, allowing target fidelity and convergence behavior to enter the optimization objective directly. Conversely, conventional GNNs are highly trainable but often aggregate neighbor information through non-negative weights, reinforcing homophilous attraction and limiting their ability to represent repulsive relations on heterophilous graphs~\cite{Chamberlain2021GRANDGN,Rusch2022GraphCoupledON,Nguyen2023FromCO,Platonov2023ACL}. The matched GraphCON-GAT comparison is therefore important: when the damped harmonic backbone is held fixed, replacing signed degree-normalized coefficients with softmax attention removes the direct dynamical representation of repulsive neighbor influence. Thus, the novelty of SIES is not merely combining a neural network with a dynamical system, but assigning learned signed attention the role of a CDS coupling law.

This study has several limitations. First, although SIES achieves high target alignment, it remains an approximate learned controller: the generated synchronization patterns can retain residual phase errors of a few degrees, typically around \(2^\circ\)--\(4^\circ\), and the convergence results support a broad empirical basin over sampled initial states rather than a guarantee of global reachability. Applications requiring strict phase accuracy or certified transient behavior may therefore need additional feedback correction, stability-constrained training, or safety-filtered coupling updates. Second, the current analytical result is limited to scale-compatible traveling-wave generation in a reduced weak-coupling, fully connected phase model. It does not yet provide stability, reachability, or error bounds for the full multi-head feature-space SIES operator, random targets, sparse or directed graphs, heterogeneous node dynamics, or topology changes during transients. Third, the robotic experiments demonstrate topology-aware execution rather than a complete locomotion robustness benchmark. The simulated centipede and physical hexapod results show that the same trained coupling rule can be executed across different body graphs and after leg disablement, but systematic evaluations of speed, energy cost, disturbance rejection, terrain variation, onboard deployment, and repeated-trial reliability remain to be conducted. Finally, while SIES is executed through graph-local coupling at inference time, the current synchronization-control model is trained offline using global task objectives. Developing more decentralized or online adaptation procedures remains an important step toward closer alignment with biological swarm intelligence.

Future work should develop SIES in three directions. First, theory should extend the current traveling-wave analysis toward graph-dependent conditions for stability, reachability, bounded synchronization error, and scale generalization. Combining phase reduction, Lyapunov or contraction analysis, signed-graph spectra, and approximation bounds for learned attention maps could make SIES-like models more certifiable graph-dynamical mechanisms. Second, SIES should be advanced as a morphology-aware coordination principle for embodied systems. The centipede and leg-disabled hexapod experiments provide an initial demonstration that learned coupling laws can remain executable as body morphology and active-leg topology change, connecting SIES to resilient and self-organized robot locomotion~\cite{bongard2006resilient,cully2015robots,zhang2017resilient,owaki2017quadruped,thandiackal2021emergence}. Future work should integrate sensory feedback, online topology inference, and constrained adaptation to test robustness, controllability, energy efficiency, and recovery across morphologies, terrains, contacts, and damage conditions. Third, SIES should be explored as a general learnable coupling operator for graph-dynamical modeling. Because many natural and engineered systems derive function from local interactions, SIES may be useful beyond synchronization and heterophilous graph learning, including spatial forecasting over coupled variables~\cite{lorenz1963deterministic,bauer2015quiet,keisler2022forecasting,lam2023graphcast}, adaptive sensor or antenna arrays~\cite{balanis2016antenna,hansen2009phased}, and coupled resonator networks~\cite{borra2017dynamics,borra2023multiple}. These studies should combine domain-specific constraints, mechanistic analysis, and rigorous validation, so that SIES becomes both a transferable architecture and a principled tool for engineering coupling-driven dynamics.

\section{Methods}\label{sec:methods}

\subsection{SIES Basic Formulations}
For a network of $N$ coupled dynamical nodes, let \(\mathbf{X}=[\mathbf{x}_1,\ldots,\mathbf{x}_N]^\top\) collect node states, let \(G\) denote the interaction graph, and let \(\mathbf{Q}=[\mathbf{q}_1,\ldots,\mathbf{q}_N]^\top\) denote node-wise task-conditioning inputs. The state $\mathbf{x}_i \in \mathbb{R}^m$ of each node evolves according to its intrinsic node dynamics $f: \mathbb{R}^m \to \mathbb{R}^m$ and the node-specific output of the learned coupling operator:
\begin{equation}
    \dot{\mathbf{x}}_i = f(\mathbf{x}_i) +
    \left[\mathbf{F}_{\Theta}(\mathbf{X},\mathbf{Q},G)\right]_i
    \equiv f(\mathbf{x}_i) + \mathbf{a}_i, \quad i=1,2,\dots,N.
    \label{eq:sies_base}
\end{equation}

Here, \(\Theta\) collects all trainable parameters, and the coupling input to node \(i\) is given by \(\mathbf{a}_i = [\mathbf{F}_{\Theta}(\mathbf{X}, \mathbf{Q}, G)]_i\). As illustrated in Fig.~\ref{fig:sies_update}, SIES computes this coupling by aggregating information from neighboring nodes through learned attractive or repulsive attention weights.

\begin{figure}[t!]
\centering
\includegraphics[width=.8\textwidth]{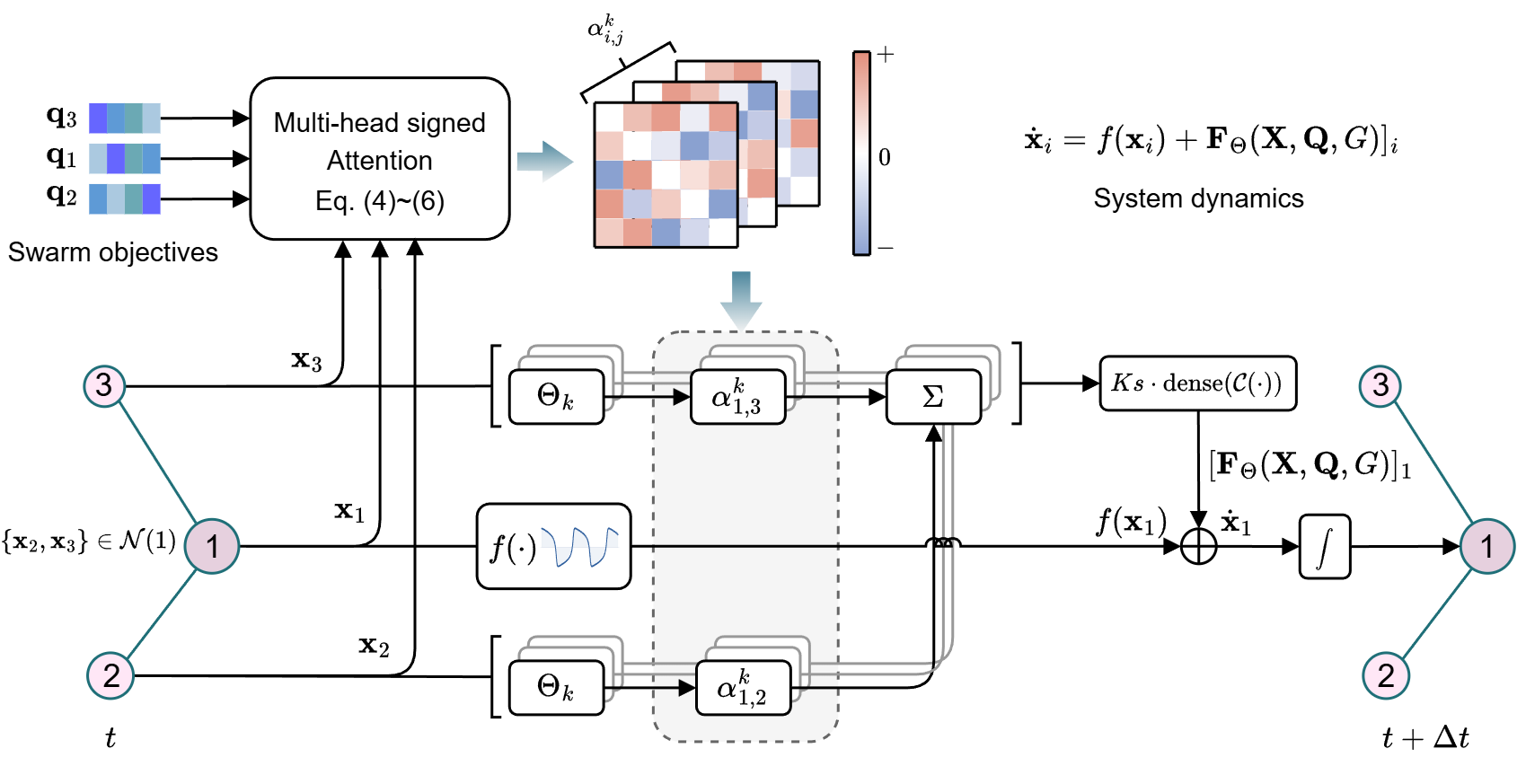}
    \caption{\textbf{SIES computation process for a single step of node state update.} In each update step, SIES computes task-conditioned multi-head attention coefficients to aggregate states from neighboring nodes in the coupled dynamical system. The resulting coupling signal is then integrated with the intrinsic node dynamics of each node to produce the updated node states.}
\label{fig:sies_update}
\end{figure}

\subsubsection{Model Construction}

We consider coupled oscillator systems and use three types of planar node dynamics \(f: \mathbb{R}^2 \to \mathbb{R}^2\) for training and evaluating SIES. The Hopf oscillator~\cite{hopf1942abzweigung} (SI Eq.~\eqref{eq:intrinsic}), which has a circular limit cycle, is used to train SIES for synchronization control. The Van der Pol oscillator~\cite{van1926lxxxviii, Yu2016GaitGW} (SI Eq.~\eqref{eq:intrinsic_vdp}) and the damped harmonic oscillator (SI Eq.~\eqref{eq:intrinsic_overdamped}), which exhibit elliptical limit-cycle and fixed-point dynamics, respectively, are used for generalization evaluation. Damped harmonic dynamics are also used for graph representation learning.

To expand the node-specific output \(\mathbf{a}_i\) of the operator in Eq.~\eqref{eq:sies_base}, we adopt a message-passing scheme inspired by graph neural networks. The node connectivity in \(G\) is encoded by the graph adjacency matrix \(A\), with \(\mathcal{N}(i)\) denoting the neighborhood of node \(i\). The SIES model computes \(\mathbf{a}_i\) as a weighted aggregation of neighbor states:
\begin{equation}
    \mathbf{a}_i = K_s \cdot \text{dense}\Bigl(
    \mathcal{C}\Bigl(
    \bigl\{
    \sum_{j \in \mathcal{N}(i)} \alpha_{i,j}^k \Theta_k \mathbf{x}_j
    \bigr\}_{k=1}^K
    \Bigr)
    \Bigr),
    \label{eq:sies_coupling}
\end{equation}
where \(\Theta_k \in \mathbb{R}^{F \times 2}\) is a learnable projection matrix contained in \(\Theta\), mapping the 2-dimensional state to an \(F\)-dimensional feature space, \(K_s > 0\) is the coupling strength, and \(K\) is the number of attention heads. The operator \(\mathcal{C}\) combines multi-head outputs and has two variants:
\begin{itemize}
    \item Averaging: \(\mathcal{C}_{\text{avg}}\bigl(\{\mathbf{h}^k\}_{k=1}^K\bigr) = \frac{1}{K}\sum_{k=1}^K \mathbf{h}^k\),
    \item Concatenation: \(\mathcal{C}_{\text{concat}}\bigl(\{\mathbf{h}^k\}_{k=1}^K\bigr) = \bigl[\mathbf{h}^1 , \mathbf{h}^2 , \dots, \mathbf{h}^K\bigr]\),
\end{itemize}
with \(\mathbf{h}^k = \sum_{j \in \mathcal{N}(i)} \alpha_{i,j}^k \Theta_k \mathbf{x}_j\). The subsequent dense layer (linear transformation without activation) maps the aggregated features back to \(\mathbb{R}^2\): a \(2 \times F\) matrix for averaging and a \(2 \times KF\) matrix for concatenation.

SIES uses a common graph-conditioned coupling operator in coupled dynamical systems and graph neural networks. The oscillator, robotic and node-classification experiments test this operator in the distinct settings described in the main text.

\subsubsection{Computation of Attention Coefficients}\label{sec:cs}

Let \(\mathbf{m}_i^\text{s}\) and \(\mathbf{m}_j^\text{t}\) denote the source and target auxiliary features for nodes \(i\) and \(j\). The raw attention score for the \(k\)-th head is computed as
\begin{equation}
    e_{i,j}^k = \langle \mathbf{a}^k, \ \text{LeakyReLU}(\mathbf{m}_i^\text{s} + \mathbf{m}_j^\text{t}) \rangle,
    \quad j \in \mathcal{N}(i),
    \label{eq:sies_raw_att}
\end{equation}
where \(\langle \cdot, \cdot \rangle\) is the inner product and \(\mathbf{a}^k\) is a learnable attention vector. Unlike standard attention mechanisms, we omit the softmax normalization to preserve the physical meaning of negative coefficients, which correspond to repulsive interactions. Such negative couplings are common in CDS literature~\cite{righetti2006design,Righetti2008PatternGW,Buono2001ModelsOC,Golubitsky2003symmetry,Pinto2012HexapodR} and are essential for generating diverse patterns.

To stabilize magnitudes and accommodate irregular topologies, we apply symmetric degree normalization akin to GCNs~\cite{Kipf2016SemiSupervisedCW}:
\begin{equation}
    \alpha_{i,j}^k = \frac{e_{i,j}^k}{\sqrt{d_i d_j}},
    \label{eq:sies_att_coeff}
\end{equation}
where \(d_i = |\mathcal{N}(i)|\) and \(d_j = |\mathcal{N}(j)|\). The source and target auxiliary features are constructed as
\begin{equation}
\mathbf{m}_i^\text{s} = \mathbf{q}_i^\text{s} + \Theta_k \mathbf{x}_i, \quad \mathbf{m}_j^\text{t} = \mathbf{q}_j^\text{t} + \Theta_k \mathbf{x}_j,
\label{eq:sies_att_features}
\end{equation}
where \(\mathbf{q}_i^\text{s}\) and \(\mathbf{q}_j^\text{t}\) are projections of the node-wise conditioning input \(\mathbf{Q}\). In synchronization control, they encode desired time-invariant phase relationships; in graph representation learning, they encode initial node features. These constructions yield the two SIES settings described below.

\subsection{SIES for Synchronization Control}

SIES for synchronization control is designed to generate diverse prescribed synchronization patterns. We specify target phase differences \(\mathbf{x}_{\text{dp}} = \{\theta_i\}_{i=1}^N\), where \(\theta_i \in [0, 2\pi]\) is the desired phase of node \(i\) relative to the reference node.

To handle periodicity, we apply circular encoding:
\begin{equation}
\mathbf{q}_i = \bigl[ \sin(\theta_i), \cos(\theta_i) \bigr]^\top \triangleq [q_{i,1}, q_{i,2}]^\top.
\label{eq:dp_encoded}
\end{equation}

The source and target encodings are then obtained via learnable projections:
\begin{equation}
\mathbf{q}_i^\text{s} = \Theta_{\text{dp}}^{\text{s},k} \mathbf{q}_i, \quad \mathbf{q}_j^\text{t} = \Theta_{\text{dp}}^{\text{t},k} \mathbf{q}_j,
\label{eq:cem_source_target}
\end{equation}
where \(\Theta_{\text{dp}}^{\text{s},k}, \Theta_{\text{dp}}^{\text{t},k} \in \mathbb{R}^{F \times 2}\) are independently learned in the default model. This independent source-target parametrization allows the influence of node \(j\) on node \(i\) to differ from the reciprocal influence of node \(i\) on node \(j\). Thus, in the synchronization-control setting, the conditioning matrix in Eq.~\eqref{eq:sies_base} is \(\mathbf{Q}=[\mathbf{q}_1,\ldots,\mathbf{q}_N]^\top\), obtained from \(\mathbf{x}_{\text{dp}}\). Substituting these encodings into the attention mechanism (Eqs.~\eqref{eq:sies_raw_att}--\eqref{eq:sies_att_features}) and coupling equations (Eqs.~\eqref{eq:sies_base} and~\eqref{eq:sies_coupling}, with $\mathcal{C}=\mathcal{C}_{\text{avg}}$) defines the full dynamical system. The model is trained via off-policy reinforcement learning (RL) to learn the coupling terms that drive the CDS toward the target synchronization patterns characterized by the desired phase differences $\mathbf{x}_{\text{dp}}$.

For the non-negative-attention control, we retain the synchronization-control architecture and replace Eq.~\eqref{eq:sies_att_coeff} with the standard softmax coefficient map
\begin{equation}
    \alpha_{i,j}^{k,\mathrm{softmax}} =
    \frac{\exp(e_{i,j}^k)}
    {\sum_{\ell \in \mathcal{N}(i)} \exp(e_{i,\ell}^k)} .
    \label{eq:cem_softmax_control}
\end{equation}
For the source-target-tying control, we retain signed attention but impose
\begin{equation}
    \Theta_{\text{dp}}^{\text{s},k}=\Theta_{\text{dp}}^{\text{t},k}.
    \label{eq:cem_tied_control}
\end{equation}
On a reciprocal graph, this tied constraint makes the task-conditioned part of the raw score exchange-symmetric; with the shared state transformation in Eq.~\eqref{eq:sies_att_features}, it yields \(e_{i,j}^k=e_{j,i}^k\) and hence symmetric degree-normalized coefficients for reciprocal edges. The default independent source and target projections do not impose this restriction.

\subsubsection{Simplified phase model of SIES for synchronization control}\label{sec:simplified_phase_model}
The synchronization-control instantiation of SIES contains multiple learnable components and is not directly amenable to analytical treatment. We therefore consider a reduced phase model under the following assumptions:
\begin{itemize}
	\item Each subsystem is a two-dimensional planar oscillator with a nearly circular limit cycle of radius one and intrinsic angular frequency $\omega_0$.
	\item The oscillators are weakly coupled, such that the coupling does not alter the limit cycle shape.
	\item The interaction graph is fully connected.
\end{itemize}
Under the first assumption, the state along the limit cycle is $\mathbf{x}_0 = [X_0, Y_0]^\top$, satisfying
$$\begin{bmatrix}
	X_0 \\
	Y_0
\end{bmatrix}
=
\begin{bmatrix}
	\cos(\varphi_i) \\
	\sin(\varphi_i)
\end{bmatrix},
\quad
X_0^2 + Y_0^2 = 1,
\label{eq:limit-cycle-states}$$
where $\varphi_i$ denotes the phase variable of the $i$-th node. The phase as a function of the state $\mathbf{x}_i = [X_i, Y_i]^\top$ is
$$\varphi_i(\mathbf{x}_i) = \arctan\left( \frac{Y_i}{X_i} \right).
\label{eq:partial}$$
The partial derivatives are
\begin{equation}
    \begin{aligned}
         \frac{\partial \varphi_i}{\partial X_i} &= -\frac{Y_i}{X_i^2 + Y_i^2},\\
         \frac{\partial \varphi_i}{\partial Y_i} &= \frac{X_i}{X_i^2 + Y_i^2}.
    \end{aligned}
\label{eq:phase_model_partial}
\end{equation}
Under the fully connected assumption, every node has degree $N-1$. Omitting the feature transformation matrices and considering a single attention head yields
\begin{equation}
    \dot{\mathbf{x}}_i = f(\mathbf{x}_i) + \frac{1}{N-1} \sum_{j \neq i} e_{i,j} \mathbf{x}_j,
\label{eq:simp-sies}
\end{equation}
where the summation uses $j \neq i$ due to the complete graph structure considered herein.
Setting $\epsilon = 1/(N-1)$ and treating $\epsilon \sum_{j \neq i} e_{i,j} \mathbf{x}_j$ as a perturbation, the phase model for weakly coupled oscillators~\cite{Kuramoto1984ChemicalOW} (see Section~5.2) yields
\begin{equation}
    \frac{d\varphi_i}{dt} = \omega_0 + \epsilon \sum_{j \neq i} Z(\phi_i) \cdot V_{i,j}(\varphi_j),
\label{eq:phase_raw}
\end{equation}
where
\begin{equation}
    Z(\phi_i) = \bigl( \nabla_{\mathbf{x}_i} \phi_i(\mathbf{x}_i) \bigr) \big|_{\mathbf{x}_i = \mathbf{x}_0(\phi_i)}
= [-\sin(\phi_i), \cos(\phi_i)]
\end{equation}
represents the phase-sensitivity vector with respect to perturbations of the state, and
\begin{equation}
    V_{i,j}(\varphi_j) = e_{i,j}
\begin{bmatrix}
	\cos(\varphi_j) \\
	\sin(\varphi_j)
\end{bmatrix}.
\end{equation}
Substituting these into Equation~\eqref{eq:phase_raw} gives
\begin{equation}
\begin{aligned}
	\frac{d\varphi_i}{dt}
	&= \omega_0 + \frac{1}{N-1} \sum_{j \neq i} e_{i,j} [-\sin(\varphi_i) \cos(\varphi_j) + \cos(\varphi_i) \sin(\varphi_j)] \\
	&= \omega_0 + \frac{1}{N-1} \sum_{j \neq i} e_{i,j} \sin(\varphi_j - \varphi_i).
\end{aligned}
\label{eq:phase_model}
\end{equation}

The reduced phase model permits a limited analytical connection to scale generalization. Theorem~\ref{theo:1} identifies a phase-dependent coupling rule that produces traveling waves for fully connected systems with different \(N\), providing an analytical counterpart to one class of patterns evaluated empirically.

\begin{theorem}
\label{theo:1}
Consider a CDS consisting of $N$ nodes that can be reduced to the following phase model:
\begin{equation}
\dot{\varphi}_i = \omega_0 + \frac{1}{N-1} \sum_{j \neq i} e_{ij} \sin(\varphi_j - \varphi_i), \quad \text{for } i=1,2,\dots,N,
\end{equation}
where $\varphi_i \in [0, 2\pi)$ denotes the phase of the $i$-th node, $\omega_0$ is the intrinsic oscillation frequency of each node, and $e_{ij} \in \mathbb{R}$ represents the coupling strength between the $i$-th and $j$-th nodes ($e_{i,j}\ne e_{j,i},\exists\ i,j$). There exists a coupling function $e_{ij} = A(\theta_i, \theta_j)$ determined solely by the desired phases $\theta_i$ and $\theta_j$ such that, when the desired phases form the traveling-wave configuration $\theta_i = \frac{2\pi (i-1)}{N}$, the same functional form $A(\theta_i, \theta_j)$ generates a traveling-wave solution for fully connected networks with $N > 4$.
\end{theorem}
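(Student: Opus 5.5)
The plan is to exhibit the coupling function explicitly and verify two things: the traveling wave is an exact phase-locked solution for every $N$, and it is linearly stable (modulo the neutral global phase shift) for every $N>4$. I take the construction already mentioned in Section~\ref{sec:generalization}, namely $A(\theta_i,\theta_j)=\cos(\theta_j-\theta_i)+\sin(\theta_j-\theta_i)$. Its sine part is antisymmetric, so $e_{ij}\neq e_{ji}$ for suitable pairs, which gives the required asymmetry. The formula involves only $\theta_i,\theta_j$ and nothing specific to $N$, so the same functional form is used for every network size.

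\textbf{Existence.} Set $\theta_i=2\pi(i-1)/N$ and $\delta_{k}=2\pi k/N$. Substitute the ansatz $\varphi_i=\Omega t+\theta_i$ into the phase model of Theorem~\ref{theo:1}. Since $\theta_j-\theta_i=\delta_{j-i}$ modulo $2\pi$, the coupling sum for node $i$ becomes
\[
\frac{1}{N-1}\sum_{k=1}^{N-1}\bigl(\cos\delta_k\sin\delta_k+\sin^2\delta_k\bigr).
\]
This sum does not depend on $i$. Using $\sum_{k=0}^{N-1}\sin(2\delta_k)=0$ and $\sum_{k=0}^{N-1}\cos(2\delta_k)=0$ (both valid for $N>2$), it equals $\frac{N}{2(N-1)}$. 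Hence $\Omega=\omega_0+\frac{N}{2(N-1)}$ is the common frequency, and the traveling wave is an exact solution.

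\textbf{Stability.} Write $\varphi_i=\Omega t+\theta_i+\psi_i$ and linearize. This gives $\dot\psi_i=\frac{1}{N-1}\sum_{j\ne i}c_{j-i}(\psi_j-\psi_i)$ with $c_k=A(0,\delta_k)\cos\delta_k=\tfrac12+\tfrac12\cos 2\delta_k+\tfrac12\sin 2\delta_k$. The linear operator is circulant, so its eigenvectors are the Fourier modes $e^{\mathrm{i} m\delta_k}$. The eigenvalues are
\[
\lambda_m=\frac{1}{N-1}\sum_{k\neq0}c_k\bigl(e^{\mathrm{i} m\delta_k}-1\bigr).
\]
For the real part I will use discrete orthogonality:
\[
\sum_{k=0}^{N-1}c_k\cos(m\delta_k)=\frac N2\,[m\equiv0]+\frac N4\,[m\equiv\pm2],
\]
where the $\sin 2\delta_k$ terms drop out. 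Subtracting the $k=0$ term $c_0=1$ and the constant $\sum_{k\ne0}c_k=\frac N2-1$ yields three cases. For $m\not\equiv0,\pm2$, $(N-1)\operatorname{Re}\lambda_m=-N/2$. For $m\equiv\pm2$, it equals $-N/4$ when $2\not\equiv-2$, and it equals $0$ when $2\equiv-2\pmod N$, i.e.\ when $N=4$. The mode $m=0$ gives $\lambda_0=0$, which corresponds to the neutral global phase shift.

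\textbf{Role of $N>4$.} The main obstacle is the bookkeeping of aliased Fourier indices, which is exactly where the hypothesis $N>4$ enters. For $N>4$ the indices $0,2,-2$ are pairwise distinct modulo $N$, so every nonzero mode has strictly negative real part. The wave is then locally asymptotically stable up to phase translation, with exponential rate at least $\frac{N}{4(N-1)}$. For $N=4$ the modes $m=2$ and $m=-2$ coincide and the aliasing produces a zero eigenvalue, which gives degeneracy. For small $N$ I would also check separately that the identities $\sum_k\sin 2\delta_k=\sum_k\cos 2\delta_k=0$ used above hold. Finally, I will note that the stability conclusion lives entirely within the reduced weak-coupling phase model.
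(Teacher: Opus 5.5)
Your proposal is correct and follows essentially the same route as the paper: the same coupling $A(\theta_i,\theta_j)=\cos(\theta_j-\theta_i)+\sin(\theta_j-\theta_i)$, a check that the traveling wave is phase-locked, and a circulant-Jacobian eigenvalue analysis in which $N>4$ keeps the Fourier indices $0,\pm2$ distinct, while $N=4$ produces an extra zero eigenvalue. The differences are minor and do not change the argument. You compute the common frequency $\Omega=\omega_0+\frac{N}{2(N-1)}$ explicitly and obtain only the real parts of the eigenvalues via cosine orthogonality, whereas the paper computes the full complex eigenvalues through the sums $S_l$; both give the same values $-\frac{N}{2(N-1)}$ and $-\frac{N}{4(N-1)}$.
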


The proof of this theorem is provided in SI Section \ref{SI_sec: Proof of Theorem 1}.

\subsection{SIES for Graph Representation Learning} \label{sec:sies_gnn_method}

For graph representation learning, the SIES coupling operator (generalized to \(F\)-dimensional states by adjusting the dimensions of the projection matrices accordingly) directly evolves node features on graph datasets. This places it within the family of CDS-inspired GNNs~\cite{Chamberlain2021GRANDGN, Rusch2022GraphCoupledON, Nguyen2023FromCO}. Given a graph \(G\) with \(N\) nodes and initial hidden features \(\mathbf{p}_i \in \mathbb{R}^F\), we initialize the SIES states as \(\mathbf{x}_{i,1}^{(0)} = \mathbf{x}_{i,2}^{(0)} = \mathbf{p}_i\), define \(\mathbf{X}=[\mathbf{x}_{1,1},\ldots,\mathbf{x}_{N,1}]^\top\) and \(\mathbf{Y}=[\mathbf{x}_{1,2},\ldots,\mathbf{x}_{N,2}]^\top\), and evolve them using damped harmonic intrinsic node dynamics defined in Eq.~\eqref{eq:intrinsic_overdamped}.

The node feature evolution follows:
\begin{equation}
\begin{aligned}
\dot{\mathbf{x}}_{i,1} &= \mathbf{x}_{i,2}, \\
\dot{\mathbf{x}}_{i,2} &= -2\zeta\omega_0 \mathbf{x}_{i,2} - \omega_0^2 \mathbf{x}_{i,1} + \left[\mathbf{F}_{\Theta}(\mathbf{Y},\mathbf{Q},G)\right]_i,
\end{aligned}
\label{eq:sies_gnn}
\end{equation}
where \(\mathbf{F}_{\Theta}\) is the same attention-based coupling operator introduced in Eq.~\eqref{eq:sies_base}-\eqref{eq:sies_coupling}, with $\mathcal{C}=\mathcal{C}_{\text{concat}}$. Here, \(\zeta, \omega_0\) denote the damping ratio and natural frequency, respectively.`.

Here, unlike the synchronization-control setting, the node-wise conditioning input is data-driven: we set \(\mathbf{q}_i = \mathbf{x}_{i,2}^{(0)}=\mathbf{p}_i\) and \(\mathbf{Q}=[\mathbf{q}_1,\ldots,\mathbf{q}_N]^\top\). The attention features become
\begin{equation}
\mathbf{m}_i^\text{s} = \Theta_{\text{dp}}^{\text{s},k} \mathbf{x}_{i,2}^{(0)} + \Theta_k \mathbf{x}_{i,2}, \quad \mathbf{m}_j^\text{t} = \Theta_{\text{dp}}^{\text{t},k} \mathbf{x}_{j,2}^{(0)} + \Theta_k \mathbf{x}_{j,2}.
\label{eq:sies_gnn_features}
\end{equation}

To compare signed coupling with established non-negative attention in graph representation learning, we implement GraphCON-GAT as the non-negative counterpart of SIES. The overdamped harmonic dynamics, state initialization, numerical integration, hidden-state construction and evaluation protocol are retained, while the attention-coefficient map is changed from signed degree normalization to the softmax map conventionally used in GAT~\cite{velivckovic2017graph}:
\begin{equation}
\begin{aligned}
\alpha_{i,j}^{k,\mathrm{SIES}} &=
    \frac{e_{i,j}^{k}}{\sqrt{d_i d_j}} \in \mathbb{R},\\
\alpha_{i,j}^{k,\mathrm{GraphCON\text{-}GAT}} &=
    \frac{\exp(e_{i,j}^{k})}{\sum_{\ell \in \mathcal{N}(i)} \exp(e_{i,\ell}^{k})}
    \in [0,1].
\end{aligned}
\label{eq:gnn_sign_control}
\end{equation}
Thus, within this matched CDS-based GNN comparison, performance differences probe the signed-attention coefficient map relative to a standard, previously established non-negative attention counterpart. We interpret this as a controlled comparison of attention operators, with signed interaction capacity being their central functional distinction.

The system is integrated using the symplectic Euler method~\cite{Hairer2010SolvingOD}:
\begin{align}
\mathbf{Y}^\text{next} &= \mathbf{Y} + \Delta t \Bigl(-2\zeta\omega_0 \mathbf{Y} - \omega_0^2 \mathbf{X} + \mathbf{F}_{\Theta}(\mathbf{Y},\mathbf{Q},G)\Bigr), \label{eq:symp_euler_y} \\
\mathbf{X}^\text{next} &= \mathbf{X} + \Delta t \cdot \mathbf{Y}^\text{next}, \label{eq:symp_euler_x}
\end{align}
where \(\mathbf{X}, \mathbf{Y} \in \mathbb{R}^{N \times F}\) collect the feature and momentum states, respectively, and \(\Delta t > 0\) is the time step. This scheme provides an explicit update for the damped second-order evolution.

\subsection{Experimental Details of SIES for Synchronization Control}
All synchronization-control experiments were conducted on a PC equipped with an NVIDIA GeForce RTX 3080 Laptop GPU (8\,GB memory), a 16-vCPU 11th Gen Intel Core i7-11800H CPU @ 2.30\,GHz, 31\,GB of RAM, and running Ubuntu 20.04. Unless otherwise specified, each subsystem follows Hopf intrinsic node dynamics (SI Eq.~\eqref{eq:intrinsic}) with \(\beta=10\), \(\omega_0=2\pi\) and \(\lambda=1\). All evaluations use a single trained SIES model with a simulation time step of $0.01$ s. The model is trained via reinforcement learning on an eight-node network of synaptically coupled oscillators (SI Fig.~\ref{fig:network_topo}b). Complete training details are provided in SI Section~\ref{app:sies_cem_rl}.

\subsubsection{Experimental details of generalization capabilities}\label{sec:method_generalization}
We provide a detailed description of the experimental setup presented in Section~\ref{sec:generalization}. We evaluate three forms of generalization for the trained SIES model: new network scales, unseen phase targets $  \mathbf{x}_{\text{dp}}  $, and intrinsic node dynamics not used during training. For each test case, we apply SIES to fully connected CDSs. The dynamical system is simulated for 500 steps, and the evaluation metrics are computed over the final 300 steps. The plotted metrics are the target-aligned order parameter $  R_{\mathrm{target}}  $ and phase RMSE (SI Eqs.~\eqref{eq:R_target} and~\eqref{eq:phase_rmse}). These metrics first account for the target phase pattern and remove an arbitrary global phase offset, thereby assessing alignment with the desired synchronization pattern rather than conventional in-phase synchrony.

For a traveling-wave target at system scale \(N\), the desired lags are \(\theta_i=2\pi(i-1)/N\). For unseen random targets, each target lag is sampled from the discretized set \(\{2\pi p/N:p=0,\ldots,N-1\}\), with the reference node fixed at zero, and \(2N\) targets are evaluated at each scale. To evaluate intrinsic-dynamics generalization, the trained SIES model (on Hopf intrinsic node dynamics, SI Eq.~\eqref{eq:intrinsic}) is evaluated with Van der Pol dynamics (SI Eq.~\eqref{eq:intrinsic_vdp}) and overdamped harmonic dynamics (SI Eq.~\eqref{eq:intrinsic_overdamped}) without retraining. The latter dynamics converge to a stable equilibrium in isolation and therefore test whether learned coupling can impose collective rhythmic behavior on intrinsically non-oscillatory agents.

\subsubsection{Experimental Details of Convergence Properties and Sparse Interaction}\label{sec:method_convergence}

We present a detailed description of the experimental setup in Section~\ref{sec:convergence}. In this experiment, we evaluate the convergence properties of the trained SIES model in comparison with three baseline models: the fully connected (FC) model~\cite{righetti2006design,Righetti2008PatternGW}, the Salamander model~\cite{ijspeert2007swimming}, and the diffusive model~\cite{Yu2016GaitGW}. The models operate on 4-node CDSs with different coupling topologies. The SIES model and the FC model employ a fully connected topology (SI Fig.~\ref{fig:network_topo}a), the diffusive model adopts a diffusive coupling topology (SI Fig.~\ref{fig:network_topo}d), and the Salamander model employs a nearest-neighbor coupling topology (SI Fig.~\ref{fig:network_topo}c). These topologies follow the default configurations reported in the original literature. The precise definitions and parameter settings of the baseline models appear in SI Section~\ref{sec:comp_sies_cem}.

To evaluate convergence, we record the time required for each model to reach one of three target synchronous modes (trot, walk, and bound). We generate each initial state by independently sampling the phase of every node uniformly randomly on the unit circle and apply an identical set of 1000 such initial states to all models to ensure a fair comparison. The coupling functions that realize the three target synchronous modes for the baseline models are provided in SI Section~\ref{sec:comp_sies_cem}. We evaluate the target-aligned order parameter $  R_{\mathrm{target}}  $ of the simulated CDS states and define the convergence time \(t^*\) as the earliest time at which this order parameter reaches or exceeds 0.999:
\begin{equation}
    t^* = \inf \left\{ t \ge 0 \;\middle|\; R_{\mathrm{target}}(t) \ge 0.999 \right\},
    \label{eq:convergence_time}
\end{equation}
To facilitate visualization, we additionally compute the phase distance metric (PD, defined in SI Equation~\eqref{eq:PD}), which quantifies the separation between each initial state and the corresponding target synchronization mode.

In the sparse-interaction analysis, we generate interaction graphs using the small-world network model~\cite{watts1998collective}. We first construct regular ring lattices by setting the rewiring probability $  p_r = 0  $. For each network scale $  N \in \{20, 40, 60, 80, 100, 120\}  $ and each nearest-neighbor count $  k_n = 2, 4, \dots, N-2  $ (in steps of 2), every node connects symmetrically to its $  k_n/2  $ nearest neighbors on either side of a circulant ring, yielding an undirected $  k_n  $-regular lattice.
For the rewiring experiments, we fix $  k_n = 0.3N  $ (rounded to the nearest even integer) and vary rewiring probability $  p_r  $ from 0 to 0.8. For each $  (N, p_r)  $ combination, we generate 20 independent realizations by starting from the corresponding regular ring lattice and rewiring each edge to a uniformly chosen non-adjacent, non-self node with probability $  p_r  $. Every generated graph is evaluated on $  2N  $ randomly sampled target phase patterns. For each evaluation, the dynamical system is simulated for 500 steps, and the synchronization metrics $  R_{\mathrm{target}}  $ and phase RMSE are computed over the final 300 steps. This simulation and evaluation protocol is consistent with that used in the generalization experiments.

\subsubsection{Experimental details of mechanistic ablation}\label{sec:method-ablation}

The mechanistic ablation in SI Section~\ref{sec:mechanistic_cem} compares full SIES with three matched variants on an 8-node fully connected oscillator system. The variants are organized as a staged control of the SIES coupling mechanism. First, the \textbf{softmax attention} variant retains the projected architecture but replaces signed degree-normalized coefficients with the standard non-negative softmax map in Eq.~\eqref{eq:cem_softmax_control}. Second, the \textbf{signed source-target-tied attention} variant retains signed degree-normalized coefficients but ties source and target task-conditioning projections as in Eq.~\eqref{eq:cem_tied_control}. Third, the \textbf{signed source-target-conditioned state-space aggregation} variant retains signed attention and independent source-target conditioning, but removes high-dimensional state projection and aggregates messages in the original oscillator state space:
\begin{equation}
\dot{\mathbf{x}}_i = f(\mathbf{x}_i) + \sum_{j \in \mathcal{N}(i)} \alpha_{i,j} \mathbf{x}_j\quad \text{for } i=1,2,\dots,N,
\label{eq:s-cpg-att}
\end{equation}
where the coefficients \(\alpha_{i,j}\) are computed by the same signed attention rule with separate source-target conditioning as in full SIES.

All four models use 8 attention heads and feature dimension 64 where applicable. Saved checkpoints from 50,000 to 6,000,000 training steps are evaluated at 50,000-step intervals. Training uses four prescribed 8-node phase configurations, expressed in cycles and multiplied by \(2\pi\) in the oscillator objective:
\[
\begin{aligned}
&[0,0.5,0.25,0.75,0.5,0,0.75,0.25],\\
&[0,0,0.75,0.75,0.5,0.5,0.25,0.25],\\
&[0,0.5,0.5,0,0,0.5,0.5,0],\\
&[0,0,0.5,0.5,0,0,0.5,0.5].
\end{aligned}
\]
Generalization is evaluated on four held-out diagnostic targets:
\[
\begin{aligned}
T_1&=[0,0.125,0.25,0.375,0.5,0.625,0.75,0.875],\\
T_2&=[0,0.25,0.5,0.75,0,0.25,0.5,0.75],\\
T_3&=[0,0,0,0,0.5,0.5,0.5,0.5],\\
T_4&=[0,0.25,0.5,0.75,0.125,0.375,0.625,0.875].
\end{aligned}
\]
For each saved checkpoint, model and target configuration, we simulate identical initial conditions for 500 steps. Target-aligned order parameter \(R_{\mathrm{target}}\) and phase RMSE are computed over the final 300 steps (SI Eqs.~\eqref{eq:R_target} and~\eqref{eq:phase_rmse}).

\subsection{Experimental Details of Robotic Locomotion Using SIES} \label{sec:method_robot_locomotion}

We provide a detailed description of the experimental setup presented in Section~\ref{sec:robot-exp}. For multi-legged locomotion, we apply SIES to fully connected CDS networks to generate rhythmic signals for the robot legs. Each oscillator node controls the rhythm of one leg. The output of node $  i  $ is converted to a normalized progression signal $  p_i = \Phi_{\mathbf{x}_i}/(2\pi)  $, where $  \Phi_{\mathbf{x}_i} \in [0, 2\pi)  $ denotes the phase angle of the state vector $  \mathbf{x}_i  $. This maps the oscillator state to the interval $  [0, 1)  $. Foot trajectories are then generated from the progression signal $  p_i  $, the duty cycle $  \mu \in (0,1)  $, which defines the stance and swing phases, and the turning parameter $  \tau \in [0,1]  $ (SI Section~\ref{app:motion_gen}).
For the fault-tolerant locomotion experiment, SIES runs on a remote PC (Intel i7-11800H, 32 GB RAM, RTX 3080 Laptop GPU) at 50 Hz and transmits phase signals to the robot via ROS over Wi-Fi. This setup is used because the PyTorch Geometric dependencies are not available on the onboard Jetson Orin NX processor. All trajectory planning and low-level motor control are executed locally on the Jetson Orin NX at 50 Hz. The robot receives phase-lag targets and duty cycles according to the number of active legs: a tripod gait with $  \mathbf{x}_{\text{dp}} = [0, \pi, 0, \pi, 0, \pi]  $ and $  \mu = 0.5  $ for six legs; a metachronal gait with $  \mathbf{x}_{\text{dp}} = [0, 2\pi/5, 4\pi/5, 6\pi/5, 8\pi/5]  $ and $  \mu = 0.85  $ for five legs; and a trot gait with $  \mathbf{x}_{\text{dp}} = [0, \pi, \pi, 0]  $ and $  \mu = 0.85  $ for four legs.

\section{Experimental Details of SIES for Graph Representation Learning}
 All experiments were conducted on a computing server equipped with an NVIDIA A800 GPU (80\,GB memory), a 56-vCPU Intel Xeon Gold 6348 CPU @ 2.60\,GHz, 120\,GB of RAM, and running Ubuntu 22.04. Node classification performance is evaluated using accuracy for the multi-class datasets (\textbf{roman-empire}, \textbf{Amazon-ratings}, \textbf{chameleon}, and \textbf{squirrel}) and ROC-AUC for the binary classification datasets (\textbf{Minesweeper} and \textbf{Questions}). Formal definitions of these metrics are provided in the SI Section~\ref{app:eval_gnn}. All data splits and evaluation protocols are consistent with those in the original sources. For each run, the test accuracy (or ROC-AUC score) of the model achieving the highest validation performance is reported. We present mean values together with standard deviations over multiple independent runs that employ different train/validation/test splits, consistent with the experimental protocol in~\cite{Luo2024ClassicGA}.  See complete training details in SI Section \ref{app:sies_gnn}.

\subsection{Datasets} \label{sec:method_gnn_datasets}

We employ six heterophilous graph datasets introduced in~\cite{Platonov2023ACL} to evaluate the node classification performance of SIES for graph representation learning. These datasets serve as standard benchmarks for assessing graph neural networks on node classification tasks in low-homophily regimes. They are widely adopted to examine model generalization when neighboring nodes tend to belong to different classes and encompass diverse real-world and synthetic domains exhibiting varied structural properties (e.g., sparsity, clustering coefficients, and diameter). \textbf{roman-empire} is derived from the English Wikipedia article on the ``Roman Empire''. Nodes represent words in the article, and edges are constructed according to sequential word order or syntactic dependency trees; node labels indicate the syntactic role of each word. \textbf{Amazon-ratings} is constructed from Amazon's co-purchase network, where nodes correspond to products (e.g., books, CDs, DVDs) and edges connect items that are frequently purchased together. The task is to predict the discretized average user rating of each product. \textbf{Minesweeper} is a synthetic dataset simulating the grid structure of the classic Minesweeper game. Nodes represent grid cells, edges connect adjacent cells, and labels indicate whether a cell contains a mine; node features encode local neighborhood information. The \textbf{Questions} dataset originates from the Yandex Q\&A platform and models user-interaction networks. Nodes represent users, edges capture interactions formed through question-answering activities, and the task is formulated as binary classification of user attributes or behaviors. \textbf{chameleon} and \textbf{squirrel} are classic Wikipedia-based heterophilous networks centered on the ``chameleon'' and ``squirrel'' topics, respectively, originally proposed in~\cite{Rozemberczki2019MultiscaleAN}. Nodes represent articles on the given topic, edges denote hyperlinks between articles, and labels correspond to article categories or traffic levels. Because the original data splits of these datasets contain overlapping nodes between the training and test sets, we adopt the filtered versions provided in~\cite{Platonov2023ACL} that remove such overlaps. Detailed statistics and structural characteristics of the datasets are provided in the SI Table~\ref{tab:dataset_stats}.

\subsection{Configurations of the Tested GNN Models} \label{sec:method_gnn_models}

For the enhanced GNN baselines (GCN$^*$, GAT$^*$, and GraphSAGE$^*$), the network architectures and optimal hyperparameter settings for each dataset are consistent with those provided in the original repository of the work~\cite{Luo2024ClassicGA}.

For the CDS-based GNN models, namely GraphCON, KuramotoGNN, and SIES, the model structures are adapted from the heterophilous portion of the GraphCON repository~\cite{Rusch2022GraphCoupledON}. To evaluate the core capabilities of the CDS-based GNNs themselves, these architectures were deliberately constructed without incorporating the additional enhancement techniques employed in the enhanced GNN baselines, such as residual connections, batch normalization, and inter-layer feature dropout.

For the direct attention-form comparison, GraphCON-GAT and SIES share the damped harmonic dynamical backbone, state initialization, integration procedure, hidden dimensionality and evaluation protocol. GraphCON-GAT uses the standard softmax-normalized attention coefficients of GAT, whereas SIES uses the signed degree-normalized coefficient map in Eq.~\eqref{eq:gnn_sign_control}.

Hyperparameter optimization for the CDS-based models across different datasets was performed using Optuna~\cite{Akiba2019OptunaAN}, a Bayesian optimization framework. The search ranges for the basic hyperparameters---namely hidden dimension, number of attention heads, dropout rates, learning rate, and weight decay---are consistent with those defined in~\cite{Luo2024ClassicGA}. Empirical results indicate that edge dropout improves the classification accuracy of CDS-based GNN models. Accordingly, the search range for the edge dropout rate was set to the same range as the feature-dropout rate defined in~\cite{Luo2024ClassicGA}. 

Additionally, the CDS-based GNN models include hyperparameters related to the construction of the underlying dynamical systems and the numerical integrator. The following discrete search ranges were used for these parameters: $K_s \in \{0.5, 1.0, 1.5, \dots, 12.0\}$, $\omega \in \{0.5, 1.0, 1.5, \dots, 5.0\}$, $\zeta \in \{0.5, 1.0, 1.5, \dots, 5.0\}$, $\Delta t \in \{0.1, 0.2, 0.3, 0.4, 0.5\}$, and the number of integration steps in $\{1, 2, 3, \dots, 20\}$.

\section*{Data Availability}
\begin{sloppypar}
The heterophilous graph benchmark datasets used for the graph representation-learning experiments are publicly available through the data resources described in Ref.~\cite{Platonov2023ACL}. 
\end{sloppypar}

\section*{Code Availability} 
Code for model training, evaluation and centipede robot simulation in PyBullet is openly available under the MIT License at \href{https://github.com/JiChern/SwarmBioOscillators/tree/main}{https://github.com/JiChern/SwarmBioOscillators/tree/main}.

\bibliography{sn-bibliography}

\section*{Acknowledgments}
This work was supported by the ``Pioneering and Leading Goose + X'' Science and Technology Program of Zhejiang (Grant No. 2025C01052); the Key Research and Development Project of China National Tobacco Corporation (Grant No. 110202402018); and the National Natural Science Foundation of China (No. 62573382, No. 12171431, No. 62373323, No. 72342025). We thank Chao Zheng for assistance with robot modeling. We also thank Yunhan He for valuable suggestions on the manuscript.

\section*{Author Contributions}
J.Chen performed the theoretical analysis and wrote the original manuscript, assisted by C.Xu and L.Fan. J.Chen and S.Chen implemented the algorithms and conducted simulation experiments. J.Chen and C.Gong conducted physical robot experiments. All authors contributed to discussing results and reviewing the manuscript. C.Xu, L.Fan, and S.Chen supervised the research.

\newpage
\setcounter{figure}{0}
\renewcommand{\figurename}{Extended Data Fig.}
\section*{Extended Data}

\begin{figure}[th!]
\centering
\includegraphics[width=\textwidth]{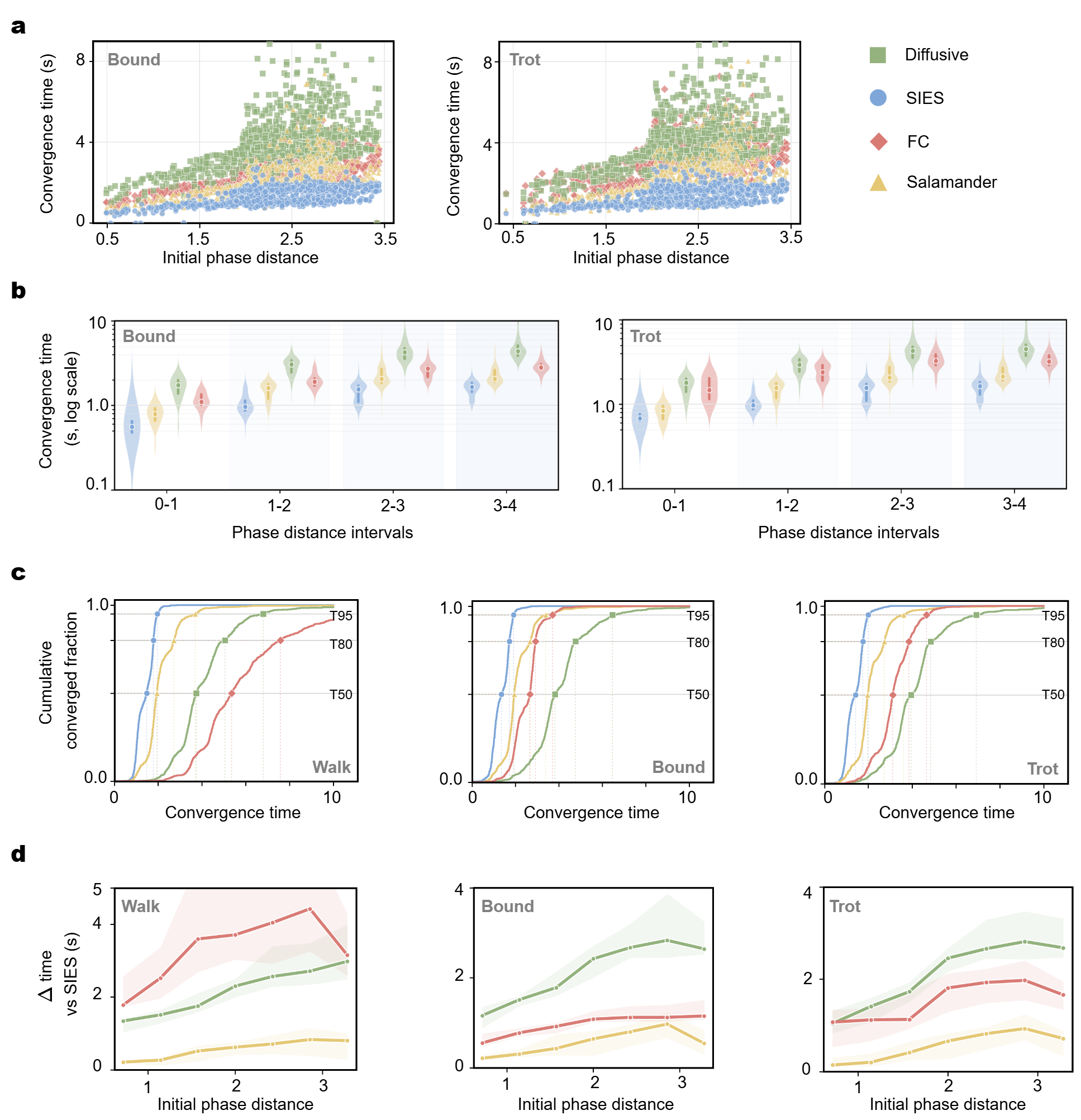}
    \caption{\textbf{Additional results from the convergence evaluation.}
    \textbf{(a)} Trial-level convergence-time scatter plots for the bound and trot targets.
    \textbf{(b)} Convergence-time distributions for the bound and trot targets across phase-distance intervals (log scale).
    \textbf{(c)} Cumulative fraction of converged trials over time for the walk, bound and trot targets, from left to right. Markers indicate the \(50\%\), \(80\%\) and \(95\%\) levels.
    \textbf{(d)} Baseline-minus-SIES convergence-time differences across phase-distance intervals for the walk, bound and trot targets, from left to right.}

\label{fig:basin_att_ext}
\end{figure}

\begin{figure}[th!]
\centering
\includegraphics[width=.9\textwidth]{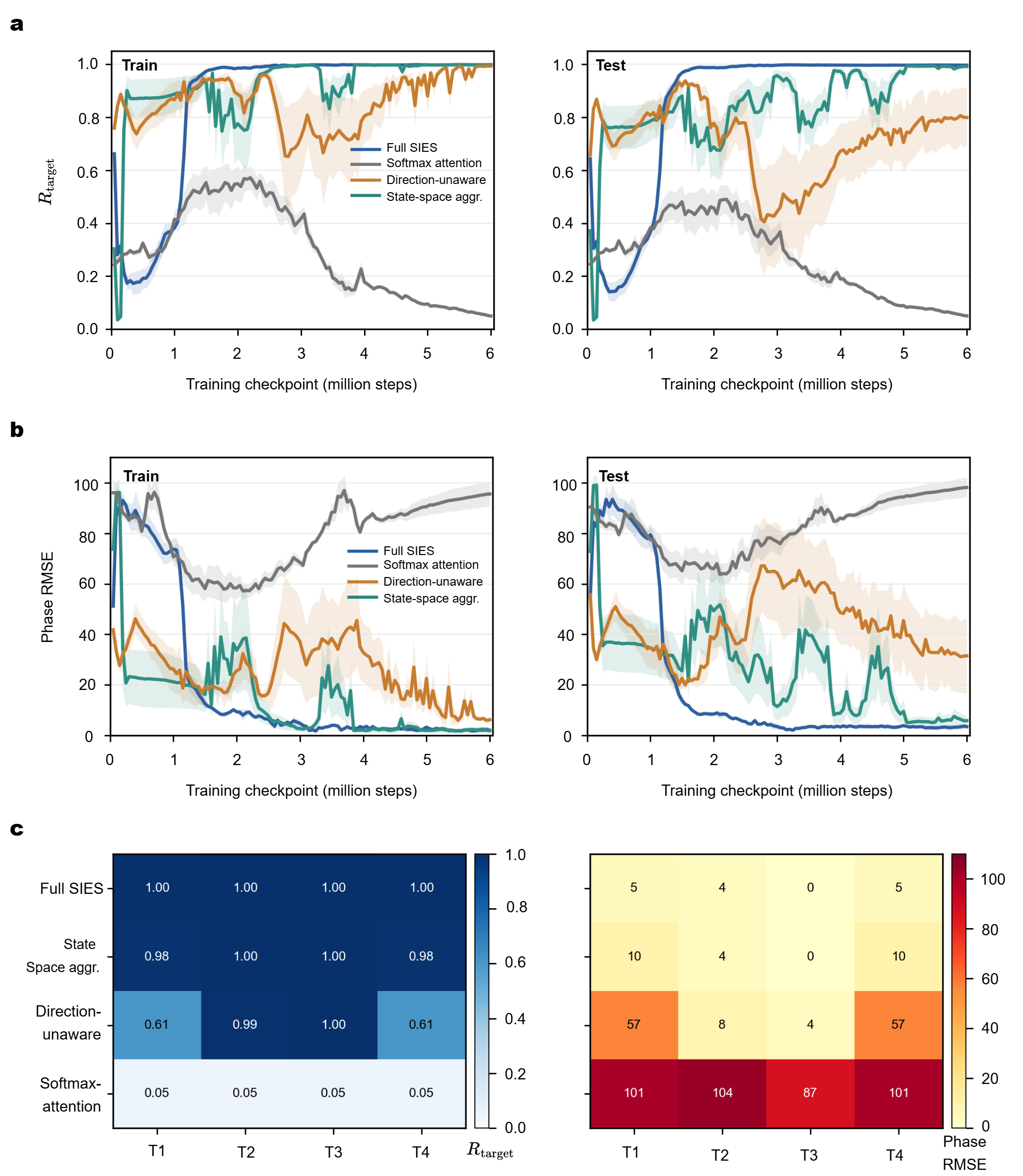}
\caption{\textbf{Mechanistic ablation of SIES for synchronization control across training checkpoints.} \textbf{(a)} Macro-averaged target-aligned order parameter on training and held-out test target sets for full SIES and three matched variants. \textbf{(b)} Corresponding phase RMSE. Shading denotes SEM across the four target modes. \textbf{(c)} Final-checkpoint held-out target aligned order parameter and phase RMSE for four diagnostic targets (T1$\sim$T4).}
\label{fig:ablation_checkpoint}
\end{figure}

\newpage

{
\clearpage

\newpage
\renewcommand{\figurename}{Supplementary Figure}
\renewcommand{\tablename}{Supplementary Table}
\renewcommand\theequation{\Alph{section}.\arabic{equation}}
\setcounter{equation}{0}
\setcounter{figure}{0}
\setcounter{table}{0}
\pagenumbering{arabic}
\renewcommand{\thepage}{S\arabic{page}}
\setcounter{page}{0}

\thispagestyle{empty}
\appendix
\renewcommand{\theHsection}{SI.\Alph{section}}
\renewcommand{\theHsubsection}{SI.\Alph{section}.\arabic{subsection}}
\renewcommand{\theHequation}{SI.\Alph{section}.\arabic{equation}}
\renewcommand{\theHfigure}{SI.\arabic{figure}}
\renewcommand{\theHtable}{SI.\arabic{table}}
\addcontentsline{toc}{section}{Supplementary Information} 
\part{Supplementary Information} 

\begin{Huge}
    Swarm-Inspired Generation of Collective Behaviors in Graph Dynamical Systems
\end{Huge}

\newpage

Here, we provide all notation used in the Supplementary Information in Supplementary Table~\ref{tab: definitions of notations}.


\newcolumntype{D}{>{\centering\arraybackslash} m{3cm} }
\newcolumntype{X}{>{\centering\arraybackslash} m{12cm} }

\begin{table}[htbp]
    \centering
    \renewcommand{\arraystretch}{1.5}
    \begin{tabular}{|D|X|}
    \hline
    $\mathbb{R}$, $\mathbb{R}_{\mathbb{+}}$ & real numbers, positive real numbers\\
    \hline
    $\mathbb{R}^n$,  $\mathbb{R}^{m \times n}$&the $n$-dimensional Euclidean space, the vector space of all $m \times n$ real matrices \\
    \hline
    $O_n$, $I_n$&  the $n$-dimensional zero matrix, the $n$-dimensional identity matrix \\
    \hline
    $||\cdot||$ &the standard Euclidean norm (or induced matrix norm)\\
    \hline
    $\theta_i$ & $\theta_i\in[0,2\pi)$, desired phase lag of the $i$-th node relative to the first node \\
    \hline
    $\phi_i$ &  $\phi_i\in[0,2\pi)$, actual phase lag of the $i$-th node relative to the first node\\
    \hline
    $\varphi_i$ & $\varphi_i\in[0,2\pi)$, actual phase angle of the $i$-th node \\
    \hline
    $\Re(c)$ & real part of a complex number $c$\\
    \hline
    $\Im(c)$ & imaginary part of a complex number $c$\\

     \hline
    \end{tabular}
    \caption{Definitions of notation used in the Supplementary Information}
    \label{tab: definitions of notations}
\end{table}

\newpage

\setcounter{figure}{0}
\renewcommand{\figurename}{Supplementary Fig.}
\section{Additional Figures}

\begin{figure}[th!]
\centering
\includegraphics[width=\textwidth]{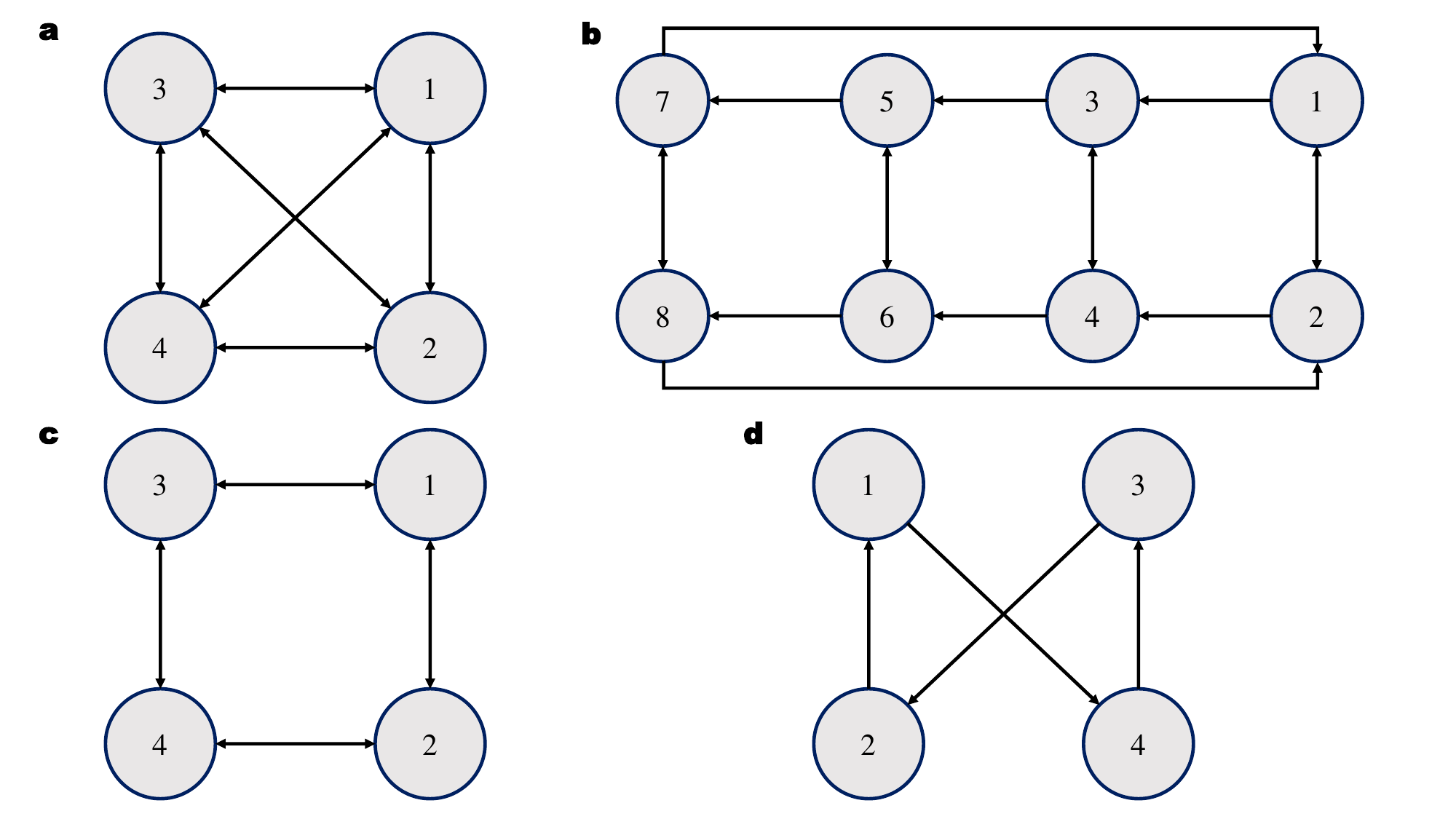}
    \caption{\textbf{Topologies of coupled oscillator networks used for training and testing.} \textbf{(a)} A 4-node fully connected architecture used for constructing the fully coupled baseline (Section~\ref{sec:convergence}) \textbf{(b)} An 8-node synaptically connected architecture used to train the SIES model for synchronization control (Sections~\ref{sec:generalization}--\ref{sec:robot-exp}). \textbf{(c)} A 4-node nearest-neighbor architecture used for the Salamander baseline. (Section~\ref{sec:convergence}) \textbf{(d)} A 4-node diffusively coupled architecture used for the Diffusive baseline. (Section~\ref{sec:convergence})}

\label{fig:network_topo}
\end{figure}

\begin{figure}[th!]
\centering
\includegraphics[width=\textwidth]{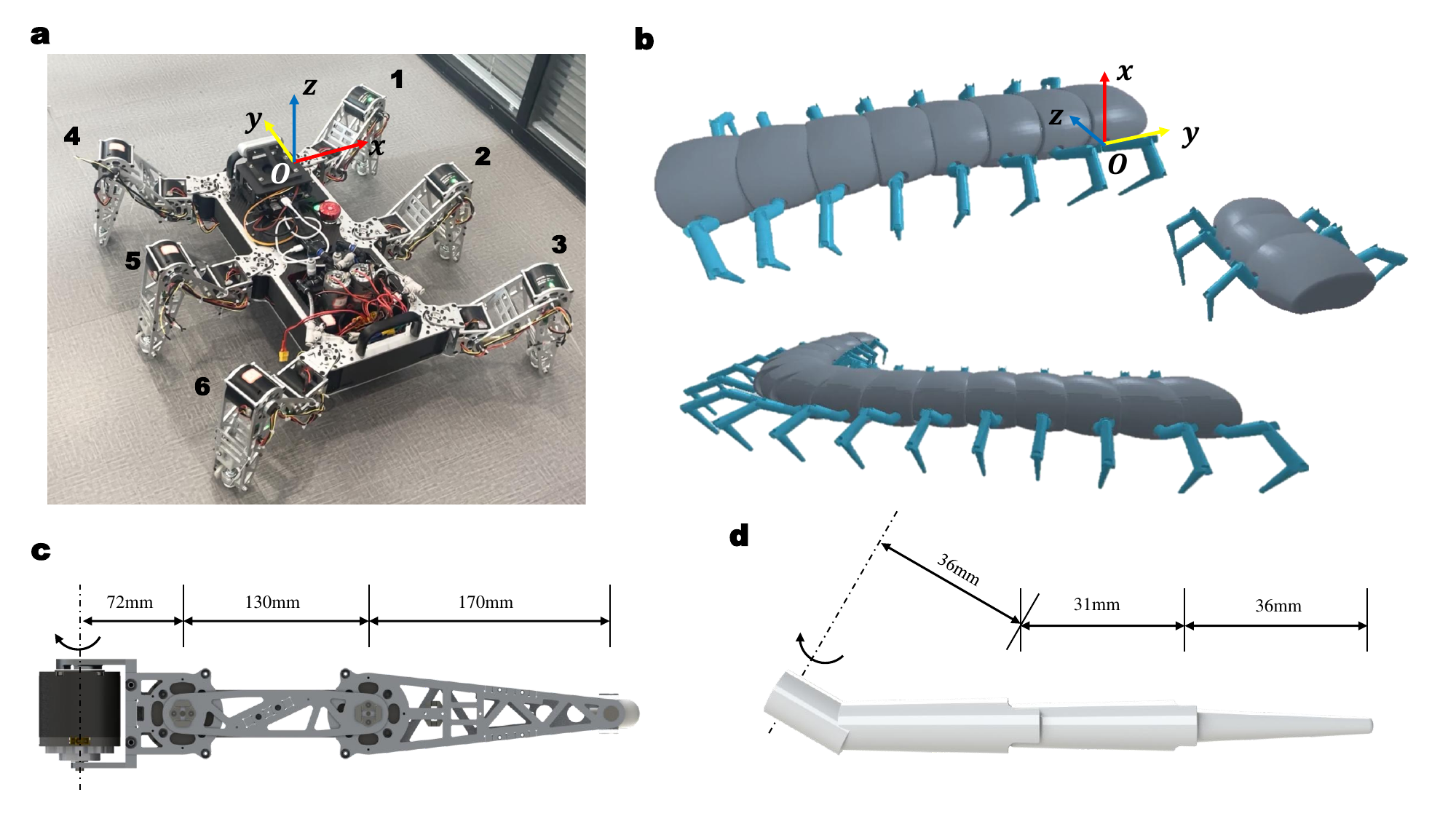}
    \caption{\textbf{An illustration of the physical robot platform and simulated robot model used in this research.} \textbf{(a)} shows the hexapod robot platform. \textbf{(b)} shows the PyBullet virtual simulation model of the centipede robot. \textbf{(c)} illustrates the key dimensions of the hexapod robot leg. \textbf{(d)} illustrates the key dimensions of the centipede robot leg.}
\label{fig:robot_models}
\end{figure}

\section{Intrinsic Node Dynamics}
We evaluate three intrinsic node dynamics in SIES for synchronization control: the Hopf oscillator, the Van der Pol oscillator and the overdamped harmonic oscillator. For a state vector $\mathbf{x}\in\mathbb{R}^2$, the corresponding dynamics $\dot{\mathbf{x}}$ are defined below.
\subsection{Hopf oscillator}
The Hopf oscillator is governed by the following differential equations:
\begin{equation}
\begin{aligned}
    \dot{\mathbf{x}}_{i,1} &= \beta (\lambda - \mathbf{x}_{i,1}^2 - \mathbf{x}_{i,2}^2)\mathbf{x}_{i,1} - \omega_0 \mathbf{x}_{i,2}, \\
    \dot{\mathbf{x}}_{i,2} &= \beta (\lambda - \mathbf{x}_{i,1}^2 - \mathbf{x}_{i,2}^2)\mathbf{x}_{i,2} + \omega_0 \mathbf{x}_{i,1},
\end{aligned}
\label{eq:intrinsic}
\end{equation}
where $\beta$ governs convergence toward the limit cycle, $\lambda$ determines its squared radius, and $\omega_0$ denotes the oscillation frequency. When $\beta>0$ and $\lambda>0$, the system exhibits a stable circular limit cycle of radius $\sqrt{\lambda}$.
\subsection{Van der Pol oscillator}
The Van der Pol oscillator is governed by the following differential equations:
\begin{equation}
\begin{aligned}
    \dot{\mathbf{x}}_{i,1}&=\omega_0\mathbf{x}_{i,2}+\varepsilon\mathbf{x}_{i,1}\left(1-\frac{b\mathbf{x}_{i,1}^2}{3}\right),\\
     \dot{\mathbf{x}}_{i,2}&=-\omega_0\mathbf{x}_{i,1},
\end{aligned}
    \label{eq:intrinsic_vdp}
\end{equation}
where $\omega_0=2\pi$ is defined as in the basic settings, $\varepsilon=1$ represents the scalar coefficient controlling the strength of nonlinear damping, and $b=4$ serves as the amplitude adjustment parameter. 

\subsection{Damped Harmonic Oscillator}

The dynamics of the damped harmonic oscillator are governed by
\begin{equation}
\begin{aligned}
\dot{\mathbf{x}}_{i,1} &= \mathbf{x}_{i,2}, \\
\dot{\mathbf{x}}_{i,2} &= -2\zeta\omega_0 \mathbf{x}_{i,2} - \omega_0^2 \mathbf{x}_{i,1},
\end{aligned}
\label{eq:intrinsic_overdamped}
\end{equation}
where $\zeta$ denotes the damping ratio and $\omega_0$ is the natural frequency. In the intrinsic node dynamics generalization experiments, $\zeta$ is set to 1.5 (exceeding the critical damping threshold of 1) and $\omega_0$ is set to $2\pi$. Under these parameters, the system is overdamped and the associated system matrix possesses two distinct negative real eigenvalues. Consequently, the origin is globally exponentially stable, and each oscillator state converges exponentially to the equilibrium at the origin without exhibiting any oscillatory behavior. This configuration provides a stringent test of whether SIES can impose collective rhythmic behavior on agents whose intrinsic node dynamics are non-oscillatory. The same damped harmonic node dynamics are also used when SIES is applied to graph representation learning.

\section{Proof of Theorem \ref{theo:1}}\label{SI_sec: Proof of Theorem 1}

\subsection{Basic Formulas}

\noindent\textbf{Geometric series sum formula}

A geometric series is a sum of terms where each term after the first is obtained by multiplying the previous term by a constant ratio $ r $. For a series with first term $ a $, common ratio $ r $, and terms from $ k=0 $ to $ N $, the sum is given by:
\begin{equation}
     \sum_{k=0}^{N} a r^k = a \frac{1 - r^{N+1}}{1 - r}, \quad \text{for } r \neq 1. 
     \label{eq:geo_sum}
\end{equation}
\newline

\begin{lemma}
Let $c_0, c_1, \dots, c_{N-1}$ be a sequence of numbers defined as follows: for $m = 1, 2, \dots, N-1$,
\begin{equation*}
    c_m = \frac{1}{N-1} g\left( \frac{2\pi m}{N} \right), 
\end{equation*}
where $g(\delta) = \cos^2 \delta + \sin \delta \cos \delta$, and
\begin{equation*}
    c_0 = -\sum_{m=1}^{N-1} c_m. 
\end{equation*}
Then, for $N > 2$, the following holds:
\begin{equation*}
    c_0 = -\frac{N-2}{2(N-1)}. 
\end{equation*}
\label{lemma:1}
\end{lemma}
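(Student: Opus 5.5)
The plan is to compute $\sum_{m=1}^{N-1} g(2\pi m/N)$ in closed form. We rewrite $g$ via double-angle identities so that only sums of $\cos$ and $\sin$ of multiples of $2\pi/N$ remain, and evaluate those with the geometric series formula \eqref{eq:geo_sum} applied to roots of unity. First, write
\[
g(\delta)=\cos^2\delta+\sin\delta\cos\delta=\tfrac12+\tfrac12\cos 2\delta+\tfrac12\sin 2\delta .
\]
Then
\[
\sum_{m=1}^{N-1} g\!\left(\tfrac{2\pi m}{N}\right)=\tfrac{N-1}{2}+\tfrac12\sum_{m=1}^{N-1}\cos\tfrac{4\pi m}{N}+\tfrac12\sum_{m=1}^{N-1}\sin\tfrac{4\pi m}{N}.
\]

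The key step is to evaluate $S=\sum_{m=0}^{N-1} r^m$ with $r=e^{i4\pi/N}$. For $N>2$ we have $r\neq 1$, since $4\pi/N$ is not a multiple of $2\pi$. Also $r^N=e^{i4\pi}=1$, so \eqref{eq:geo_sum} gives $S=0$. Removing the $m=0$ term gives
\[
\sum_{m=1}^{N-1} r^m=-1 .
\]
Taking real and imaginary parts ($\Re$, $\Im$), the cosine sum equals $-1$ and the sine sum equals $0$.

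Substituting these values, the full sum is $\tfrac{N-1}{2}-\tfrac12=\tfrac{N-2}{2}$. Dividing by $N-1$ and applying the definition $c_0=-\sum_{m=1}^{N-1} c_m$ gives
\[
c_0=-\frac{N-2}{2(N-1)} .
\]

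The only delicate point is the hypothesis $N>2$. It is exactly what guarantees $r\ne1$. For $N=2$ we would have $r=e^{i2\pi}=1$, the geometric formula would not apply, and the cosine sum would equal $N-1$ rather than $-1$. So in the write-up I will explicitly check that $4\pi/N\in(0,2\pi)$ for $N\ge3$. Everything else is routine trigonometric bookkeeping.
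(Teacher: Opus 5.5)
Your proposal is correct and follows the paper's proof essentially step for step: the same double-angle rewriting of $g$, the same root-of-unity evaluation $\sum_{m=1}^{N-1} e^{\mathrm{i}4\pi m/N}=-1$, and the same use of $N>2$ to ensure $e^{\mathrm{i}4\pi/N}\neq 1$. The only difference is cosmetic: the paper sums from $1$ to $N$ and subtracts $\omega^N$, while you sum from $0$ to $N-1$ and drop the $m=0$ term; your explicit check that $4\pi/N\in(0,2\pi)$ for $N\ge 3$ justifies the hypothesis a little more carefully than the paper does.
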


\noindent\textbf{Proof.}

\noindent We begin by simplifying the function $g(\delta)$:
\begin{equation}
    g(\delta)=\cos^2 \delta + \sin \delta \cos \delta = \frac{1+\cos2\delta}{2} + \frac{1}{2}\sin2\delta=\frac{1}{2}+\frac{1}{2}\cos2\delta+\frac{1}{2}\sin2\delta.
    \label{eq:g_func}
\end{equation}
Next, we compute the sum $\sum_{m=1}^{N-1} g\left( \frac{2\pi m}{N} \right)$. Using the expression for $g(\delta)$, we have:
\begin{equation}
    \begin{aligned}
            \sum_{m=1}^{N-1} g\left(\frac{2\pi m}{N}\right) &= \sum_{m=1}^{N-1} \left[ \frac{1}{2} + \frac{1}{2} \cos\left(\frac{4\pi m}{N} \right) + \frac{1}{2} \sin\left(\frac{4\pi m}{N} \right) \right]\\
            &=\frac{N-1}{2} + \frac{1}{2}\sum_{m=1}^{N-1}\cos\left(\frac{4\pi m}{N} \right) + \frac{1}{2}\sum_{m=1}^{N-1}\sin\left(\frac{4\pi m}{N}\right).
    \end{aligned}
    \label{eq:lemma1summation}
\end{equation}
To evaluate the sums $\sum_{m=1}^{N-1} \cos\left( \frac{4\pi m}{N} \right)$ and $\sum_{m=1}^{N-1} \sin\left( \frac{4\pi m}{N} \right)$, let $\omega = e^{\mathrm{i} \frac{4\pi}{N}}$, where $\mathrm{i}=\sqrt{-1}$ is the imaginary unit. Then:
\begin{equation}
    \sum_{m=1}^{N-1}\omega^m = \sum_{m=1}^{N-1}e^{i \frac{4\pi m}{N}}=\sum_{m=1}^{N-1}\left[ \cos\left(\frac{4\pi m}{N}\right)+i\sin\left(\frac{4\pi m}{N}\right)\right].
\end{equation}
Here, $\sum_{m=1}^{N-1} \cos\left( \frac{4\pi m}{N} \right)$ is the real part of $\sum_{m=1}^{N-1} \omega^m$, and $\sum_{m=1}^{N-1} \sin\left( \frac{4\pi m}{N} \right)$ is the imaginary part. Using the geometric series formula (Equation~\eqref{eq:geo_sum}), we compute:
\begin{equation}
     \sum_{m=1}^{N}\omega^m=\sum_{m=0}^{N}\omega^m - \omega^0 = \frac{1-\omega^{N+1}}{1-\omega}-1=\omega\frac{1-\omega^{N}}{1-\omega}.
\end{equation}
For $N > 2$, we have $\omega \neq 1$ and $\omega^N = e^{\mathrm{i} 4\pi} = 1$, so:
\begin{equation}
    \sum_{m=1}^{N-1}\omega^m = \sum_{m=1}^{N}\omega^m - \omega^N = 0-  e^{i4\pi} = -1.
\end{equation}
It follows that:
\begin{equation}
\begin{aligned}
    \sum_{m=1}^{N-1} \cos(\frac{4\pi m}{N}) &= \Re(\sum_{m=1}^{N-1}\omega^m) = -1,\\
    \sum_{m=1}^{N-1} \sin(\frac{4\pi m}{N}) &= \Im(\sum_{m=1}^{N-1}\omega^m) = 0.
\end{aligned}
\end{equation}
Substituting these into the expression for the summation (Equation~\eqref{eq:lemma1summation}), we obtain:
\begin{equation}
     \sum_{m=1}^{N-1} g\left(\frac{2\pi m}{N}\right) = \frac{N-1}{2}-\frac{1}{2} = \frac{N-2}{2}.
\end{equation}
Finally, we compute $c_0$:
\begin{align}
    c_0 = -\sum_{m=1}^{N-1}c_m = -\frac{1}{N-1}\sum_{m=1}^{N-1}g\left(\frac{2\pi m}{N}\right)=-\frac{N-2}{2(N-1)}.
\end{align}
This completes the proof. \qed
\newline

\begin{lemma}\label{lemma:2}
    Let $N > 4$ be an integer and $k \in \{0, 1, \ldots, N-1\}$. Then the following three conditions form a complete partition of all values of $k$:
\begin{enumerate}
        \item \( k \not\equiv 2 \pmod{N} \) and \( k \not\equiv N-2 \pmod{N} \),
        \item \( k = 2 \),
        \item \( k = N-2 \).
    \end{enumerate}
\end{lemma}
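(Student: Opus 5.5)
The plan is to argue directly from the hypothesis \(N>4\) by showing that the two distinguished values \(2\) and \(N-2\) are genuinely distinct elements of \(\{0,1,\ldots,N-1\}\). Once that is established, the three conditions are pairwise disjoint and their union is the whole index set.

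\textbf{Step 1: both values are valid residues.} Because \(N>4\), we have \(0\le 2\le N-1\), so \(2\) is its own least residue modulo \(N\). Likewise \(2\le N-2\le N-1\), so \(N-2\) is its own least residue. Hence, for \(k\in\{0,\ldots,N-1\}\), the congruence \(k\equiv 2 \pmod N\) holds exactly when \(k=2\). In the same way, \(k\equiv N-2\pmod N\) holds exactly when \(k=N-2\). This is because two elements of a complete residue system that are congruent must be equal.

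\textbf{Step 2: the two values are distinct.} The equation \(2=N-2\) would force \(N=4\), which is excluded. Equivalently, \(2\equiv N-2\pmod N\) would mean \(N\mid 4\), which is impossible for \(N>4\). So conditions 2 and 3 never hold simultaneously.

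\textbf{Step 3: the partition.} Condition 1 is, by Step 1, exactly the statement that \(k\ne 2\) and \(k\ne N-2\). It is therefore the complement of the union of conditions 2 and 3. By Step 2, that union is a disjoint union. Together these give three pairwise disjoint sets whose union is \(\{0,\ldots,N-1\}\).

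There is no real obstacle here. The only delicate point is Step 2, since it is precisely where \(N>4\) is needed; the hypothesis also guarantees \(N-2>2\), so the two special indices are separated.

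This case split is presumably what drives the later proof of Theorem~\ref{theo:1}. There, sums of the form \(\sum_m c_m \omega^{mk}\), with \(\omega\) an \(N\)-th root of unity, vanish unless \(k\equiv\pm 2\), because \(g\) contains only the frequencies \(0\) and \(\pm2\) (see Eq.~\eqref{eq:g_func}). Keeping the cases \(k=2\) and \(k=N-2\) separate is what allows those eigenvalues to be computed individually.
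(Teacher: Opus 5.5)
Your proof is correct and is essentially the paper's argument. The paper likewise turns the congruences into equalities because each element of $\{0,\ldots,N-1\}$ is its own residue (you also check explicitly that $2$ and $N-2$ lie in this range, which the paper leaves implicit), and it gets disjointness of conditions 2 and 3 from $N-2\ge 3>2$. Your closing aside is tangential and slightly inaccurate as stated: for generic $k$ the eigenvalue sums $\sum_m c_m\omega^{mk}$ do not vanish but equal $-N/(2(N-1))$, and only the contributions from the $\cos 2\delta$ and $\sin 2\delta$ parts of $g$ single out $k\equiv\pm 2$; this does not affect the lemma.
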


\noindent\textbf{Proof.}

We first show that these three conditions cover all \( k \in \{0, 1, \ldots, N-1\} \). For any such \( k \), either it satisfies condition (1), or it does not. If not, then by De Morgan's law:
\begin{equation}
    k \equiv 2 \pmod{N} \quad \text{or} \quad k \equiv N-2 \pmod{N}.
    \label{eq:de_morgan}
\end{equation}
Since \( k \) is already in \( \{0, 1, \ldots, N-1\} \), it is its own residue modulo \( N \). Thus, \( k \equiv 2 \pmod{N} \) if and only if \( k = 2 \) (condition (2)), and \( k \equiv N-2 \pmod{N} \) if and only if \( k = N-2 \) (condition (3)). Hence, the three conditions are exhaustive.

Next, we show they are mutually exclusive. Conditions (2) and (3) are clearly disjoint from (1), since (1) explicitly excludes \( k = 2 \) and \( k = N-2 \). It remains to check that (2) and (3) are disjoint: i.e., \( 2 \neq N-2 \). Since \( N > 4 \), we have \( N-2 \geq 3 > 2 \), so indeed \( 2 \neq N-2 \).

Thus, the three conditions form a complete partition of \( \{0, 1, \ldots, N-1\} \). \qed

\subsection{Proof}

To prove Theorem~\ref{theo:1}, we construct a coupling function $A(\theta_i, \theta_j)$ that does not depend explicitly on $N$ and for which the traveling-wave configuration is orbitally stable in fully connected networks with $N>4$. Motivated by the rotational symmetry of this target state, we consider the periodic coupling function
\begin{equation}
	e_{i,j} = A(\theta_i, \theta_j) = \cos(\theta_j - \theta_i)+\sin(\theta_j-\theta_i),
\end{equation}
where $\theta_i=\frac{2\pi (i-1)}{N}, \ i=1,2,\dots, N,$ for desired traveling wave configuration. We demonstrate that when this function is applied to all coupling weights, the phase lags of the emergent waveforms converge to the traveling wave configurations. Under the assumption of $A(\theta_i, \theta_j)$, according to Equation~\eqref{eq:phase_model}, the phase representation of the simplified SIES model is expressed as
\begin{equation}
	\frac{d\varphi_i}{dt} = \omega_0 + \frac{1}{N-1} \sum_{j \ne i} \left[\cos(\theta_j - \theta_i) + \sin(\theta_j-\theta_i)\right] \sin(\varphi_j - \varphi_i).
	\label{eq:phase_model_proof}
\end{equation}

To establish that the actual phase lag $\phi_i = \theta_i \text{,}$ ($i=1,\cdots,N$), we first show that $\varphi_i = \theta_i + \varphi_1 \text{,} \ \forall\ i=1,\cdots,N$ is an equilibrium (phase-locked solution) for this coupled oscillator system. Substituting this condition into Equation~\eqref{eq:phase_model_proof} yields:
\begin{equation}
    \begin{aligned}
	\frac{d\varphi_i}{dt} &= \omega_0 + \frac{1}{N-1} \sum_{j \ne i} \left[\cos(\theta_j - \theta_i) + \sin(\theta_j-\theta_i)\right] \sin(\theta_j+\cancel{\varphi_1} - \theta_i-\cancel{\varphi_1})\\
    &=\omega_0 + \frac{1}{N-1} \sum_{j \ne i} \left[\cos(\theta_j - \theta_i)\sin(\theta_j-\theta_i)+\sin^2(\theta_j-\theta_i)\right ].
    \end{aligned}
\end{equation}
Let $\delta_{ji} = \theta_j - \theta_i = \frac{2\pi (j - i)}{N}$ (modulo $2\pi$). For any fixed $i$, the set of $\{\delta_{ji} \mid j \neq i\}$ is $\left\{ \frac{2\pi k}{N} \mid k = 1, \dots, N-1 \right\}$, which is independent of $i$ due to the cyclic nature of the phases. Then the summation in the equation is identical for every $i$ since it sums over the same set of nonzero multiples of $\frac{2\pi}{N}$. Thus, $\frac{d\varphi_i}{dt}$ equals some constant value (say $\omega_0 + C$) for all $i$.
Since all phases advance at the same angular speed $\omega_0 + C$, the differences $\varphi_j(t) - \varphi_i(t)$ remain constant over time, confirming that this configuration is preserved as a traveling wave. Therefore, $\varphi_i = \theta_i + \varphi_1 \text{,} \ \forall\ i=1,\cdots,N$ is indeed an equilibrium for coupled oscillator system.

To prove the convergence to this equilibrium, we must demonstrate its stability. We analyze the Jacobian matrix evaluated at this equilibrium. It is given as:
\begin{equation}
    J = \begin{bmatrix} -\frac{1}{N-1} \sum_{j\ne 1} e_{1,j}\cos(\theta_j-\theta_1) & \frac{1}{N-1}e_{1,2}\cos(\theta_2-\theta_1) & \dots & \frac{1}{N-1}e_{1,N}\cos(\theta_N-\theta_1)\\
    \frac{1}{N-1}e_{2,1}\cos(\theta_1-\theta_2) & -\frac{1}{N-1} \sum_{j\ne 2}e_{2,j}\cos(\theta_j-\theta_2) & \dots & \frac{1}{N-1}e_{2,N}\cos(\theta_N-\theta_2)\\
    \frac{1}{N-1}e_{3,1}\cos(\theta_1-\theta_3) & \frac{1}{N-1}e_{3,2}\cos(\theta_2-\theta_3) & \dots & \frac{1}{N-1}e_{3,N}\cos(\theta_N-\theta_3) \\
    \vdots & \vdots & & \vdots\\
    \frac{1}{N-1}e_{N,1}\cos(\theta_1-\theta_N) & \frac{1}{N-1}e_{N,2}\cos(\theta_2-\theta_N) & \dots & -\frac{1}{N-1} \sum_{j\ne N}e_{N,j}\cos(\theta_j-\theta_N)
    \end{bmatrix}.
\end{equation}

Now we prove that $J$ is a circulant matrix. The off-diagonal element of the Jacobian matrix is:
\begin{equation}
    J_{i,j} = \frac{1}{N-1}e_{i,j}\cos(\theta_j-\theta_i),\quad i\ne j,
\end{equation}
Let $\Delta = \theta_j-\theta_i=(j-i)\frac{2\pi}{N}$, then
\begin{equation}
    J_{i,j} = \frac{1}{N-1}[\cos^2(\Delta)+\sin(\Delta)\cos(\Delta)] = \frac{1}{N-1} g(\Delta),\quad i\ne j.
    \label{eq:prof1_jij}
\end{equation}
Here, we can further simplify $g(\Delta)$ using double-angle formulas as 
\begin{equation}
    g(\Delta) = \cos^2(\Delta)+\sin(\Delta)\cos(\Delta) = \frac{1}{2} + \frac{\cos(2\Delta)}{2} + \frac{\sin(2\Delta)}{2}.
    \label{eq:prof1_defg}
\end{equation}
As cosine and sine are functions with a period of $2\pi$, then $g(\Delta)=g(\Delta+2k\pi)$ for any $k\in \mathbb{Z}$, which means that the value of $g(\Delta)$ only depends on $(j-i) \mod N$. This shows that the off-diagonal elements of $J$ satisfy the property of circulant matrix.

The diagonal element of the Jacobian matrix is:
\begin{equation}
    J_{i,i} = -\frac{1}{N-1}\sum_{j\ne i}e_{i,j} \cos(\theta_j-\theta_i)=-\sum_{j\ne i} J_{i,j}.
    \label{eq:prof1_jii}
\end{equation}
Thus, $J_{i,i}$ is the summation of all off-diagonal elements of the $i$-th row (multiplied by a constant of $-1$). Due to the circulant property of $J_{i,j}$, the summation is equal to a real number $c_0$ for all $i$. Therefore, $J_{i,i}$ is circulant and $J$ is a circulant matrix.

Now evaluate the eigenvalues of the $J$ matrix. For a circulant matrix $J$, `The eigenvalues are obtained by taking the discrete Fourier transform (DFT) of its first row. Let the elements of the first row of the Jacobian matrix be $ c_0, c_1,\dots,c_m, \dots, c_{N-1} $, based on Equations~\eqref{eq:prof1_jij}, \eqref{eq:prof1_defg} and \eqref{eq:prof1_jii}, the row elements are expressed as:
\begin{equation}
    \begin{aligned}
    c_m &= \frac{1}{N-1}g\left( \frac{2\pi m}{N} \right), \quad m=1,2,\dots, N-1,\\
    c_0 &= -\sum_{m=1}^{N-1} c_m = -\frac{N-2}{2(N-1)}.
    \end{aligned}
    \label{eq:prof1_row}
\end{equation}
Here, the value of $c_0$ is derived according to Lemma~\ref{lemma:1}. The eigenvalues of $J$ are expressed using DFT:
\begin{equation}
    \lambda _{k}=\sum_{m=0}^{N-1}c_m \omega^{km}.
    \label{eq:prof1_lambda}
\end{equation}
where $\omega=e^{\mathrm{i}\frac{2\pi}{N}}$ is a primitive $N$-th root of unity, $\mathrm{i}=\sqrt{-1}$ is the imaginary unit.
\newline

\noindent\textbf{(A) Existence of zero eigenvalue:}
As $J$ is a circulant matrix and the sum of the elements on each row is zero, the all-ones vector $\mathbf{1}=[1,1,\dots,1]^\top$ is an eigenvector, its corresponding eigenvalue is zero. More specifically, we can calculate $\lambda_0$ according to Equation~\eqref{eq:prof1_lambda} as:
\begin{equation}
    \lambda _{0}=\sum_{m=0}^{N-1}c_m = 0.
\end{equation}
\newline

\noindent\textbf{(B) Non-zero $k$ (Showing $\lambda_k \neq 0$ and $\Re(\lambda_k) < 0$):} For $k\ne 0$, according to Equation~\eqref{eq:prof1_lambda} we have
\begin{equation}
    \lambda_k = c_0 + \sum_{m=1}^{N-1} c_m \omega^{km}.
\end{equation}
Substituting Equation~\eqref{eq:prof1_row} and Equation~\eqref{eq:prof1_defg} gives:
\begin{equation}
\begin{aligned}
    \lambda_k &= -\frac{N-2}{2(N-1)} + \sum_{m=1}^{N-1} \frac{1}{N-1}g\left( \frac{2\pi m}{N} \right) \omega^{km}\\
              &= -\frac{N-2}{2(N-1)} + \frac{1}{N-1} \sum_{m=1}^{N-1} \left( \frac{1}{2} + \frac{1}{2} \cos \frac{4\pi m}{N} + \frac{1}{2} \sin \frac{4\pi m}{N} \right) \omega^{k m}\\
              &=-\frac{N-2}{2(N-1)} + \frac{1}{N-1} \left[ \frac{1}{2} S_k + \frac{1}{2} \sum_{m=1}^{N-1} \cos \frac{4\pi m}{N} \cdot \omega^{k m} + \frac{1}{2} \sum_{m=1}^{N-1} \sin \frac{4\pi m}{N} \cdot \omega^{k m} \right],
\end{aligned}
\label{prof1_lambda_long}
\end{equation}
where $S_k = \sum_{m=1}^{N-1} \omega^{k m}$. With Euler's formula for trigonometric functions, we have
\begin{equation}
    \cos \frac{4\pi m}{N} = \frac{\omega^{2m} +\omega^{-2m}}{2}, \quad \sin \frac{4\pi m}{N} = \frac{\omega^{2m} - \omega^{-2m}}{2i}, 
    \label{eq:prof1_euler}
\end{equation}
Substituting this equation into~\eqref{prof1_lambda_long}, we have
\begin{equation}
    \begin{aligned}
    \lambda_k &= -\frac{N-2}{2(N-1)} + \frac{1}{N-1} \left[ \frac{1}{2} S_k + \frac{1}{4} \sum_{m=1}^{N-1} (\omega^{(k+2)m}+\omega^{(k-2)m}) - \frac{\mathrm{i}}{4} \sum_{m=1}^{N-1} (\omega^{(k+2)m}-\omega^{(k-2)m}) \right]\\
              &= -\frac{N-2}{2(N-1)} + \frac{1}{2(N-1)} \left[ S_k + \frac{1}{2} (S_{k+2}+S_{k-2}) - \frac{\mathrm{i}}{2}(S_{k+2}-S_{k-2}) \right] \\
              &= -\frac{N-2}{2(N-1)} +  \frac{1}{2(N-1)}(A+B+C).
    \end{aligned}
    \label{eq:lambdak}
\end{equation}
According to the geometric series sum formula, $\sum_{m=0}^{N-1} \omega^{k m}$ is expressed as
\begin{equation}
    \sum_{m=0}^{N-1} r^m = \frac{1 - r^N}{1 - r}, \quad r = \omega^k.
    \label{eq:prof1_geometric}
\end{equation}
Here, $r^N=\omega^{Nk}=e^{\mathrm{i}\frac{Nk2\pi}{N}}=1$, then Equation~\eqref{eq:prof1_geometric} is expressed as:
\begin{equation}
    \sum_{m=0}^{N-1} r^m = \frac{1 - 1}{1 - r} = 0.
\end{equation}
And this equation can be rewritten as:
\begin{equation}
    1+\sum_{m=1}^{N-1} r^m = 0.
\end{equation}
Therefore $\sum_{m=1}^{N-1} r^m=S_k=A=-1$. 

Now we analyze $B$ and $C$. By our definition,
\begin{equation}
    \begin{aligned}
        S_{k+2} &= \sum_{m=1}^{N-1} \omega^{(k+2) m} = \sum_{m=1}^{N-1} e^{\mathrm{i}\frac{(k+2)m2\pi}{N}},\\
        S_{k-2} &= \sum_{m=1}^{N-1} \omega^{(k-2) m} = \sum_{m=1}^{N-1} e^{\mathrm{i}\frac{(k-2)m2\pi}{N}},\\
        B &= \frac{1}{2} (S_{k+2}+S_{k-2}), \\
        C &= -\frac{i}{2}(S_{k+2}-S_{k-2}),
    \end{aligned}
    \label{eq:BC}
\end{equation}
where $ S_l = \sum_{m=1}^{N-1} \omega^{l m} $ and $ N > 4 $. The term $ S_l $ is the sum of powers of the $ N $-th root of unity:
\begin{equation}
    S_l = \sum_{m=1}^{N-1} \omega^{l m} = \sum_{m=1}^{N-1} e^{\mathrm{i} \frac{2\pi l m}{N} }.
\end{equation}
This sum depends on whether $ l \equiv 0 \pmod{N} $:
\begin{itemize}
    \item if $ l \equiv 0 \pmod{N} $, then $\omega^l = 1$, and:
    \begin{equation}
        S_l = \sum_{m=1}^{N-1} 1 = N-1.\\
        \label{eq:S_l1}
    \end{equation}\\
    \item If $ l \not\equiv 0 \pmod{N} $, then $\omega^l \neq 1$, and according to Equation~\ref{eq:geo_sum}:
    $$\sum_{m=0}^{N-1} \omega^{l m} = 1 + \sum_{m=1}^{N-1} \omega^{l m} = 0,$$
    because the full geometric series sums to zero (since $\omega^l \neq 1$ and $(\omega^l)^N = 1$). Thus:
    \begin{equation}
        S_l = \sum_{m=1}^{N-1} \omega^{l m} = -1.
        \label{eq:S_l2}
    \end{equation}
\end{itemize}

The properties of B and C depend on the value of $k$ (where $k = 0, 1, \dots, N-1$, but we focus on $k \neq 0$ since $\lambda_0 = 0$). Below is a detailed analysis, assuming $N > 4$ to avoid modular degeneracies. Define $S_l=\sum_{m=1}^{N-1}\omega^{lm}$. Using the geometric-series formula, let $r = \omega^l = e^{\mathrm{i}\frac{2\pi l}{N}}$. Then,
\begin{equation}
    S_l = \sum_{m=1}^{N-1} r^m = r \frac{1 - r^{N-1}}{1 - r},
\end{equation}
since $r^N=1$, $r^{N-1}=r^{-1}$, so
\begin{equation}
    S_l = \sum_{m=1}^{N-1} r^m = r \frac{1 - r^{-1}}{1 - r}=-1.
\end{equation}

According to Lemma~\ref{lemma:2}, for $k=\{1,2,\dots,N\}$ and $N>4$, the three conditions (1) $k \not\equiv 2 \pmod{N}$ and $k \not\equiv N-2 \pmod{N}$, (2) $k = 2$ and (3) $k = N-2$ form a complete partition of all values of $k$. Therefore, we analyze $\lambda_k$ by cases based on these three conditions.

\noindent\textbf{General Case: }  $k \not\equiv 2 \pmod{N}$ and $k \not\equiv N-2 \pmod{N}$

In the general case, for $N>4$, neither $ k+2 $ nor $ k-2 $ is congruent to 0 modulo $ N $. Thus, according to Equation~\eqref{eq:S_l2}, $ S_{k+2} = -1 $ (because $ k+2 \not\equiv 0 \pmod{N} $) $S_{k-2} = -1$ (because $ k-2 \not\equiv 0 \pmod{N} $). According to Equation~\eqref{eq:BC}, we compute $B=-1$, $C=0$. Therefore, according to Equation~\eqref{eq:lambdak} the real and the imaginary parts of the eigenvalue $\lambda_k$ are obtained as (since $N > 4 > 1$):
\begin{equation}
    \begin{aligned}
        \Re(\lambda_k) &= -\frac{N-2}{2(N-1)} + \frac{1}{2(N-1)} (-1 - 1) = -\frac{N-2}{2(N-1)} - \frac{1}{N-1} = -\frac{N}{2(N-1)} < 0,\\
        \Im(\lambda_k) &= 0.
    \end{aligned}
\end{equation}
So $\lambda_k$ is real and negative in this general case of $k$.
\newline

\noindent\textbf{Special case:} $k = 2$ (when $k-2 \equiv 0 \pmod{N}$)

For $k=2$, $k+2=4$, $k-2=0$. According to Equation~\eqref{eq:S_l1}-\eqref{eq:S_l2}, $S_{k+2}=-1$, $S_{k-2}=N-1$. Based on Equation~\eqref{eq:BC}, we compute $B=\frac{N-2}{2}$ and $C = \frac{iN}{2}$. Therefore, according to Equation~\eqref{eq:lambdak} the real and the imaginary part of the eigenvalue $\lambda_2$ is obtained as (since $N > 4 > 1$):
\begin{equation}
    \begin{aligned}
        \Re(\lambda_2) &= -\frac{N}{4(N-1)} < 0,\\
        \Im(\lambda_2) &= \frac{1}{2(N-1)} \cdot \frac{N}{2} = \frac{N}{4(N-1)} > 0.
    \end{aligned}
\end{equation}
Thus, $\lambda_2$ is complex with negative real part and non-zero imaginary part.
\newline

\noindent\textbf{Special case:} $k = N-2$ (when $k+2 \equiv 0 \pmod{N}$)

For $k=N-2$, $k+2=N$, $k-2=N-4$. According to Equation~\eqref{eq:S_l1}-\eqref{eq:S_l2}, $S_{k+2}=N-1$, $S_{k-2}=-1$. Based on Equation~\eqref{eq:BC}, we compute $B=\frac{N-2}{2}$ and $C = -\frac{iN}{2}$.  Therefore, according to Equation~\eqref{eq:lambdak} the real and the imaginary part of the eigenvalue $\lambda_{N-2}$ is obtained as (since $N > 4 > 1$):
\begin{equation}
    \begin{aligned}
        \Re(\lambda_{N-2}) &= -\frac{N}{4(N-1)} < 0,\\
        \Im(\lambda_{N-2}) &= -\frac{N}{4(N-1)} < 0.
    \end{aligned}
\end{equation}
Thus, $\lambda_{N-2}$ is complex with negative real part and a non-zero imaginary part.
\newline

In conclusion, for all $k \neq 0$, $\Re(\lambda_k) < 0$. In the general case, $\lambda_k$ is real and negative. In the special cases ($k=2, N-2$, distinct for $N > 4$), $\lambda_k$ has non-zero imaginary part and negative real part. No other $k$ produces $\lambda_k = 0$, as the properties of $B$ and $C$ ensure non-zero values in all cases (no configuration yields $A + B + C$ that cancels the formula to exactly 0 for $k \neq 0$). Thus, there is exactly one eigenvalue equal to 0 ($k=0$), and all others have negative real parts.
\newline

\noindent\textbf{Stability of the equilibrium $\varphi_i=\theta_i+\varphi_1$:}
The single zero eigenvalue corresponds to the eigenvector ($\mathbf{1} = [1, 1, \dots, 1]^\top$), representing a uniform phase shift across all oscillators. Perturbations in this direction neither grow nor decay, reflecting the rotational symmetry of the system, where the equilibrium lies on a one-dimensional manifold of equivalent states. All other eigenvalues have negative real parts, ensuring that perturbations affecting relative phases decay exponentially. Consequently, the equilibrium is locally asymptotically stable in directions transverse to this symmetry. Thus, the system exhibits orbital stability at ($\varphi_i = \theta_i + \varphi_1$), with trajectories near the equilibrium manifold converging to a phase-shifted version of the traveling wave configuration.
\newline

\noindent\textbf{The issue at $N = 4$}

When $N = 4$, $N-2 = 2$, so the two special cases coincide at a single $k = 2$. In this overlapping case:
$k+2 = 4$, $4\equiv 0 \pmod{4}$. According to Equation~\eqref{eq:S_l1}, $S_{k+2} = N-1=3$. Also, $k-2 = 0 \equiv 0 \pmod{4}$. Therefore $S_{k-2} = N-1=3$.
Thus, $B = \frac{1}{2}(3 + 3) = 3$, and $C = -\frac{i}{2}(3 - 3) = 0$.
Plugging into the formula~\eqref{eq:lambdak}:
\begin{equation}
    \lambda_2 = -\frac{4-2}{2(4-1)} + \frac{1}{2(4-1)} (-1 + 3 + 0) = -\frac{2}{6} + \frac{1}{6} (2) = -\frac{1}{3} + \frac{1}{3} = 0.
\end{equation}

This creates a second zero eigenvalue ($\lambda_2 = 0$), so the zero eigenvalue has multiplicity at least two and the target traveling wave is not locally asymptotically stable in all transverse directions.
\newline

\noindent\textbf{Why $N > 4$ resolves this}

For $N > 4$, $2 \not\equiv N-2 \pmod{N}$ (since $N-4 > 0$ and not a multiple of $N$).
The special cases are distinct: $k=2$ and $k=N-2$ are separate, each producing a complex $\lambda_k$ with non-zero imaginary part and negative real part (no additional zeros).
No overlaps or trivial sums occur that force extra zeros, and all non-zero $\lambda_k$ have $\Re(\lambda_k) < 0$. In summary, $N > 4$ prevents the mathematical degeneracy at $k=2 \equiv N-2 \pmod{N}$, ensuring the eigenvalue spectrum behaves as claimed. \qed

\section{Training of SIES for Synchronization Control} \label{app:sies_cem_rl}

In the realm of GNNs, traditional mini-batching approaches falter due to the irregular and interconnected nature of graph data. In our RL approach, leveraging the PyTorch Geometric \cite{fey2019fast} implementation of the advanced mini-batching technique \cite{advminibatch}, we construct a mini-batch from the replay buffer $\mathcal{R}$ by building a ``giant graph'' through the diagonal stacking of adjacency matrices and the concatenation of node features across multiple isolated subgraphs to train the GNN policy. This ensures that message-passing operations remain confined within individual graphs without introducing cross-graph interference or padding overhead. Because convergence of a CDS can depend on initial conditions, each RL training episode starts from independently sampled oscillator states on the unit circle. This exposes the policy to diverse initial phases during optimization.

The training is conducted on an 8-node synaptically connected oscillator system (SI Fig.~\ref{fig:network_topo}b). The SIES model is configured with \( K = 8 \) attention heads and a feature dimension of \( F = 64 \). Node features are processed through a multi-head signed attention mechanism operating in the 64-dimensional feature space. A final dense linear layer with input dimension 64 and output dimension 2 (without non-linearity) was applied to generate the model output. The model was trained to achieve four distinct target phase configurations:
\begin{equation}
    \begin{aligned}
        \mathbf{x}_{\text{dp}}^{(1)} &= [0, \pi, 0, \pi, 0, \pi, 0, \pi], \\
        \mathbf{x}_{\text{dp}}^{(2)} &= [0, \pi, \pi, 0, 0, \pi, \pi, 0], \\
        \mathbf{x}_{\text{dp}}^{(3)} &= [0, 0, \pi, \pi, 0, 0, \pi, \pi], \\
        \mathbf{x}_{\text{dp}}^{(4)} &= [0, \pi, \pi/2, 3\pi/2, \pi, 0, 3\pi/2, \pi/2].
    \end{aligned}
\end{equation}
These targets correspond to different synchronization patterns, including alternating anti-phase, grouped anti-phase, and traveling-wave-like phase relationships. The model is trained on an NVIDIA GeForce RTX 3080 GPU. All experiments for synchronization control were performed using the same trained model without task-specific retraining.

\subsection{Off-policy reinforcement learning framework}
We employ an off-policy reinforcement learning (RL) framework to train SIES. While we do not examine the specific mechanisms by which off-policy RL constructs networks or optimizes parameters, we focus on the general workflow of such methods and their application to our model training.

Each training cycle in off-policy RL consists of two distinct phases: data collection and optimization. These phases operate independently but are connected through a replay buffer $\mathcal{R}$. During the data collection phase, the agent interacts with the environment, and the resulting interaction data are stored in $\mathcal{R}$. In the optimization phase, a mini-batch of data $B$ is randomly sampled from $\mathcal{R}$ to execute the optimization process specified by the RL algorithm, thereby enhancing the performance of the agent.

When implementing off-policy RL for our model, it is necessary to define the environment and agent within our specific context. Our investigation focuses on coupled 2D planar oscillators, where the state of the $i$-th subsystem in Equation~\eqref{eq:sies_base} is defined as $\mathbf{x}_i = [x_{i,1}, x_{i,2}]^\top$. The node connectivity of the coupled system is determined by a graph adjacency matrix $A$. 
\newline

\noindent \textbf{RL agent:} The RL agent comprises two components: a policy, which generates the action $\mathbf{a}$ based on the observation $\mathbf{o}$, and an RL algorithm that optimizes this policy network (including auxiliary networks such as Q-networks). This section provides a detailed description of the first component.

We define the action produced by the policy network as the stacked coupling output of Eq.~\eqref{eq:sies_base}, namely \(\mathbf{a}=[\mathbf{a}_1,\ldots,\mathbf{a}_N]^\top=\mathbf{F}_{\Theta}(\mathbf{X},\mathbf{Q},G)\). In the synchronization-control setting, \(\mathbf{Q}\) is obtained by encoding the desired phase lags \(\mathbf{x}_{\text{dp}}\) using Eq.~\eqref{eq:dp_encoded}. Because the message-passing operator also relies on the graph structure, the graph \(G\), represented computationally by its adjacency matrix \(A\), is supplied as an input to the policy. Thus, the GNN policy is formally expressed as:
\begin{equation}
    \mathbf{a} = \mathbf{F}_{\Theta}(\mathbf{X},\mathbf{Q},G),
    \label{eq:g_cpg_actor}
\end{equation}
where \(\Theta\) denotes all trainable parameters of the coupling operator and policy.
\newline
\newline
\noindent \textbf{RL environment:} At each step of the data collection process, an RL environment generates the observation $\mathbf{o}$ and the reward. The observation is defined as the concatenation of the system states and the encoded desired phases, expressed as
\begin{equation}
\mathbf{o} = \Bigl[ \underbrace{x_{1,1}, x_{1,2}, \dots, x_{N,1}, x_{N,2}}_{\text{system states}}, \underbrace{q_{1,1}, q_{1,2}, \dots, q_{N,1}, q_{N,2}}_{\text{encoded desired phase lags}} \Bigr],
\label{eq:obs}
\end{equation}
where $x_{i,1}$ and $x_{i,2}$ are the two components of the state for the $i$-th node, and $q_{i,1}$ and $q_{i,2}$ are the two encoded desired phase lags of the $i$-th node with respect to the first node, as defined in Equation~\eqref{eq:dp_encoded}.

Once the action for each node is obtained based on the GNN policy, the first derivative of the states is calculated based on Equation~\eqref{eq:sies_base}, and the next state of the system $\mathbf{x}^{\text{next}}_i = [x^{\text{next}}_{i,1}, x^{\text{next}}_{i,2}]^\top$ is computed via numerical integration through time $\Delta t$:
\begin{equation}
\mathbf{x}^{\text{next}}_i = \mathbf{x}_i + \dot{\mathbf{x}}_i \Delta t.
\end{equation}

We assume that the desired phase lags remain constant within an episode; thus, the next observation of the system is given by:
\begin{equation}
\mathbf{o}^{\text{next}} = \Bigl[ x^{\text{next}}_{1,1}, x^{\text{next}}_{1,2}, \dots, x^{\text{next}}_{N,1}, x^{\text{next}}_{N,2}, q_{1,1}, q_{1,2}, \dots, q_{N,1}, q_{N,2} \Bigr].
\label{eq:obs_next}
\end{equation}

After the environment (the coupled oscillator system) transitions based on the action, the reward function $r(\mathbf{o}^{\text{next}})$ is defined to quantify the negative cumulative discrepancy between the desired and actual phase lags in the system:
\begin{equation}
r = -\sum_{i=1}^{N} \sum_{j=1}^{N} \left\| R\left( \theta_i - \theta_j \right) \mathbf{x}_j^{\text{next}} - \mathbf{x}_i^{\text{next}} \right\|,
\label{eq:reward}
\end{equation}
where $R$ denotes a two-dimensional rotation matrix. The term $\theta_i - \theta_j$ represents the desired phase lag between the $i$-th and $j$-th nodes. The expression inside the summation measures the alignment between the actual position $\mathbf{x}_i$ and the position obtained by rotating $\mathbf{x}_j$ according to the desired phase lag. The negative sign makes the reward larger when phase discrepancy is smaller, which is in line with the meaning of the ``reward'' in the context of RL.

For the RL algorithm in the agent, we employ the Twin Delayed Deep Deterministic Policy Gradient (TD3) algorithm~\cite{Dankwa2019TwinDelayedDA} to train SIES. 

Specifically, the critic network consists of a 4-layer multilayer perceptron (MLP) with hidden layers of size 256 and ReLU activations. The actor network is a graph neural network (GNN) policy as defined in Equation~\eqref{eq:g_cpg_actor}. Both the actor and critic networks use a shared learning rate of \(4 \times 10^{-5}\) and are optimized using Adam. During training, a batch of 512 transitions is sampled from the replay buffer; an advanced mini-batching technique from PyTorch Geometric is also applied to make this batch suitable for evaluation in the actor networks. The discount factor is \(\gamma = 0.995\), the target update rate is \(\tau = 0.005\), policy updates are delayed every \(d = 2\) critic updates, exploration noise is drawn from \(\mathcal{N}(0, 0.1)\), and target policy smoothing noise is drawn from \(\mathcal{N}(0, 0.2)\) clipped to \([-0.5, 0.5]\). The training procedure for SIES in synchronization control is shown in Algorithm~\ref{algo:RL}.

\begin{algorithm}
	\textbf{Given:} 
	\begin{itemize}
		\item an off-policy RL algorithm $\mathbb{A}$;
		\item a coupled dynamical system $\mathbb{O}$ of $N$ nodes with its interaction graph $G$ represented by an adjacency matrix $A$;
		\item a GNN policy $\pi_{\Theta}(\mathbf{X},\mathbf{x}_{\text{dp}},G)=\mathbf{F}_{\Theta}(\mathbf{X},\mathbf{Q},G)$ based on Eq.~\eqref{eq:g_cpg_actor};
		\item a set of desired phase difference vectors $\mathcal{D}$.
	\end{itemize}
	\textbf{Initialize:} $\mathbb{A}$, $\mathbb{O}$, GNN policy parameters $\Theta$, replay buffer $\mathcal{R}$ and exploration noise $\epsilon$.

	\ForEach{episode}{
		Sample random initial states $[x_{1,1},x_{1,2},\dots,x_{N,1},x_{N,2}]$ for $\mathbb{O}$, and sample desired phase lags $\mathbf{x}_{\text{dp}}$ from $\mathcal{D}$ \;
		Get encoded desired phase lags $[q_{1,1},q_{1,2},\dots,q_{N,1},q_{N,2}]$ according to Equation \eqref{eq:dp_encoded}\;
		Construct the initial observation of the environment $\mathbf{o}$ according to Equation \eqref{eq:obs}\;
		\ForEach{step of data-collection phase}{
			Obtain system states $\mathbf{X}$ and desired phase lags $\mathbf{x}_{\text{dp}}$ from $\mathbf{o}$\;
			Obtain the interaction graph $G$ from $\mathbb{O}$\;
			Sample an action $\mathbf{a}$ using GNN policy: $\mathbf{a}\leftarrow \pi_{\Theta}(\mathbf{X},\mathbf{x}_{\text{dp}},G) + \epsilon$\;
			Execute the action in $\mathbb{O}$ according to Equation \eqref{eq:sies_base} and observe a new state $\mathbf{o}^{\text{next}}$ according to Equation \eqref{eq:obs_next}\;
			Obtain the reward $r$ according to Equation \eqref{eq:reward}\;
			Store the transition $(\mathbf{o},\mathbf{a}, r,\mathbf{o}^{\text{next}})$ in $\mathcal{R}$\;
			Set $\mathbf{o}\leftarrow\mathbf{o}^{\text{next}}$\;
		
		}
		\ForEach{step of learning phase}{
			Sample a minibatch $B$ from the replay buffer $\mathcal{R}$\;
			Perform one step optimization using $\mathbb{A}$ and minibatch $B$\;
		
	}
	}
	\caption{Training SIES for synchronization control with off-policy reinforcement learning}
	\label{algo:RL}
\end{algorithm}

\section{Training of SIES for graph representation learning} \label{app:sies_gnn}
\subsection{Model Architecture and Forward Pass}

In graph representation learning tasks, the SIES-GNN model adopts an encoder--dynamics--decoder architecture. The encoder is a linear transformation followed by ReLU activation and dropout. It takes raw node feature vectors as input and produces the initial hidden state $\mathbf{Y}_0 \in \mathbb{R}^{N \times F}$, where $N$ is the number of nodes and $F$ is the hidden dimension. SIES then computes adaptive coupling forces between nodes. It employs multi-head attention with distinct linear projections for source and target nodes, allowing the model to generate both positive (attractive) and negative (repulsive) interactions. The resulting coupling signal is projected and injected into the dynamical system. The node representations are evolved for a fixed number of layers using a symplectic Euler integrator (Eq.~\eqref{eq:symp_euler_y} and~\eqref{eq:symp_euler_x}). Finally, a decoder maps the final evolved state to raw class logits. No softmax or sigmoid is applied inside the model; the logits are fed directly to the loss function. This architecture enables the GNN model to capture both attractive and repulsive interactions, which is especially advantageous for node classification on heterophilous graphs.

\subsection{Loss Function}

The supervised loss is computed only on the training nodes. Let $\mathbf{Z} \in \mathbb{R}^{N \times C}$ denote the matrix of raw logits produced by the decoder, where $N$ is the number of nodes and $C$ is the number of classes. Let $\mathcal{T}$ be the index set of training nodes.

For multi-class node classification tasks, we employ the standard cross-entropy loss, defined as
\begin{equation}
\mathcal{L}_{\text{CE}} = -\frac{1}{|\mathcal{T}|}\sum_{i \in \mathcal{T}} \log \left( \frac{\exp(Z_{i, y_i})}{\sum_{k=1}^{C} \exp(Z_{i,k})} \right),
\end{equation}
where $y_i \in \{0,1,\dots,C-1\}$ is the ground-truth class label of node $i$. This loss encourages the model to assign high probability to the correct class while penalizing incorrect ones.

For the binary classification setting (the Questions and Minesweeper datesets), the integer labels are converted to one-hot vectors $\tilde{\mathbf{y}}_i \in \{0,1\}^2$. We then use the binary cross-entropy loss
\begin{equation}
\mathcal{L}_{\text{BCE}} = -\frac{1}{|\mathcal{T}|}\sum_{i \in \mathcal{T}} \sum_{c=1}^{2} \Bigl[ \tilde{y}_{i,c} \log \sigma(Z_{i,c}) + (1-\tilde{y}_{i,c}) \log\bigl(1 - \sigma(Z_{i,c})\bigr) \Bigr],
\end{equation}
where $\sigma(\cdot)$ is the sigmoid function. This formulation applies an independent sigmoid to each logit and measures the binary classification error for both classes.

The two loss functions above are the standard objectives used for the respective classification settings and are computed exclusively on the training nodes.

\section{Datasets}
In the graph representation-learning evaluation of SIES, we use six heterophilous graph datasets introduced in~\cite{Platonov2023ACL}. The statistics and structural characteristics of the datasets are provided in SI Table~\ref{tab:dataset_stats}.

\begin{table}[htbp]
\centering
\caption{Statistics of the heterophilous graph datasets.}
\label{tab:dataset_stats}
\begin{tabular}{lrrrrr}
\toprule
\textbf{Dataset} & \textbf{\# Nodes} & \textbf{\# Edges} & \textbf{\# Features} & \textbf{\# Classes} & \textbf{Homophily Ratio} \\
\midrule
Roman-empire   & 22{,}662 & 32{,}927 & 300   & 18 & 0.05 \\
Amazon-ratings & 24{,}492 & 93{,}050 & 300   & 5  & 0.38 \\
Minesweeper    & 10{,}000 & 39{,}402 & 7     & 2  & 0.68 \\
Questions      & 48{,}921 & 153{,}540& 301   & 2  & 0.84 \\
Squirrel       & 2{,}223  & 46{,}998 & 2{,}089 & 5  & 0.20 \\
Chameleon      & 890      & 8{,}854  & 2{,}325 & 5  & 0.23 \\
\bottomrule
\end{tabular}
\end{table}

\section{Evaluation Metrics} \label{app:metrics}

\subsection{Evaluation Metrics for Synchronization Control} \label{app:eval_cem}
We introduce the actual phase relationship vector $\mathbf{\phi} = [\phi_i]^N$ in the CDS system with $N$ oscillators, where $\phi_1 = 0$ and $\phi_i \in [0, 2\pi)$ represents the actual phase difference of the $i$-th oscillator relative to the first, derived from the output waveform of the CDS system.

\subsection{Target-aligned order parameter and phase RMSE}
The main-text generalization and sparsification figures quantify agreement with a prescribed desired phase pattern, rather than ordinary in-phase synchrony. We extract the instantaneous phase \(\psi_i(s)\) of each waveform through its analytic signal and define the target-corrected phase error
\begin{equation}
    \delta_i^{(\sigma)}(s) =
    \operatorname{wrap}_{[-\pi,\pi)}
    \left(\sigma\psi_i(s)-\theta_i\right), \qquad \sigma\in\{+1,-1\},
\end{equation}
where \(\theta_i\) is the target phase of node \(i\). The orientation \(\sigma\) accounts for the two propagation directions of an oscillatory wave and is selected as the value that maximizes the terminal-window mean order parameter. The target-aligned order parameter is
\begin{equation}
    R_{\mathrm{target}} =
    \frac{1}{L}\sum_{s=S-L+1}^{S}
    \left|
    \frac{1}{N}\sum_{i=1}^{N}
    \exp\left(\mathrm{i}\delta_i^{(\sigma)}(s)\right)
    \right|,
    \label{eq:R_target}
\end{equation}
where \(R_{\mathrm{target}}=1\) indicates exact agreement with the target phase pattern up to a common phase rotation.

To report residual angular error in degrees, we remove the common instantaneous offset
\(\bar{\delta}(s)=\arg\left[N^{-1}\sum_i\exp(\mathrm{i}\delta_i^{(\sigma)}(s))\right]\)
and calculate
\begin{equation}
    \operatorname{Phase\ RMSE} =
    \frac{180}{\pi L}
    \sum_{s=S-L+1}^{S}
    \sqrt{
    \frac{1}{N}\sum_{i=1}^{N}
    \operatorname{wrap}_{[-\pi,\pi)}^2
    \left(\delta_i^{(\sigma)}(s)-\bar{\delta}(s)\right)
    }.
    \label{eq:phase_rmse}
\end{equation}
Low phase RMSE therefore indicates that individual phases match the requested pattern after accounting for arbitrary global rotation.

\subsection{Phase distance}
We employ the phase distance metric to quantify the separation between an initial phase relationship vector \(\mathbf{\phi}\) and the desired phase relationship vector $\mathbf{x}_{\text{dp}} = [\theta_1,\theta_2,\cdots,\theta_N]$ with $\theta_1=0$. This metric is crucial because phases are periodic, making standard Euclidean distance unsuitable; for example, the separation between angles near $0$ and $2\pi$ is small, and direct computation may erroneously interpret small differences across the modular boundary as large ones.

To account for this periodicity, the metric uses a unit circle embedding approach, mapping each phase to two-dimensional Cartesian coordinates. Specifically, the initial phases are transformed into a vector \(\mathbf{v} = [\cos \phi_1, \sin \phi_1, \dots, \cos \phi_N, \sin \phi_N] \in \mathbb{R}^{2N}\), while the desired phases form \(\mathbf{w} = [\cos \theta_1, \sin \theta_1, \dots, \cos \theta_N, \sin \theta_N] \in \mathbb{R}^{2N}\). The phase distance $d_\text{phase}$ is computed as the Euclidean norm in this embedded space:
\begin{equation}
    d_\text{phase} = \|\mathbf{v} - \mathbf{w}\|_2 = \sqrt{\sum_{i=1}^N \left[ (\cos \phi_i - \cos \theta_i)^2 + (\sin \phi_i - \sin \theta_i)^2 \right]} = \sqrt{4 \sum_{i=1}^N \sin^2 \left( \frac{\phi_i - \theta_i}{2} \right)},
    \label{eq:PD}
\end{equation}
where the inherent periodicity of the trigonometric functions ensures that the periodic chordal separation on the unit circle is captured without explicit modular adjustment.

\subsection{Evaluation Metrics for Graph Representation Learning} \label{app:eval_gnn}
In this research, we focus on the node classification ability of SIES for graph representation learning. For a graph dataset with more than two classes, we use the conventional node classification accuracy defined as:

\begin{equation}
    \text{Accuracy} = \frac{1}{|\mathcal{V}_{\rm test}|} \sum_{v \in \mathcal{V}_{\rm test}} \mathbb{I}(\hat{y}_v = y_v)
    \label{eq:accuracy}
\end{equation}
where \(\mathbb{I}(\cdot)\) is the indicator function, \(y_v\) and \(\hat{y}_v\) are the ground-truth and predicted labels of node \(v\), respectively, and \(\mathcal{V}_{\rm test}\) denotes the set of test nodes.

For graph datasets with two classes, we use ROC-AUC~\cite{Fawcett2006AnIT} to deal with class imbalance and to provide a threshold-independent evaluation of the model's discriminative ability. 
ROC-AUC is formally defined as the area under the Receiver Operating Characteristic (ROC) curve, which plots the true positive rate (TPR) against the false positive rate (FPR) at various threshold settings:

\begin{equation}
    \text{ROC-AUC} = \int_{0}^{1} \text{TPR}(t) \, d\text{FPR}(t).
    \label{eq:roc_auc}
\end{equation}
Equivalently, it can be expressed using the Wilcoxon-Mann-Whitney statistic as:

\begin{equation}
    \text{ROC-AUC} = \frac{1}{n_{+} n_{-}} \sum_{i \in \mathcal{P}} \sum_{j \in \mathcal{N}} \mathbb{I}(s_i > s_j)
\end{equation}
where \(n_{+}\) and \(n_{-}\) are the numbers of positive and negative samples, \(\mathcal{P}\) and \(\mathcal{N}\) are the sets of positive and negative instances, and \(s_i, s_j\) are the model's predicted scores (probabilities) for the corresponding samples.

\section{Comparative Models}
\subsection{Comparative Models for Synchronization Control}\label{sec:comp_sies_cem}

\begin{itemize}

\item \textbf{Fully coupled oscillators (FC)~\cite{righetti2006design,Righetti2008PatternGW}}:
For this model, the dynamics of the state $\mathbf{x}_i$ of each node are governed by the following differential equation:
\begin{equation}
    \dot{\mathbf{x}}_{i} = f(\mathbf{x}_i) + \sum_{j} c_{i,j} \mathbf{x}_j,
    \label{eq:fc}
\end{equation}
where $f(\cdot)$ is defined according to SI Eq.~\eqref{eq:intrinsic}, and $c_{i,j}\in\mathbb{R}$ is an element of the coupling matrix $C \in \mathbb{R}^{4 \times 4}$ that incorporates the desired phase lags. The coupling matrices associated with the synchronous models used in this research are as follows:
\begin{equation}
    C_{\text{trot}} = \begin{bmatrix} 0 & -1 & -1 & 1 \\
                                     -1 & 0 & 1 & 1 \\
                                     -1 & 1 & 0 & -1\\
                                     1 & -1 & -1 & 0
                    \end{bmatrix},\quad
                    C_{\text{walk}} = \begin{bmatrix} 0 & -1 & 1 & -1\\
                                                     -1 & 0 & -1& 1 \\
                                                     -1 & 1 & 0 & -1 \\
                                                     1 & -1 & -1 & 0
                                        \end{bmatrix},\quad
                    C_{\text{bound}} = \begin{bmatrix} 0 & -1 & 1& -1 \\
                                                       -1& 0& -1&1\\
                                                        1&-1&0&-1\\
                                                        -1&1&-1&0
                                        \end{bmatrix}.
\end{equation} \\
\item \textbf{Nearest-neighbor coupled oscillators (Salamander)~\cite{ijspeert2007swimming}}:
This model is based on a Kuramoto-like phase model incorporating amplitude dynamics. The dynamics of the phase $\phi_i$ and amplitude $r_i$ of each node are governed by the following differential equations:
\begin{align}
    \dot{\phi}_i &= \omega_0 + \sum_{j} r_j w_{i,j} \sin(\phi_i - \phi_j - \theta_{i,j}), \\
    \ddot{r}_i &= \gamma \left( \frac{\gamma}{4} (\lambda - r_i) - \dot{r}_i \right), \\
    \mathbf{x}_{i,1} &= r_i \cos \phi_i,
    \label{eq:salamander}
\end{align}
where $\omega_0$ denotes the oscillation frequency, $w_{i,j} \in \mathbb{R}^+$ is a constant representing the coupling strength between the $i$-th and $j$-th nodes (determined by nearest-neighbor coupling), $\gamma \in \mathbb{R}^+$ is a constant that controls the convergence speed to the limit cycle, $\lambda$ controls the radius of the limit cycle, and $\theta_{i,j}=\theta_{i}-\theta_{j}$ denotes the desired phase lag between the $i$-th and $j$-th nodes. The desired phase lags $\mathbf{x}_{\text{dp}}$ associated with the gait patterns used in this research are as follows:
\begin{equation}
    \mathbf{x}_{\text{dp,trot}} = [0,\pi,\pi,0], \quad \mathbf{x}_{\text{dp,walk}} = \left[0,\pi,\frac{3\pi}{2},\frac{\pi}{2}\right],\quad \mathbf{x}_{\text{dp,bound}} = \left[0,\pi,0,\pi\right]
    \label{eq:trot_walk}
\end{equation} \\

\item \textbf{Diffusively coupled oscillators (Diffusive)~\cite{Yu2016GaitGW}}:

The network topology of this model is shown in Fig.~\ref{fig:network_topo}d. The dynamics of the state $\mathbf{x}_i$ of each node are governed by the following differential equation:
\begin{equation}
    \dot{\mathbf{x}}_i = f(\mathbf{x}_i) + w (R(\varphi_i) \mathbf{x}_{i+1} - \mathbf{x}_i),
    \label{eq:diffusive}
\end{equation}
where $f(\cdot)$ is defined according to SI Eq.~\eqref{eq:intrinsic}, $w \in \mathbb{R}^+$ denotes the coupling strength, $R \in \text{SO}(2)$ is a 2-dimensional clockwise rotation matrix, and $\varphi_i=\theta_{i}-\theta_{i+1}$ represents the desired phase lag between the $i$-th and $(i+1)$-th nodes. The desired phase lags $\mathbf{x}_{\text{dp}}$ used are defined as in SI Eq.~\eqref{eq:trot_walk}. \\

\item \textbf{Parameter settings}:
For the Diffusive model, the coupling strength $w$ is set to 1. Similar to the aforementioned three models, the Salamander model uses $\lambda=1$ and $\omega_0=2\pi$ to control the radius and oscillation frequency of the limit cycle. The parameter $\gamma$ is set to 20.

\end{itemize}

\subsection{Comparative Models for Graph Representation Learning}

To comprehensively evaluate SIES for graph representation learning, we select two categories of representative comparative models: enhanced classical GNN baselines and GNN models based on coupled dynamical systems (Coupled Dynamical Systems, CDS). The following subsections introduce the specific implementations of these models and discuss their comparative significance with respect to SIES.

\subsubsection{Enhanced GNN baselines}

Recent works have demonstrated that classical Graph Neural Networks (GNNs) can serve as highly strong baselines when not only their hyperparameters are carefully tuned, but also when equipped with a suite of widely adopted engineering tricks. These include Layer Normalization (LayerNorm) or Batch Normalization (BatchNorm), residual connections, dropout applied between GNN layers, regularization techniques, and increased network depth. Following the systematic reassessment in \cite{Luo2024ClassicGA}, which shows that well-optimized classic GNNs can achieve or even surpass the performance of popular graph Transformers on node classification tasks, this work adopts the following enhanced baselines (denoted by $*$):

\begin{itemize}
    \item \textbf{GCN*}: The Graph Convolutional Network from \cite{Kipf2016SemiSupervisedCW}, enhanced with optimized hyperparameters and modern training techniques (LayerNorm/BatchNorm, residuals, and dropout) serving as a strong baseline.
    \item \textbf{GAT*}: The Graph Attention Network from \cite{velivckovic2017graph}, which leverages attention mechanisms to capture inter-node relationships and is further strengthened with the same engineering enhancements.
    \item \textbf{GraphSAGE*}: The GraphSAGE model from \cite{Hamilton2017InductiveRL}, employing sampling and aggregation strategies suitable for large-scale graph data, also augmented with LayerNorm/BatchNorm, residual connections, and interlayer dropout.
\end{itemize}

These enhanced baselines are primarily used to verify whether SIES can surpass the performance limits of traditional GNNs while maintaining architectural simplicity.

\subsubsection{CDS-based GNN models}

The second category consists of GNN architectures inspired by coupled oscillator dynamics, which effectively alleviate the over-smoothing problem commonly encountered in deep GNNs. We compare against the following two representative models:

\begin{itemize}
    \item \textbf{GraphCON}~\cite{Rusch2022GraphCoupledON}: Graph-Coupled Oscillator Networks formulate graph dynamics as a second-order ordinary differential equation system of damped harmonic oscillators coupled through the graph adjacency matrix. We evaluate GraphCON-GAT as the controlled non-negative-attention counterpart of SIES in graph representation learning: it retains the matched oscillator-GNN formulation but computes node attention with the standard softmax map of GAT~\cite{velivckovic2017graph}, restricting coefficients to non-negative values. SIES instead uses signed degree-normalized coefficients (Eq.~\eqref{eq:gnn_sign_control}). This comparison therefore assesses the SIES operator against a literature-grounded non-negative attention mechanism, rather than against a newly constructed unsigned variant.
    
    \item \textbf{KuramotoGNN} \cite{Nguyen2023FromCO}: A Kuramoto model-based continuous-depth GNN that addresses over-smoothing through frequency-synchronization dynamics rather than phase alignment. We include it as a CDS-inspired comparator for node classification.
\end{itemize}

\section{Key Parameters and Dimensions of Physical and Simulated Robots} \label{app:robots}
The key parameters of both the physical robot (SI Fig. \ref{fig:robot_models}a) and the simulation platform (SI Fig. \ref{fig:robot_models}b) are summarized in SI Table~\ref{tab:robot_configs}. The key dimensions of the legs of both robots are shown in SI Figs. \ref{fig:robot_models}c and \ref{fig:robot_models}d.  In simulating the centipede robot, we employ spring-damped joints to model the elastic connections between segments. The damping coefficient for these joints is fixed at 0, while the stiffness coefficient corresponds to the Body Stiffness value listed in SI Table \ref{tab:robot_configs}. For shorter centipede models, we select smaller stiffness coefficients within this range to enhance turning flexibility. For longer models, we choose larger stiffness coefficients to improve stability.

\newcolumntype{E}{>{\centering\arraybackslash} m{4cm} }

\begin{table}[htbp]
    \centering
    \caption{Key parameters of robots}
    \label{tab:robot_configs}
    \renewcommand{\arraystretch}{1.5}
    \begin{tabular}{EEEE}
        \hline
        \textbf{Physical robot} & & \textbf{Simulated robot} &  \\ \hline
        Robot mass & 14 kg & Body Stiffness & $1\times 10^{-4} \sim 5\times 10^{-4}$ \\
        Body mass  & 5.6 kg & Segment mass & 0.05 kg \\
        Leg mass   & 1.4 kg & Leg mass & 0.23 kg \\
        Ipsilateral foot clearance & 220 mm &  Ipsilateral foot clearance & 50 mm\\
        Contralateral foot clearance & 200 mm & Contralateral foot clearance & 60 mm \\
        Body Length & 497 mm & Segment Length & 55 mm \\
        \hline

    \end{tabular}
\end{table}

\section{Ablation Studies of SIES}\label{sec:mechanistic_cem}

A main question is which component of the SIES architecture enables access to phase-lagged collective modes. Improved generalization and convergence could arise from several sources: the signed coefficients in the attention-based coupling term (Methods, Eqs.~\eqref{eq:sies_raw_att}--\eqref{eq:sies_att_coeff}), source-target task conditioning (Methods, Eq.~\eqref{eq:cem_source_target}), or feature-space message construction (Methods, Eqs.~\eqref{eq:sies_coupling} and~\eqref{eq:sies_att_features}). We therefore evaluate the full synchronization-control instantiation of SIES together with matched variants on an 8-node fully connected oscillator system across checkpoints saved from 50,000 to 6,000,000 training steps. We quantify performance by the target-aligned order parameter \(R_{\mathrm{target}}\) and phase RMSE on four training and four held-out diagnostic phase modes. The target vectors, checkpoint protocol, and corresponding controls are defined in Methods, Section~\ref{sec:method-ablation} (Eqs.~\eqref{eq:cem_softmax_control}, \eqref{eq:cem_tied_control} and~\eqref{eq:s-cpg-att}).
\newline

\noindent\textbf{Signed interactions enable phase-lagged mode access.} Replacing the signed, degree-normalized coefficients of SIES (Methods, Eqs.~\eqref{eq:sies_raw_att}--\eqref{eq:sies_att_coeff}) with standard non-negative softmax attention (Methods, Eq.~\eqref{eq:cem_softmax_control}) prevents the tested CDS from forming the diagnostic phase-lagged modes. On the held-out targets, the softmax variant reaches only \(R_{\mathrm{target}}\approx0.050\) with phase RMSE \(\approx 98.25^\circ\) at the final checkpoint (Extended Data Fig.~\ref{fig:ablation_checkpoint}c). Even its best checkpoint yields phase metrics on both training and held-out sets that remain far below those of the full model (Extended Data Fig.~\ref{fig:ablation_checkpoint}a--b). Thus, standard non-negative attention weighting does not explain the observed synchronization-control repertoire; the attention-based coupling term must represent opposing as well as aligning influences.
\newline

\noindent\textbf{Source-target conditioning supports phase-propagating organization.} We next retain signed attention but tie the source and target task-conditioning projections, \(\Theta_{\mathrm{dp}}^{\mathrm{s},k}=\Theta_{\mathrm{dp}}^{\mathrm{t},k}\) (Methods, Eq.~\eqref{eq:cem_tied_control}), instead of using the separate source-target projections of full SIES (Methods, Eq.~\eqref{eq:cem_source_target}). This source-target-tied variant fits the training targets yet fails on held-out traveling-wave-like targets that require asymmetric signed coupling: it reaches a final test performance of \(R_{\mathrm{target}}\approx 0.80\) and phase RMSE \(\approx 31.5^\circ\), with the two most phase-propagating test modes each attaining only \(R_{\mathrm{target}}\approx 0.61\) and RMSE \(\approx 57^\circ\) (Extended Data Fig.~\ref{fig:ablation_checkpoint}c). On the fully connected graphs considered here, tied projections force task-conditioned attention coefficients to remain symmetric, whereas the traveling-wave construction in Theorem~\ref{theo:1} constructs an asymmetric signed coupling matrix. For more irregular target modes the attention coefficient matrix must also be highly asymmetric. Hence, signed coefficients are necessary but not sufficient; separate source-target conditioning is required to access the tested phase-propagating organization.
\newline

\noindent\textbf{Feature-space aggregation improves training trajectory and final fidelity.} A third control preserves signed attention with separate source-target conditioning but aggregates messages in the original oscillator state space rather than in the projected feature space used by full SIES (Methods, Eqs.~\eqref{eq:sies_att_features}, \eqref{eq:sies_coupling} and~\eqref{eq:s-cpg-att}). This variant retains broad mode access, yet it learns more slowly and converges to lower accuracy than full SIES. Using the descriptive checkpoint criterion \(R_{\mathrm{target}}\geq 0.95\) and phase RMSE \(\leq 10^\circ\) on the held-out targets, full SIES first satisfies both thresholds at approximately 1.70 million steps, whereas the state-space aggregation variant does so only at approximately 3.90 million steps (Extended Data Fig.~\ref{fig:ablation_checkpoint}a--b). At the final checkpoint, full SIES achieves \(R_{\mathrm{target}}\approx 1\) and phase RMSE \(\approx 3.5^\circ\), compared with \(R_{\mathrm{target}}\approx 0.99\) and phase RMSE \(\approx 6^\circ\) for state-space aggregation (Extended Data Fig.~\ref{fig:ablation_checkpoint}c).

Together, the staged controls separate an expressive mechanism—signed coupling with separate source-target conditioning—from an architectural mechanism—feature-space aggregation—that accelerates optimization and improves target-pattern fidelity once the mode becomes accessible. This architecture also connects SIES to the lower-level logic of swarms: an agent does not merely average neighbor influence; it modulates whether each neighbor should attract or repel it and whether that influence should flow from source to target or target to source. In SIES, signed and source-target-aware coupling gives this local decision process an explicit dynamical role inside the collective update loop.

\section{Motion Generation for Centipede Simulation and Hexapod Robot Experiment}
\label{app:motion_gen}

To generate coordinated motion (gait) for multi-legged robots, the system converts the output waveform of SIES into a progression signal. The progression $p_i$ is calculated as $p_i = \varphi_{\mathbf{x}_i} / 2\pi$, where $\varphi_{\mathbf{x}_i} \in [0, 2\pi)$ represents the polar angle of the state vector measured from the first coordinate axis. This computation normalizes the phase signal to the range [0, 1). Using the progression signal $p_i$ for each leg, the duty cycle $\mu \in (0, 1)$ (which separates stance and swing phases), and the turning parameter $\tau \in [0, 1]$, the system plans the foot-tip motion in Cartesian space for each leg.

\subsection{Centipede Simulation}

The motion generation system employs the progression signal of the $i$-th oscillator in SIES to control the motion of the $i$-th robot leg. To plan the foot-tip trajectory for the centipede robot, the system sets the mean duty cycle to $\mu_{\text{mean}} = 0.7$, adjusted by $\mu_{\text{div}} = 0.2 \times \tau$ to enable differential steering. The duty cycle for each leg $\mu_i$ depends on the side of the leg:

\begin{equation}
\mu_i =
\begin{cases}
\mu_{\text{mean}} + \mu_{\text{div}}, & \text{for leg } i \text{ on the right side}, \\
\mu_{\text{mean}} - \mu_{\text{div}}, & \text{for leg } i \text{ on the left side}.
\end{cases}
\end{equation}

Given the nominal foot-tip pose of the \( i \)-th leg as \( [x_{\text{nom}}, y_{\text{nom}}, z_{\text{nom}}] \), we plan the foot-tip motion parallel to the \( xOy \)-plane (see SI Fig.~\ref{fig:robot_models}b). During the stance phase (\( p_i < \mu_i \)), the trajectory involves linear backward propulsion along the y-axis with an amplitude \( A = 0.03 \):

\begin{equation}
\begin{aligned}
    x_{\text{des}} &= x_{\text{nom}}, \\
    y_{\text{des}} &= A - 2A \frac{p_i}{\mu_i} + y_{\text{nom}}, \\
    z_{\text{des}} &= z_{\text{nom}}.
\end{aligned}
\end{equation}

During the swing phase (\( p_i \geq \mu_i \)), the leg resets forward with a lift:
\begin{equation}
    \begin{aligned}
    x_{\text{des}} &= x_{\text{nom}} - H \sin\left( \pi \frac{p_i - \mu_i}{1 - \mu_i} \right), \\
    y_{\text{des}} &= -A + 2A \frac{p_i - \mu_i}{1 - \mu_i} + y_{\text{nom}}, \\
    z_{\text{des}} &= z_{\text{nom}},
    \end{aligned}
\end{equation}
where \( H = 0.03 \) denotes the lift height. We apply inverse kinematics to the desired foot-tip positions \( [x_{\text{des}}, y_{\text{des}}, z_{\text{des}}] \) to compute joint angles, which control the corresponding motors of the leg.

\subsection{Hexapod Robot Experiment}

Given the nominal foot-tip pose of the \( i \)-th leg as \( [x_{\text{nom}}, y_{\text{nom}}, z_{\text{nom}}] \), we plan the foot-tip motion parallel to the \( yOz \)-plane (see SI Fig.~\ref{fig:robot_models}a). During the stance phase (\( p_i < \mu_i \)), the trajectory involves linear backward propulsion along the y-axis with an amplitude \( A = 0.12 \):
\begin{equation}
    \begin{aligned}
    x_{\text{des}} &= x_{\text{nom}}, \\
    y_{\text{des}} &= A - 2A \frac{p_i}{\mu_i} + y_{\text{nom}}, \\
    z_{\text{des}} &= z_{\text{nom}}.
    \end{aligned}
\end{equation}
During the swing phase (\( p_i \geq \mu_i \)), the leg resets forward with a lift, and we use three control points to generate a B\'{e}zier curve \cite{bezier1972numerical} for the swing phase trajectory. The control points are:

\begin{equation}
\begin{aligned}
y^{(0)} &= [-A + y_{\text{nom}}, y_{\text{nom}}, A + y_{\text{nom}}], \\
z^{(0)} &= [z_{\text{nom}}, H + z_{\text{nom}}, z_{\text{nom}}],
\end{aligned}
\end{equation}
where \( H = 0.08 \) controls the lift height of the foot-tip. Based on the progression signal \( p_i \) of leg \( i \), the internal progression of the B\'{e}zier curve is \( t = \frac{p_i - \mu_i}{1 - \mu_i} \). The foot-tip position \( (y_{\text{des}}, z_{\text{des}}) \) during the swing phase is computed using the Bernstein polynomial basis \cite{Phillips2003}:

\begin{equation}
\begin{aligned}
y_{\text{des}} &= \sum_{i=0}^{2} \binom{2}{i} (1 - t)^{2 - i} t^{i} y^{(0)}_i, \\
z_{\text{des}} &= \sum_{i=0}^{2} \binom{2}{i} (1 - t)^{2 - i} t^{i} z^{(0)}_i,
\end{aligned}
\end{equation}
where \( \binom{n}{i} = \frac{n!}{i!(n - i)!} \) represents the binomial coefficient. The system sets \( x_{\text{des}} = x_{\text{nom}} \), as the trajectory lies in the \( yOz \)-plane. Using the desired foot-tip positions \( [x_{\text{des}}, y_{\text{des}}, z_{\text{des}}] \), we apply inverse kinematics to compute joint angles, which control the corresponding motors of the leg.

}

\newpage

\backmatter



\end{document}